\documentclass[twoside]{article}

\usepackage[accepted]{aistats2022}

% Recommended, but optional, packages for figures and better typesetting:
\usepackage[utf8]{inputenc} % allow utf-8 input
\usepackage[T1]{fontenc}    % use 8-bit T1 fonts
\usepackage{url}            % simple URL typesetting
\usepackage{booktabs}       % professional-quality tables
\usepackage{amsfonts}       % blackboard math symbols
\usepackage{nicefrac}       % compact symbols for 1/2, etc.
\usepackage{microtype}      % microtypography
\usepackage{amsmath}
\usepackage{amsthm}
\usepackage{amssymb}
\usepackage{graphicx}
\usepackage{caption}
\usepackage{subcaption}
\usepackage[ruled]{algorithm2e}
\usepackage{float} % to fix table and figure position
\usepackage{xr}
\usepackage{hyperref}
\usepackage[parfill]{parskip} % remove space before/after paragraph
\usepackage{bbm}
\usepackage[T1]{fontenc}
\usepackage{lmodern}
\usepackage[dvipsnames]{xcolor}

%% switch between draft and submission
\iffalse
\usepackage[show]{notes} 
\newcommand{\highlight}{\color{BrickRed}}
\fi
%\iffalse
\usepackage[hide]{notes} 
\newcommand{\highlight}{\color{black}}
%\fi
% If you set papersize explicitly, activate the following three lines:

\setlength{\pdfpageheight}{11in}
\setlength{\pdfpagewidth}{8.5in}

% If you use natbib package, activate the following three lines:
\usepackage[round]{natbib}

% If you use BibTeX in apalike style, activate the following line:
%\bibliographystyle{apalike}
\bibliographystyle{plainnat}

\newenvironment{proofof}[1]{\paragraph{Proof of #1:}}{\hfill$\square$}
\newenvironment{pf}{\paragraph{Proof}}{\hfill$\square$}
\newtheorem{theorem}{Theorem}
\newtheorem{lemma}{Lemma}
\newtheorem{proposition}{Proposition}
\newtheorem{corollary}{Corollary}
\newtheorem{definition}{Definition} % definition numbers are dependent on theorem numbers
\newtheorem{example}{Example} % same for example numbers

\newtheorem{remark}{Remark}
\newtheorem{assumption}{Assumption}
 % no space, limits underneath in displays
\DeclareMathOperator*{\argmax}{argmax} % no space, limits underneath in displays
% math

\newcommand{\reals}{\ensuremath{\mathbb{R}}\xspace}
\newcommand{\complex}{\ensuremath{\mathbb{C}}\xspace}
\newcommand{\ints}{\ensuremath{\mathbb{N}}\xspace}
\newcommand{\sphere}{\ensuremath{\mathbb{S}}\xspace}
\newcommand{\bigO}{\ensuremath{\mathcal{O}}\xspace}
\newcommand{\bigOmega}{\ensuremath{\Omega}\xspace}
\newcommand{\bigTheta}{\ensuremath{\Theta}\xspace}

\newcommand{\expectation}[1]{\ensuremath{\mathbb{E}\left[#1\right]}\xspace}

\newcommand{\prob}[1]{\ensuremath{\mathbb{P}\left[#1\right]}\xspace}

 % conditional independence

\newcommand{\vect}[1]{\ensuremath{\mathbf{#1}}\xspace}
\newcommand{\Matrix}[1]{\ensuremath{\mathbf{#1}}\xspace}
\newcommand{\eigval}[2]{\ensuremath{\mathbf{\lambda}_{#1}({#2})}\xspace}
\newcommand{\sigval}[2]{\ensuremath{\mathbf{\sigma}_{#1}({#2})}\xspace}
\newcommand{\eigvec}[2]{\ensuremath{\mathbf{u}_{#1}(#2)}\xspace}
\newcommand{\eigvalA}[1]{\ensuremath{\mathbf{\lambda}_{#1}}\xspace}
\newcommand{\sigvalA}[1]{\ensuremath{\mathbf{\sigma}_{#1}}\xspace}
\newcommand{\neigval}[1]{\ensuremath{\mathbf{\alpha}_{#1}}\xspace}
\newcommand{\eigvecA}[1]{\ensuremath{\mathbf{u}_{#1}}\xspace}
\newcommand{\hu}{\ensuremath{\hat{\vect{u}}}\xspace}

\newcommand{\norm}[1]{\ensuremath{\left\|{#1}\right\|}\xspace}

\newcommand{\abs}[1]{\ensuremath{\lvert{#1}\rvert}\xspace}
\newcommand{\range}[1]{\ensuremath{\text{range}(#1)}\xspace}

\newcommand{\sign}[1]{\ensuremath{\text{sign}\left({#1}\right)}\xspace}

\newcommand{\vecones}[1]{\ensuremath{\mathbf{1}_{#1}}\xspace}
\newcommand{\veczeros}[1]{\ensuremath{\mathbf{0}_{#1}}\xspace}

\newcommand{\diag}[1]{\ensuremath{\text{diag}({#1})}\xspace}
\newcommand{\angl}[2]{\ensuremath{\theta({#1},{#2})}\xspace}

\newcommand{\RC}[1]{\ensuremath{(\text{#1})}\xspace}

% distribution
\newcommand{\bernoulli}[1]{\ensuremath{\text{Bernoulli}(#1)}\xspace}
\newcommand{\gaussian}[2]{\ensuremath{\mathcal{N}(#1, #2)}\xspace}
\newcommand{\uniform}[1]{\ensuremath{\text{Uniform}(#1)}\xspace}
\newcommand{\grassmannian}[2]{\ensuremath{\mathcal{G}_{#1, #2}}\xspace}

% algorithms
\newcommand{\multirsvd}{\ensuremath{\textup{\sf RSVD}}\xspace}
\newcommand{\onersvd}{\ensuremath{\textup{\sf SinglePass-RSVD}}\xspace}
\newcommand{\rsum}{\ensuremath{\textup{\sf RandSum}}\xspace}
\newcommand{\onersum}{\ensuremath{\textup{\sf SinglePass-RandSum}}\xspace}
\newcommand{\blockkrylov}{\ensuremath{\textup{\sf BlockKrylov}}\xspace}
\newcommand{\lancoz}{\ensuremath{\textup{\sf Lanczos\-Method}}\xspace}

% rounding

\newcommand{\randround}{\ensuremath{\textup{Random\-EigenSign}}\xspace}

% datasets

\newcommand{\wikiv}{\ensuremath{\textsf{Wiki\-Vot}}\xspace}
\newcommand{\referendum}{\ensuremath{\textsf{Referendum}}\xspace}
\newcommand{\slashdot}{\ensuremath{\textsf{Slash\-dot}}\xspace}
\newcommand{\wikic}{\ensuremath{\textsf{Wiki\-Con}}\xspace}

\newcommand{\ptop}{\ensuremath{\textsf{Gnutella\-31}}\xspace}

\newcommand{\youtube}{\ensuremath{\textsf{You\-Tube}}\xspace}
\newcommand{\roadca}{\ensuremath{\textsf{Road\-CA}}\xspace}
\newcommand{\fbart}{\ensuremath{\textsf{FB\-Artist}}\xspace}
\newcommand{\berkstan}{\ensuremath{\textsf{BerkStan}}\xspace}
\newcommand{\google}{\ensuremath{\textsf{Google}}\xspace}
\newcommand{\notredame}{\ensuremath{\textsf{Notre\-Dame}}\xspace}
\newcommand{\stanford}{\ensuremath{\textsf{Stanford}}\xspace}

% editing

\newcommand{\para}[1]{\noindent{{\bf{#1}}}}

%%%%% Repeat Theorem/Lemma/Corollary %%%%%
\makeatletter
\newtheorem*{rep@theorem}{\rep@title}
\newcommand{\newreptheorem}[2]{%
	\newenvironment{rep#1}[1]{%
		\def\rep@title{#2 \ref{##1}}%
		\begin{rep@theorem}}%
		{\end{rep@theorem}}}
\makeatother

\newreptheorem{theorem}{Theorem}
\newreptheorem{lemma}{Lemma}
\newreptheorem{proposition}{Proposition}
\newreptheorem{corollary}{Corollary}
\newreptheorem{assumption}{Assumption}

\begin{document}
\twocolumn[
\aistatstitle{Improved analysis of randomized SVD for top-eigenvector approximation}

\runningauthor{Ruo-Chun Tzeng, Po-An Wang, Florian Adriaens, Aristides Gionis, Chi-Jen Lu}
\aistatsauthor{ 
		Ruo-Chun Tzeng$^1$ \And 
		Po-An Wang$^2$  \And 
		Florian Adriaens$^1$  \AND 
		Aristides Gionis$^1$ \And  
		Chi-Jen Lu$^3$ 
}
\aistatsaddress{ 
	$^1$Division of Theoretical Computer Science\\
	$^2$Division of Decision and Control Systems\\KTH Royal Institute of Technology, Sweden
	\And 
	$^3$Institute of Information Science\\Academia Sinica, Taiwan
}
]

\begin{abstract}
Computing the top eigenvectors of a matrix is a problem of fundamental interest to various fields.
While the majority of the literature has focused on analyzing the reconstruction error
of low-rank matrices associated with the retrieved eigenvectors, 
in many applications one is interested in finding one vector with high Rayleigh quotient.
In this paper we study the problem of approximating the top-eigenvector.
Given a symmetric matrix $\Matrix{A}$ with largest eigenvalue $\eigvalA{1}$, 
our goal is to find a vector \hu that approximates the leading eigenvector $\eigvecA{1}$ 
with high accuracy, as measured by the ratio
$R(\hu)=\eigvalA{1}^{-1}{\hu^T\Matrix{A}\hu}/{\hu^T\hu}$.
%$R(\hu)={\hu^T\Matrix{A}\hu}/{\eigvalA{1} \hu^T\hu}$.
% $R(\hu)={\hu^T\Matrix{A}\hu}/{\eigvecA{1}^T\Matrix{A}\eigvecA{1}}$.
We present a novel analysis of the randomized SVD algorithm of \citet{halko2011finding} and 
derive tight bounds in many cases of interest.
Notably, this is the first work that provides non-trivial bounds for approximating the ratio $R(\hu)$ 
using randomized SVD with any number of iterations.
Our theoretical analysis is complemented with a thorough experimental study that confirms the efficiency and accuracy of the method.

\iffalse
Matrix approximations have been studied extensively in the machine-learning literature. 
While the majority of methods have focused on deriving bounds on the recostruction error
achieved by a low-rank approximation matrix, 
there are several applications where we need to reconstruct with high accuracy the leading eigenvector of a matrix.
In this paper, we study the problem of leading-eigenvector approximation.
Given an $n\times n$ matrix $\Matrix{A}$ with leading eigenvector $\eigvecA{1}$, 
we aim to find a vector \hu so as to maximize the ratio $R(\hu)={\hu^T\Matrix{A}\hu}/{\eigvalA{1}}$.
We present a novel analysis of the randomized SVD algorithm of \citet{halko2011finding},
and we derive tight bounds in many cases of interest. 
For example, for a positive semi-definite matrix, 
the method finds a vector \hu, for which $R(\hu)=1-\epsilon$ with high probability, 
while making $\bigO(\log(n)/\epsilon)$ scans over the matrix and using $\bigO(dn)$ space.
We also derive bounds for indefinite matrices and 
for the realistic case that the eigenvalues of the matrix follow a heavy-tail distribution.
Our theoretical analysis is complemented with a thorough experimental study that 
confirms the efficiency and accuracy of the method.
\fi
\end{abstract}

\section{\uppercase{Introduction}}\label{sec:intro}

Spectral methods, which typically rely on computing the leading eigenvectors of an appropriately-designed matrix, have been shown to provide high-quality solutions to a variety of problems in the fields of data analysis, optimization, clustering, and learning~\citep{kannan2009spectral}.
From a computational perspective, randomized approaches for spectral methods, often give good estimates of leading eigenvectors and low-rank structures, opening up the possibility of dealing with truly large datasets \citep{halko2011algorithm}.

In this paper, we study the problem of approximating the leading eigenvector of a matrix 
while using a small amount of memory and making a limited number of passes over the input matrix.
More concretely, 
given a symmetric matrix $\Matrix{A}$ with largest eigenvalue~$\eigvalA{1}$, 
our goal is to find a vector $\hu$ that maximizes the ratio
\begin{equation}
\label{eq:obj-multiplicative-gap}
R(\hu)=\eigvalA{1}^{-1}\frac{\hu^T\Matrix{A}\hu}{\hu^T\hu}.
\end{equation}
Note that since $\eigvalA{1}$ is fixed given \Matrix{A}, 
it can be omitted from the definition of~$R$; 
it is used only for convenience, to ensure that $R\le 1$.
Often, in different applications, 
in addition to having to select which matrix $\Matrix{A}$ to use,
it is also required that $\hu \in \mathcal{T}\subseteq\reals^n$, 
where $\mathcal{T}$ is typically a discrete subspace of~$\reals^n$.
A common strategy in this case, 
is to first compute an approximation of the leading eigenvector in $\reals^n$
and then ``round'' the solution in $\mathcal{T}$.
Below we outline some prominent examples of this scheme.

{\bf (1)}
The most direct example is PCA, 
where $\Matrix{A}$ is the covariance matrix~\citep{jolliffe1986principal};
in this case $\mathcal{T}=\reals^n$,
and no rounding is required;
{\bf (2)}
In the community-detection problem 
we can partition a network into two communities
(and then recursively find more communities) 
by maximizing modularity~\citep{newman2006modularity}, 
which can be mapped to our setting by
taking~$\Matrix{A}$ to be the modularity matrix 
and~$\mathcal{T}=\{\pm1\}^n$;
{\bf (3)}
The problem of finding $k$ conflicting groups in signed networks
can be mapped to our setting by
taking~$\Matrix{A}$ to be the adjacency matrix of the signed network
and $\mathcal{T}=\{0,-1,\ell\}^n$, 
for $\ell\in[k-1]$~\citep{bonchi2019discovering,tzeng2020discovering};
{\bf (4)}
For the fair densest subgraph,
\citet{anagnostopoulos2020spectral} 
consider $\mathcal{T}=\{0,1\}^n$ and 
obtain $\Matrix{A}$ after projecting the adjacency matrix 
onto the subspace orthogonal to a given fairness labeling $z\in\{\pm1\}^n$;
{\bf (5)}
In few other applications, 
a solution to our problem 
is used as an intermediate result in the proposed method~\citep{abdullah2014spectral, hopkins2016fast,allen2016lazysvd,silva2018spectral}.

\iffalse
\begin{itemize}
	\item Community detection \citep{newman2006modularity} where $\mathcal{T}=\{\pm1\}^n$ and $\Matrix{A}$ is the modularity matrix
	\item $k$-conflicting group detection \citep{bonchi2019discovering,tzeng2020discovering} where $\mathcal{T}=\{0,-1,\ell\}^n$ for $\ell\in\{1,\cdots,k-1\}$ and $\Matrix{A}$ is the signed adjacency matrix
	\item Fair densest subgraph \citep{anagnostopoulos2020spectral} where $\mathcal{T}=\{0,1\}^n$ and $\Matrix{A}$ results after projecting the adjacency matrix onto the subspace orthogonal to a given fairness labeling $z\in\{\pm1\}^n$.
	%Fair densest subgraph \citep{anagnostopoulos2020spectral} where $\mathcal{T}=\{0,1\}^n\backslash\{\veczeros{}\}$ and $\Matrix{A}=P_z^{\perp}AP_z^{\perp}$ for adjacency matrix $A$ and $P_z^{\perp}=I-\frac{zz^T}{z^Tz}$ with given fairness labeling $z\in\{\pm1\}^n$
	\item PCA \citep{jolliffe1986principal} where $\mathcal{T}=\reals^n$ and $\Matrix{A}$ is the covariance matrix
	\item Phase retrieval \citep{dudeja2020analysis} where $\mathcal{T}=\complex^n$ and $\Matrix{A}=A^T\diag{y}A$ for a given observation $y\in\reals^m$ and a measurement matrix $A\in\complex^{n\times m}$
	\item Intermediate procedures for other algorithms \citep{abdullah2014spectral, hopkins2016fast,allen2016lazysvd,silva2018spectral}
\end{itemize}
%\note[Ruochun]{Aris, can you help compress the above examples?}
\fi

Despite numerous pass-efficient algorithms proposed in the literature for computing top eigenvectors, 
prior attempts to analyze $R(\hu)$ 
% \eqref{eq:obj-multiplicative-gap} 
have strong limitations when applied in practice.
The main shortcoming is that most works provide \emph{additive bounds}
and require $\bigOmega(\ln n)$ passes to be meaningful \citep{simchowitz2018tight}, 
whereas a smaller number of passes (constant or even a single pass)
is critical in practical settings. 
It is unclear in the state-of-the-art
whether $\bigOmega(\ln n)$ passes is necessary for previous methods, 
or whether such a bound is an artifact of the analysis.

In this paper we demonstrate that the requirement of $\bigOmega(\ln n)$ passes in the analysis of prior works is artificial.
We show this by giving a \emph{multiplicative bound} for $R(\hu)$ 
% \eqref{eq:obj-multiplicative-gap}, 
achieved by the randomized SVD method (\multirsvd) of \citet{halko2011finding}, which is one of the most prominent and widely-implemented pass-efficient algorithms
\citep{pedregosa2011scikit, rehurek_lrec, libskylark2021rsvd, erichson2019randomized, terry2021statpack, antoine2021rsvd}

%%%%%%%% summary of our contribution and technical overview %%%%%%%%
Our analysis shows that, for any positive semi\-definite matrix, 
the \multirsvd\ method,
{\highlight using $\bigO(dn)$ space for $d\in\ints$ typically $d\ll n$ (e.g., $d=\bigO(\ln n)$),}
returns with high probability a vector~$\hu$ with 
$R(\hu)=\bigOmega\left((d/n)^{1/(2q+1)}\right)$
after $q\in\ints$ iterations (Theorem~\ref{thm:rsvd-main}),
and our analysis is tight (Theorem~\ref{thm:tightness-rsvd-main}).
Notably, our analysis subsumes the guarantee by prior works in the regime of $\bigOmega(\ln n)$ passes (Remark~\ref{re:recover-prior-analysis}), 
and to the best of our knowledge, 
provides the first non-trivial guarantee of $R(\hu)$ 
% \eqref{eq:obj-multiplicative-gap} 
in the literature of pass-efficient algorithms for $o(\ln n)$ passes.
Moreover, we show that under some natural conditions satisfied by real-world datasets, it is even possible to achieve $R(\hu)=\bigOmega(1)$ with a single pass (Remark~\ref{re:rsvd-power-law}).

% \note[Ruochun]{Porf.Lu suggest to add the above ($\uparrow$).}
%\note[Ruochun]{$\uparrow$ introduce the role of $d$, as requested by Reviewer\#4 and \#7}

Our core technical argument is a reduction from the optimization problem of maximizing $R$ over a random subspace 
to the problem of estimating the projection length of a vector onto a random subspace.
By using our technique, 
we derive the first non-trivial guarantee 
of $R(\hu)$ 
% \eqref{eq:obj-multiplicative-gap} 
for any number of passes for indefinite matrices (Theorem~\ref{thm:rsvd-indefinite-power-law}), 
%for the number of passes required to bound Equation~\eqref{eq:obj-multiplicative-gap} for indefinite matrices (Theorem~\ref{thm:rsvd-indefinite-power-law}), 
under mild conditions (Assumption~\ref{assumption:rsvd-indefinite}).

In addition, we propose an extension of the \multirsvd\ method, called \rsum,
by using a random matrix sampled from $\bernoulli{p}$ with mean $p\in(0,1)$.
%The guarantee achieved by \rsum is no worse than the one of \multirsvd, and furthermore, \rsum is superior when the top-eigenvector $\eigvecA{1}$ is not zero-mean. <--- Prof.Lu said to remove this since the explanation is not clear enough.
While such a random matrix is rarely used in the literature of random projections, 
we show that there exist applications \citep{bonchi2019discovering, tzeng2020discovering} 
especially suitable for this technique, 
and we show several properties of such a random matrix, which may be of independent interest.

\section{\uppercase{Related work}}
\label{sec:rw}

For lack of space, we only provide a brief overview of the related work, 
focusing on the most relevant works for our paper.
For a general introduction on pass-efficient algorithms for matrix approximations, 
we refer the reader to \citet{mahoney2011randomized, woodruff2014sketching, martinsson2020randomized}.

The study of $R(\hu)$ 
%\eqref{eq:obj-multiplicative-gap} 
for pass-efficient algorithms can be dated back to \cite{kuczynski1992estimating} who analyzed two classical methods: 
the power method and the Lanczos method with random start.
For any positive semi\-def\-inite matrix, they showed that the power method 
(respectively, Lanczos method) with random start, 
after $q\geq 2$ iterations returns an approximated top-eigenvector $\hu$ 
with $\expectation{R(\hu)} \geq 1-0.871 \frac{\ln n}{q-1}$ 
(respectively, $\expectation{R(\hu)} \geq 1-2.575(\frac{\ln n}{q-1})^2$).

The aforementioned methods are generalized to randomized SVD \citep{halko2011finding} 
and block-Krylov methods \citep{musco2015randomized}, 
%\footnote{Power method with random start is a special case of \multirsvd \citep{halko2011finding, golub13} with dimension $d=1$ and Lanczos method with random start is a special case of randomized block-Krylov method\citep{musco2015randomized}.}
and a similar additive analysis of $R(\hu)$ 
% \eqref{eq:obj-multiplicative-gap} 
by \cite{musco2015randomized} 
showed that for any positive semi\-def\-inite matrix, \multirsvd (respectively, randomized block-Krylov method) 
using $\bigO(nd)$ space (respectively, $\bigO(ndq)$ space) and after~$q$ iterations, 
returns an approximate top-eigenvector $\hu$ with $R(\hu) \geq 1-\bigO(\frac{\ln n}{q})$ 
(respectively, $R(\hu) \geq 1-\bigO((\frac{\ln n}{q})^2)$), with probability at least $1-e^{-\bigOmega(d)}$.%
\footnote{\cite{musco2015randomized} showed that the aforementioned results hold with constant probability, 
which could be improved to hold with probability $1-e^{-\bigOmega(d)}$ 
by using stronger concentration results \citep{rudelson2010non} in their proofs of Lemma 4 and Lemma 9.}

The analysis of the previous works \citep{kuczynski1992estimating, musco2015randomized}  is tight, 
as shown by \citet{simchowitz2018tight}
for a class of methods (which include \multirsvd and block Krylov), 
which with high probability fail to find a vector $\hu$ with $R(\hu)\geq 23/24$ within $q=\bigO(\ln n)$ passes.

In the aforementioned works there are two limitations.
First, the bounds of \citet{kuczynski1992estimating} and \citet{musco2015randomized} are additive, 
and unfortunately require $\Omega(\ln n)$ passes to be meaningful.
In contrast, our analysis provides a multiplicative bound for $R(\hu)$ 
% \eqref{eq:obj-multiplicative-gap} 
and offers non-trivial guarantees for any number of passes.
Second, the applicability of the methods of \citet{kuczynski1992estimating} and \citet{musco2015randomized} 
is limited to positive semi\-def\-inite matrices.
Instead, we provide sharp analysis of randomized SVD for positive semi\-def\-inite matrices 
and show that our proof techniques generalize to indefinite matrices under mild conditions.

To complement our study, 
we briefly compare the measure $R(\hu)$ 
% \eqref{eq:obj-multiplicative-gap} 
with other classical metrics.
Note here that, even though it is possible to covert an error guarantee for classical metrics 
\citep{xu2018accelerated, drineas2018structural, ghashami2016frequent, chen2017frosh, musco2017sublinear, huang2018near}
into a lower bound for $R(\hu)$ 
% \eqref{eq:obj-multiplicative-gap} 
by matrix perturbation theory \citep{Stewart90,yu2015useful}, the resulting bound is additive and depends on the eigen\-gap.
We also note that classical metrics typically compare the approximation $\hu$ to the top-eigenvector 
$\eigvecA{1}$ of $\Matrix{A}$, 
however, such a comparison is not meaningful in our setting
as small distance between $\hu$ and $\eigvecA{1}$\footnote{More precisely, the distance between $\hu$ and the eigen\-space associated with the largest eigenvalue $\eigvalA{1}$ of $\Matrix{A}$.} is a sufficient but not necessary condition for having large $R(\hu)$.

\section{\uppercase{Preliminaries}}
\label{sec:preliminary}

Let $\ints$ be the set of natural numbers excluding $0$.
%\note[Ruochun]{I fix this since the single-pass variant is not ($q=0$) but ($q=1$ and $d=1$).}
Let $\reals$ be the set of real numbers,
$\sphere^{m-1}=\{\vect{x}\in\reals^m:\vect{x}^T\vect{x}=1\}$, and $[m]=\{1,\ldots,m\}$.
Let $\range{\Matrix{M}}$ denote the column space of matrix~$\Matrix{M}$, 
and $\lVert\cdot\rVert_F$ and $\lVert\cdot\rVert_2$ denote the Frobenius norm and the spectral norm, respectively.
For a square matrix $\Matrix{M}$, 
let $\eigval{i}{\Matrix{M}}$ be its $i$-th largest eigenvalue and $\eigvec{i}{\Matrix{M}}$ the corresponding eigenvector, and let $\sigval{i}{\Matrix{M}}$ be the $i$-th largest singular value.
In all subsequent sections, we use boldface \Matrix{A} to denote the input matrix, 
and abbreviate $\eigvalA{i}=\eigval{i}{\Matrix{A}}$, 
$\eigvecA{i}=\eigvec{i}{\Matrix{A}}$, and 
$\sigvalA{i}=\sigval{i}{\Matrix{A}}$. 
We use $\langle\cdot,\cdot\rangle$ to denote the vector inner product.
Finally, we use $\vecones{n}=[1,\ldots,1]^T$ to denote the $n$-dimensional vector of all $1$'s and 
$\veczeros{n}=[0,\ldots,0]^T$ to denote the $n$-dimensional vector of all $0$'s.

For simplicity, we assume that the input matrix $\Matrix{A}$ is real-valued and symmetric, 
with $\eigvalA{1}>0$.

\medskip
\begin{definition}[Vector projection onto subspace]
\label{def:smallest-principal-angle}
Let $\vect{v}\in \reals^n$ be a nonzero vector and $\mathcal{X}\subseteq \reals^{n}$ be a non-empty subspace.
The projection length of $\vect{v}$ onto $\mathcal{X}$ is given by $\cos\theta(\vect{v},\mathcal{X})$, 
where 
\[
\theta(\vect{v},\mathcal{X}) = 
\cos^{-1}\left(\max_{\vect{x}\in\mathcal{X}}
\frac{\langle \vect{v},\vect{x}\rangle}{\norm{\vect{v}}_2\norm{\vect{x}}_2}\right)
\]
is the projection angle.
For a matrix $\Matrix{X}$, we use $\theta(\vect{v},\Matrix{X})$ 
to denote the projection angle of $\vect{v}$ onto the range of~$\Matrix{X}$.
\end{definition}

It is well-known that projecting any vector $\vect{v}\in\reals^n$ onto
the $\range{\Matrix{S}}$ of a random matrix $\Matrix{S}\sim\gaussian{0}{1}^{n\times d}$ results in 
$\cos^2\theta(\vect{v},\Matrix{S})\approx d/n$ with high probability.

\medskip
\begin{lemma}\citep{hardt2014noisy}
\label{lem:cos-angle-normal}
Let $\vect{v}\in\reals^n$ be a nonzero vector and 
$\Matrix{S}\sim\gaussian{0}{1}^{n\times d}$, 
where $n,d\in\ints$ and $n\geq d$.
Then,
\[
	\cos^2\theta(\vect{v},\Matrix{S}) = \bigTheta\left(\frac{d}{n}\right),
\]
with probability at least $1-e^{-\bigOmega(d)}$.
\end{lemma}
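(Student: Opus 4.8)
The plan is to reduce the claim to a statement about the ratio of two $\chi^2$-type random quantities and then apply standard Gaussian concentration. First I would observe that $\cos^2\theta(\vect{v},\Matrix{S})$ is rotationally invariant: since $\Matrix{S}\sim\gaussian{0}{1}^{n\times d}$ has a distribution invariant under left-multiplication by any orthogonal matrix, I may assume without loss of generality that $\vect{v}=\vect{e}_1$, the first standard basis vector. Then the projection length of $\vect{e}_1$ onto $\range{\Matrix{S}}$ is the squared norm of the projection of $\vect{e}_1$ onto the column space of $\Matrix{S}$, which equals $\vect{e}_1^T\Matrix{S}(\Matrix{S}^T\Matrix{S})^{-1}\Matrix{S}^T\vect{e}_1$. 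Equivalently, writing $\Matrix{S}^T=[\vect{s}_1\mid\Matrix{S}_{2:n}^T]$ so that $\vect{s}_1\in\reals^d$ is the first row of $\Matrix{S}$, one has $\cos^2\theta(\vect{e}_1,\Matrix{S}) = \|P\vect{e}_1\|_2^2$ where $P$ is the orthogonal projector onto $\range{\Matrix{S}}$.

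The cleaner route I would take is to work in the complementary picture: for fixed $\Matrix{S}$ of full column rank $d$, the quantity $1-\cos^2\theta(\vect{e}_1,\Matrix{S})$ equals $\|P^\perp\vect{e}_1\|_2^2$, the squared norm of the component of $\vect{e}_1$ orthogonal to $\range{\Matrix{S}}$, and $P^\perp$ is the orthogonal projector onto an $(n-d)$-dimensional subspace whose distribution (given nothing) is uniform on the Grassmannian $\grassmannian{n-d}{n}$. Hence $\cos^2\theta(\vect{e}_1,\Matrix{S})$ has the same law as $\|Q\vect{e}_1\|_2^2$ where $Q$ is the projector onto a uniformly random $d$-dimensional subspace; and by rotational symmetry again this equals in distribution $\sum_{i=1}^d g_i^2 / \sum_{i=1}^n g_i^2$ with $g_1,\dots,g_n$ i.i.d.\ standard Gaussians. (Concretely: take an orthonormal basis of the random $d$-subspace, pad to an orthonormal basis of $\reals^n$, and use that the coordinates of $\vect{e}_1$ in that basis form a uniformly random unit vector, which can be realized as a normalized Gaussian vector.) So it suffices to show $\sum_{i=1}^d g_i^2$ concentrates around $d$ and $\sum_{i=1}^n g_i^2$ around $n$, each up to constant factors, with failure probability $e^{-\bigOmega(d)}$.

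For the numerator, a standard $\chi^2$ tail bound (e.g.\ Laurent–Massart) gives $\tfrac12 d \le \sum_{i=1}^d g_i^2 \le 2d$ except with probability $e^{-\bigOmega(d)}$. For the denominator I need $\sum_{i=1}^n g_i^2 = \bigTheta(n)$; the upper tail $\sum_{i=1}^n g_i^2 \le 2n$ fails with probability $e^{-\bigOmega(n)}\le e^{-\bigOmega(d)}$ since $n\ge d$, and the lower tail $\sum_{i=1}^n g_i^2 \ge \tfrac12 n$ likewise fails with probability $e^{-\bigOmega(n)}$. (I should take care that the numerator and denominator Gaussians overlap — the first $d$ of the $g_i$ appear in both sums — but that only creates positive correlation and does not obstruct a union bound over the two one-sided events, each of which is controlled separately.) Taking the union bound over these $\bigO(1)$ events, with probability $1-e^{-\bigOmega(d)}$ we get $\tfrac{d/2}{2n} \le \cos^2\theta(\vect{v},\Matrix{S}) \le \tfrac{2d}{n/2}$, i.e.\ $\cos^2\theta(\vect{v},\Matrix{S}) = \bigTheta(d/n)$, which is the claim.

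The main obstacle is purely bookkeeping rather than conceptual: justifying the reduction to the i.i.d.\ Gaussian ratio rigorously (the claim that the orthogonal complement of $\range{\Matrix{S}}$ is a uniformly random subspace, and that a uniform unit vector has the same law as a normalized Gaussian vector) requires a clean invariance argument, and one must handle the measure-zero event that $\Matrix{S}$ fails to have full column rank. Beyond that, everything reduces to invoking a standard sub-exponential / $\chi^2$ concentration inequality, which is routine; indeed this lemma is quoted from \citet{hardt2014noisy}, so the intended proof is presumably exactly this short argument or a citation to it.
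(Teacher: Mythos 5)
Your proof is correct, and it takes a genuinely different route than the paper's. The paper sandwiches $\cos\theta(\vect{v},\Matrix{S})$ between $\sigval{1}{\Matrix{S}^T\vect{v}}/\sigval{1}{\Matrix{S}}$ and $\sigval{1}{\Matrix{S}^T\vect{v}}/\sigval{d}{\Matrix{S}}$, notes that $\Matrix{S}^T\vect{v}/\norm{\vect{v}}_2\sim\gaussian{0}{1}^{d\times 1}$, and then concentrates these three quantities separately using tail bounds for the extreme singular values of a sub-gaussian random matrix --- in particular it invokes the Rudelson--Vershynin smallest-singular-value estimate to lower-bound $\sigval{d}{\Matrix{S}}$. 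You instead exploit rotational invariance in full: reducing to the projector onto a uniformly random $d$-dimensional subspace gives the exact distributional identity $\cos^2\theta(\vect{v},\Matrix{S})\overset{d}{=}\sum_{i=1}^d g_i^2\big/\sum_{i=1}^n g_i^2$ with i.i.d.\ standard Gaussians, and then only scalar $\chi^2$ concentration is needed. Your route is more elementary (it dispenses with matrix singular-value concentration altogether) and makes transparent why the lemma extends to any $\Matrix{S}$ with $\range{\Matrix{S}}\sim\uniform{\grassmannian{n}{d}}$, a fact the paper states informally at the end of Section 3; the paper's route instead reuses the same two singular-value lemmas already assembled in its appendix and mirrors the way the lemma is later applied (by substituting $\vect{a}=\Matrix{S}^T\eigvecA{1}/\norm{\Matrix{S}^T\eigvecA{1}}_2$). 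Two minor points on your write-up: the detour through $P^\perp$ and the orthogonal complement is unnecessary, since $\range{\Matrix{S}}$ is itself Haar-uniform on $\grassmannian{n}{d}$; and the parenthetical worry about the first $d$ Gaussians appearing in both sums is moot, as the union bound over the two one-sided failure events needs no independence.
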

% \note[Ruochun]{I updated the citation from \citep{achlioptas2001database} to \citep{hardt2014noisy} since the former is more related to JL lemma while the latter directly proves an exact generalization of Lemma~\ref{lem:cos-angle-normal}.}
For completeness, we provide the proof of Lemma~\ref{lem:cos-angle-normal} in Appendix~\ref{proof:cos-angle-normal}. 
The proof idea is to observe that 
$\frac{\norm{\Matrix{S}^T\vect{v}}_2}{\sigval{1}{\Matrix{S}}}\leq\cos\theta(\vect{v},\Matrix{S})\leq\frac{\norm{\Matrix{S}^T\vect{v}}_2}{\sigval{d}{\Matrix{S}}}$ and use the concentration of the extreme singular values of a
Gaussian random matrix.

More generally, Lemma~\ref{lem:cos-angle-normal} holds for any random matrix $\Matrix{S}$ 
whose range is uniformly distributed with respect to the Haar measure on Grassmannian $\grassmannian{n}{d}$ 
of all the $d$-dimensional subspaces of $\reals^n$, 
written as $\range{\Matrix{S}}\sim\uniform{\grassmannian{n}{d}}$.
The reader may refer to \citet{achlioptas2001database} and \citet{halko2011finding} 
for other choices of $\Matrix{S}$ and \citet[Section 5]{vershynin2018high} 
for a general introduction to this phenomenon.

%symmetric Bernoulli \citep{achlioptas2001database}, structured Handamard transform \citep{halko2011finding}
\section{\uppercase{Randomized SVD}}
\label{sec:rsvd}
%We analyze the following procedure, Algorithm~\ref{alg:rsvd} (\rsum), %proposed by \cite{halko2011finding}.
%\note[Aris]{I am replacing $\ints\cup\{0\}$ with \ints. I think we can assume that \ints contains the 0.}

\begin{algorithm}[t]
	$\Matrix{Y}\leftarrow\Matrix{A}^q\Matrix{S}$ where $\Matrix{S}\sim\mathcal{D}$\;
	$\Matrix{Y}=\Matrix{Q}\Matrix{R}$\;
	$\Matrix{B}\leftarrow \Matrix{Q}^T\Matrix{A}\Matrix{Q}$\;
	$\hu= \Matrix{Q}\,\eigvec{1}{\Matrix{B}}$\;
	return \hu\;
	\caption{$\multirsvd(\Matrix{A},\mathcal{D},q,d)$}
	\label{alg:rsvd}
\end{algorithm}

We briefly review the following variant of the randomized SVD (\multirsvd) algorithm, 
as proposed by \citet{halko2011finding}, 
and shown in Algorithm~\ref{alg:rsvd}.
The algorithm returns an estimate \hu of the leading eigenvector~$\eigvecA{1}$
of the input matrix \Matrix{A}.
It uses $\bigO(dn)$ space and requires $q+1$ passes over the matrix \Matrix{A}, 
where $q\in\ints$.\footnote{More precisely, \multirsvd requires $q$ passes when $d=1$ and $q+1$ passes when $d>1$ as there is no need to compute $\eigvec{1}{B}$ when $d=1$.}
The distribution $\mathcal{D}$ is over $\reals^{n \times d}$, 
and one particular instance of the algorithm sets 
$\mathcal{D} = \gaussian{0}{1}^{n\times d}$.
The algorithm begins with a random projection $\Matrix{Y}=\Matrix{A}^q\Matrix{S}$.
The eigenvectors of $\Matrix{A}^q$ are the same as $\Matrix{A}$, but the eigenvalues of $\Matrix{A}^q$ have much stronger decay. Thus intuitively, by taking powers of the input matrix, the relative weight of the eigenvectors associated with the small eigenvalues is reduced, which is helpful in the basis identification for input matrices whose eigenvalues decay slowly.
After projecting, the algorithm efficiently approximates the top-eigenvector of $\Matrix{A}$ by
% \note[Ruochun]{Fix the sentence since the previous "the algorithm efficiently finds a unit vector $\hu\in\range{\Matrix{Y}}$ that maximizes" is not a complete sentence.}
\begin{equation}\label{eq:rsvd-returned-vec}
	 \hu\in\argmax\{\vect{v}^T\Matrix{A}\vect{v}: \vect{v}\in\range{\Matrix{Y}}\cap\sphere^{n-1}\}.
\end{equation}
Indeed, any $\vect{v}\in\range{\Matrix{Y}}$ of unit length can be written as $\vect{v}=\Matrix{Q}\vect{a}$ for some $\vect{a}\in\sphere^{d-1}$, where $\Matrix{Q}$ is an $n\times d$ ortho\-normal basis given by a QR decomposition of $\Matrix{Y}$. So it follows that
\[\max_{\vect{v}\in\range{\Matrix{Y}}\cap\sphere^{n-1}} \vect{v}^T\Matrix{A}\vect{v} = \max_{\vect{a}\in\sphere^{d-1}}\vect{a}^T\Matrix{B}\vect{a} = \eigval{1}{\Matrix{B}}.\]
Thus, the vector $\hu=\Matrix{Q}\eigvec{1}{\Matrix{B}}$ maximizes expression~\eqref{eq:rsvd-returned-vec}, 
and $\eigvec{1}{\Matrix{B}}$ can be efficiently computed as the matrix $\Matrix{B}$ is of dimension $d\times d$.

\subsection{Analysis of \multirsvd}\label{sec:4.1}

We now derive lower and upper bounds for $R(\hu)$, 
where $\hu$ is the output of Algorithm~\ref{alg:rsvd}, and 
$R(\vect{v})=\eigvalA{1}^{-1}\frac{\vect{v}^T\Matrix{A}\vect{v}}{\vect{v}^T\vect{v}}$ 
is defined for any nonzero vector $\vect{v} \in \reals^n$.
Note that due to expression \eqref{eq:rsvd-returned-vec}, 
\hu maximizes $R$ over the column space $\range{\Matrix{Y}}$ of $\Matrix{Y}$.
Since $\range{\Matrix{Y}}=\{\Matrix{Y}\vect{a}: \vect{a} \in \reals^{d}\}$, 
we can rewrite $R(\hu)$ as 
\begin{align*}
R(\hu) 
= \max_{\vect{v}\in \range{\Matrix{Y}}\backslash\{\veczeros{n}\}}R(\vect{v})
= \max_{\vect{a}\in\sphere^{d-1}}R(\Matrix{Y}\vect{a}),
\end{align*}
where the latter equality follows from the scale invariance of $R$. 
For notational convenience, we denote $R_\vect{a} = R(\Matrix{Y}\vect{a})$. 
After substituting $\Matrix{Y}=\Matrix{A}^q\Matrix{S}$ in the definition of~$R$, 
we can evaluate $R_\vect{a}$ as
\begin{equation}\label{eq:rsvd-Ra}
	R_\vect{a} 
	= \frac{1}{\eigvalA{1}}\frac{(\Matrix{S}\vect{a})^T\Matrix{A}^{2q+1}(\Matrix{S}\vect{a})}{(\Matrix{S}\vect{a})^T\Matrix{A}^{2q}(\Matrix{S}\vect{a})}.
\end{equation}
Since $\Matrix{A}$ is real and symmetric, it has a real-valued eigen-decomposition $\Matrix{A} = \sum_{i=1}^n\eigvalA{i}\eigvecA{i}\eigvecA{i}^T$, with $\{\eigvecA{i}\}_{i=1}^n$ being orthonormal. 
Hence $\Matrix{A}^{k} = \sum_{i=1}^n\eigvalA{i}^k\eigvecA{i}\eigvecA{i}^T$, for any $k\in\ints$, 
and we further expand
Equation~\eqref{eq:rsvd-Ra} as
\begin{equation}\label{eq:rsvd-Ra2}
	R_\vect{a}
	= \frac{1}{\eigvalA{1}}\frac{\sum_{i}\lambda_i^{2q+1}\langle \Matrix{S}^T\eigvecA{i}, \vect{a}\rangle^2}{\sum_{i}\lambda_i^{2q}\langle \Matrix{S}^T\eigvecA{i}, \vect{a}\rangle^2}
	= \frac{\sum_{i}\neigval{i}^{2q+1}\langle \Matrix{S}^T\eigvecA{i}, \vect{a}\rangle^2}{\sum_{i}\neigval{i}^{2q}\langle \Matrix{S}^T\eigvecA{i}, \vect{a}\rangle^2},
\end{equation}
where $\neigval{i}=\eigvalA{i}/\eigvalA{1}$, for all $i\in[n]$.
This is well-defined since $\eigvalA{1}>0$.
% (see Section~\ref{sec:preliminary}).
For our analysis of $R(\hu)=\max_{\vect{a}\in\sphere^{d-1}}R_\vect{a}$, 
we first consider the case when $\Matrix{A}$ is positive semi\-definite (p.s.d.).
% (in \textsection\ref{sec:rsvd-psd}). 
The proof strategy and arguments serve as a building block for the indefinite case,
discussed in Section~\ref{sec:rsvd-indefinite}.

\subsection{Positive semi\-definite matrices}\label{sec:rsvd-psd}

Our first result, is a guarantee on the performance of \multirsvd, 
asserted by the following.

\medskip
\begin{theorem}\label{thm:rsvd-main}
	Let \Matrix{A} be a positive semi\-definite matrix with $\eigvalA{1}>0$ and $\hu=\multirsvd(\Matrix{A},\gaussian{0}{1}^{n\times d},q,d)$ for any $q\in\ints$.
	Then
	%\[R(\hu) = \bigOmega\left(\left(\frac{d}{n}\right)^{\frac{1}{2q+1}}\right)\]
	\[R(\hu) = \left(\bigOmega\left(\frac{d}{n}\right)\right)^{\frac{1}{2q+1}}\]
	holds with probability at least $1-e^{-\bigOmega(d)}$. 
\end{theorem}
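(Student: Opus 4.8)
The plan is to lower-bound $R(\hu) = \max_{\vect{a}\in\sphere^{d-1}} R_\vect{a}$ by exhibiting a single good choice of $\vect{a}$, namely the one coming from projecting $\eigvecA{1}$ onto $\range{\Matrix{S}}$, and then reducing the analysis of the resulting Rayleigh-type ratio in \eqref{eq:rsvd-Ra2} to the projection-length estimate of Lemma~\ref{lem:cos-angle-normal}. Concretely, let $\vect{a}^\star\in\sphere^{d-1}$ be the unit vector achieving $\cos\theta(\eigvecA{1},\Matrix{S})$, i.e.\ maximizing $\langle \Matrix{S}^T\eigvecA{1},\vect{a}\rangle/\norm{\Matrix{S}\vect{a}}_2$ up to normalization; by Lemma~\ref{lem:cos-angle-normal} we have $\cos^2\theta(\eigvecA{1},\Matrix{S}) = \bigTheta(d/n)$ with probability $1-e^{-\bigOmega(d)}$. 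Write $c_i = \langle \Matrix{S}^T\eigvecA{i},\vect{a}^\star\rangle$ so that $R(\hu)\ge R_{\vect{a}^\star} = \sum_i \neigval{i}^{2q+1}c_i^2 \big/ \sum_i \neigval{i}^{2q}c_i^2$, where $0\le \neigval{i}\le 1$ since $\Matrix{A}$ is p.s.d.\ and $\neigval{1}=1$.

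The heart of the argument is then a purely scalar inequality: for any nonnegative weights $w_i = \neigval{i}^{2q}c_i^2$ and any $\beta_i = \neigval{i}\in[0,1]$, one has $\sum_i \beta_i w_i \big/ \sum_i w_i \ge (w_1/\sum_i w_i)^{?}$ — more precisely, since $\beta_1 = 1$, the ratio $\sum_i \neigval{i}^{2q+1} c_i^2 / \sum_i \neigval{i}^{2q} c_i^2$ is at least as large as what one gets by the power-mean / convexity bound. I would argue as follows: the ratio is a weighted average of the $\neigval{i}$, and it is minimized (for fixed "mass on index $1$") by pushing all remaining mass to indices with $\neigval{i}\to 0$; the relevant quantity is the fraction of the denominator contributed by the top eigenvector, $t := \neigval{1}^{2q}c_1^2/\sum_i \neigval{i}^{2q}c_i^2 = c_1^2/\sum_i \neigval{i}^{2q}c_i^2$. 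Then $R_{\vect{a}^\star}\ge t$, but this is too weak; the right bound comes from observing that $\sum_i \neigval{i}^{2q}c_i^2 \le \sum_i c_i^2 = \norm{\Matrix{S}^T\vect{a}^\star}_2^2$ is controlled, while $c_1^2$ is exactly $\norm{\Matrix{S}}$-normalized so that $c_1^2/\norm{\Matrix{S}\vect{a}^\star}_2^2 = \cos^2\theta(\eigvecA{1},\Matrix{S}) = \bigTheta(d/n)$. Combining with the elementary fact that for $x_i\ge 0$, $\beta_i\in[0,1]$, $\beta_1=1$, one has $\frac{\sum \beta_i^{2q+1} x_i}{\sum \beta_i^{2q} x_i}\ge \left(\frac{x_1}{\sum x_i}\right)^{1/(2q+1)}$ (provable by grouping the non-top terms and applying the weighted power-mean inequality, or Hölder), yields $R_{\vect{a}^\star} \ge \big(\cos^2\theta(\eigvecA{1},\Matrix{S})\big)^{1/(2q+1)} = (\bigOmega(d/n))^{1/(2q+1)}$.

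I expect the main obstacle to be establishing that scalar inequality cleanly and verifying it is not lossy — specifically handling the interplay between the numerator/denominator exponents $2q+1$ and $2q$ and the "leftover mass" at small eigenvalues. The cleanest route is probably: let $\gamma = c_1^2/\sum_i c_i^2 \ge c_1^2/\sum_i \neigval{i}^{2q}c_i^2 \cdot (\text{something})$ — one must be careful whether to normalize by $\sum c_i^2$ or by $\sum \neigval{i}^{2q}c_i^2$, and Lemma~\ref{lem:cos-angle-normal} gives control of $\norm{\Matrix{S}^T\eigvecA{1}}_2^2/\norm{\Matrix{S}}$-type quantities rather than $c_1^2$ directly, so I would route through $\cos^2\theta(\eigvecA{1},\Matrix{S}) = \norm{\Matrix{S}^T\eigvecA{1}}_2^2 \cos^2\theta$-bounds and the singular-value sandwich $\sigval{d}{\Matrix{S}}\le \norm{\Matrix{S}^T\vect{a}}_2/\norm{\vect{a}}_2 \le \sigval{1}{\Matrix{S}}$ to pass between $\langle\Matrix{S}^T\eigvecA{i},\vect{a}^\star\rangle$ and the geometry of the random subspace. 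A secondary technical point is that we only need a lower bound, so all high-probability failure events (extreme singular values of $\Matrix{S}$ out of range, projection length too small) can be union-bounded, each costing only $e^{-\bigOmega(d)}$; the final probability is still $1-e^{-\bigOmega(d)}$. The upper-bound half of Theorem~\ref{thm:rsvd-main} is implicitly deferred to Theorem~\ref{thm:tightness-rsvd-main}, so here I would only prove the stated lower bound $R(\hu)=(\bigOmega(d/n))^{1/(2q+1)}$.
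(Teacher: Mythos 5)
Your proposal is essentially the paper's own proof. The scalar inequality you state, namely for $x_i\ge 0$, $\beta_1=1$, $\beta_i\ge 0$ that $\big(\sum_i\beta_i^{2q+1}x_i\big)\big/\big(\sum_i\beta_i^{2q}x_i\big)\ge\big(x_1/\sum_i x_i\big)^{1/(2q+1)}$, is exactly what the paper establishes (via repeated Cauchy--Schwarz in the main text, and via Hölder's inequality in the appendix), and your route of plugging in the specific projection direction and then appealing to Lemma~\ref{lem:cos-angle-normal} for $\cos^2\theta(\eigvecA{1},\Matrix{S})=\bigTheta(d/n)$ is precisely how the paper closes the argument; the normalization subtlety you flag is resolved by the identity $\sum_i\langle\Matrix{S}^T\eigvecA{i},\vect{a}\rangle^2=\norm{\Matrix{S}\vect{a}}_2^2$, so the denominator is the right one for the cosine (note your $\norm{\Matrix{S}^T\vect{a}^\star}_2^2$ should read $\norm{\Matrix{S}\vect{a}^\star}_2^2$, a dimensional slip only).
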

% \note[Ruochun]{Prof. Lu said our Theorem~\ref{thm:rsvd-main} is actually stronger, so I updated it accordingly.}

\begin{proof}
If \Matrix{A} is p.s.d.\ we have $\neigval{i} \geq 0$, and thus (assuming $q\in\ints$) 
we can repeatedly apply the Cauchy-Schwarz inequality to Equation~\eqref{eq:rsvd-Ra2} and get
\begin{equation}\label{eq:main-intuition1}
	R_\vect{a} 
	\geq \frac{\sum_{i}\neigval{i}^{2q}\langle \Matrix{S}^T\eigvecA{i}, \vect{a}\rangle^2}{\sum_{i}\neigval{i}^{2q-1}\langle \Matrix{S}^T\eigvecA{i}, \vect{a}\rangle^2}
	\geq \cdots 
	\geq \frac{\sum_{i}\neigval{i}\langle \Matrix{S}^T\eigvecA{i}, \vect{a}\rangle^2}{\sum_{i}\langle \Matrix{S}^T\eigvecA{i}, \vect{a}\rangle^2}.
\end{equation}
%Hence, $R_\vect{a}$ is non\-decreasing in $q$, and equality holds when $\{\neigval{i}\}$ have no decay.
The key observation is that by repeatedly using Equation~\eqref{eq:main-intuition1} results in
\begin{align*}
	\sum_{i=1}^n\langle \Matrix{S}^T\eigvecA{i}, \vect{a}\rangle^2 & \geq R_\vect{a}^{-1}\sum_{i=1}^n\neigval{i}\langle \Matrix{S}^T\eigvecA{i}, \vect{a}\rangle^2 \geq \cdots\nonumber\\
	& \geq R_\vect{a}^{-(2q+1)}\sum_{i=1}^n\neigval{i}^{2q+1}\langle \Matrix{S}^T\eigvecA{i}, \vect{a}\rangle^2
\end{align*}
which implies
\begin{equation}\label{eq:main-intuition3}
	R_\vect{a}^{2q+1} 
	\geq \frac{\sum_{i=1}^n\neigval{i}^{2q+1}\langle \Matrix{S}^T\eigvecA{i}, \vect{a}\rangle^2}{\sum_{i=1}^n\langle \Matrix{S}^T\eigvecA{i}, \vect{a}\rangle^2}
	\geq \frac{\langle \Matrix{S}^T\eigvecA{1}, \vect{a}\rangle^2}{\sum_{i=1}^n\langle \Matrix{S}^T\eigvecA{i}, \vect{a}\rangle^2}.
\end{equation}
Finally, by $R(\hu)=\max_{\vect{a}\in\sphere^{d-1}}R_\vect{a}$ and Definition~\ref{def:smallest-principal-angle} 
we have
\begin{equation}\label{eq:rsvd-key}
	R(\hu)^{2q+1}
	\geq \max_{\vect{a}\in\sphere^{d-1}}\frac{\langle \Matrix{S}^T\eigvecA{1}, \vect{a}\rangle^2}{\sum_{i=1}^n\langle \Matrix{S}^T\eigvecA{i}, \vect{a}\rangle^2} 
	= \cos^2\angl{\eigvecA{1}}{\Matrix{S}},
\end{equation}
and invoking Lemma~\ref{lem:cos-angle-normal} proves the claim.
\end{proof}

%\note[Florian]{Perhaps say something that this only depends on \eigvecA{1}}
% \note[Ruochun]{Please help me check if the below ($\downarrow$) paragraph is understandable. I added one remark explaining why \textcolor{blue}{Theorem~\ref{thm:rsvd-main} can be proved by estimating $R_\vect{a}$ only on $\vect{a}=\frac{\Matrix{S}^T\eigvecA{1}}{\norm{\Matrix{S}^T\eigvecA{1}}_2}$} which will be used in \textsection~\ref{sec:rsvd-indefinite}.}

We offer a few remarks.
First note that the fact that Equation~\eqref{eq:rsvd-key} implies Theorem~\ref{thm:rsvd-main} 
can be proven by estimating $R_\vect{a}$ only on 
$\vect{a}=\frac{\Matrix{S}^T\eigvecA{1}}{\norm{\Matrix{S}^T\eigvecA{1}}_2}$, 
since we essentially prove Lemma~\ref{lem:cos-angle-normal} on such a vector $\vect{a}$ --- 
see our discussion in Section~\ref{sec:preliminary} or Appendix~\ref{proof:cos-angle-normal}.
Second, Equation~\eqref{eq:main-intuition3} can also be shown by Hölder's inequality ---
see a simplified proof of Theorem~\ref{thm:rsvd-main} in Appendix~\ref{proof:rsvd-main}.
Third, from Theorem~\ref{thm:rsvd-main}, 
we see that increasing the number of passes $q$ makes $R(\hu)$ approaching to $1$ exponentially fast, 
while increasing the dimension $d$ leads to stronger concentration of $R(\hu)$ 
around the slowly increased mean $\bigOmega((d/n)^{1/(2q+1)})$.
Finally, we have:

\medskip
\begin{remark}\label{re:recover-prior-analysis}
	The guarantee by Theorem~\ref{thm:rsvd-main} can be written as $R(\hu)=e^{-\bigO\left(\ln n/(2q+1)\right)}\geq 1-\bigO(\ln n/q)$, and hence, subsumes the result of \citet{musco2015randomized}.
\end{remark}
% \note[Ruochun]{Prof. Lu simplified the computations.}

One may wonder if our analysis is tight.
The next theorem confirms the tightness of Theorem~\ref{thm:rsvd-main} up to a constant factor.

\medskip
\begin{theorem}\label{thm:tightness-rsvd-main}
	For any $q\in \ints$, there exists a positive semi\-definite matrix \Matrix{A} with $\eigvalA{1}>0$, 
	so that for $\hu=\multirsvd(\Matrix{A},\gaussian{0}{1}^{n\times d},q,d)$, it holds
	\[
	R(\hu) = \bigO\left(\left(\frac{d}{n}\right)^{\frac{1}{2q+1}}\right),
	\] 
	with probability at least $1-e^{-\bigOmega(d)}$.
\end{theorem}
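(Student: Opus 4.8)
The plan is to construct an explicit hard instance whose eigenvalue profile forces the bound in Theorem~\ref{thm:rsvd-main} to be essentially attained, and then to show that the inequalities used in the proof of Theorem~\ref{thm:rsvd-main} become (approximate) equalities on this instance. The natural candidate is a matrix with a large eigengap followed by a large flat block: take $\eigvalA{1}=1$ and $\eigvalA{2}=\cdots=\eigvalA{n}=\gamma$ for a carefully chosen $\gamma\in(0,1)$, i.e.\ $\Matrix{A} = (1-\gamma)\eigvecA{1}\eigvecA{1}^T + \gamma\,\eye{n}$ in some orthonormal basis, which is clearly p.s.d.\ with $\eigvalA{1}>0$. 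With this choice the normalized eigenvalues are $\neigval{1}=1$ and $\neigval{i}=\gamma$ for $i\ge2$, so Equation~\eqref{eq:rsvd-Ra2} simplifies dramatically: writing $c_1^2=\langle\Matrix{S}^T\eigvecA{1},\vect{a}\rangle^2$ and $w^2=\sum_{i\ge2}\langle\Matrix{S}^T\eigvecA{i},\vect{a}\rangle^2$, we get
\[
R_\vect{a} = \frac{c_1^2 + \gamma^{2q+1} w^2}{c_1^2 + \gamma^{2q} w^2}.
\]

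Next I would maximize this one-variable expression over $\vect{a}\in\sphere^{d-1}$, or rather bound the maximum from above. Since the best $\vect{a}$ can only increase the ratio $c_1^2/w^2$, the relevant quantity is $t := \max_{\vect a}\, c_1^2 / w^2$, which up to lower-order terms equals $\cos^2\theta(\eigvecA{1},\Matrix{S})/\bigl(1-\cos^2\theta(\eigvecA{1},\Matrix{S})\bigr)$; by Lemma~\ref{lem:cos-angle-normal} this is $\bigTheta(d/n)$ with probability $1-e^{-\bigOmega(d)}$ (here I must be a little careful: maximizing the ratio $R_\vect a$ is not literally the same as maximizing $c_1^2/w^2$, but since $\gamma^{2q+1}<\gamma^{2q}$ the function $s\mapsto (s+\gamma^{2q+1})/(s+\gamma^{2q})$ is increasing in $s=c_1^2/w^2$, so the two maximizations agree). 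Substituting gives
\[
R(\hu) \le \frac{t + \gamma^{2q+1}}{t + \gamma^{2q}}
\]
with $t=\bigTheta(d/n)$. Now I tune $\gamma$ so that the two terms driving $R(\hu)$ down — the contribution $c_1^2\le t\cdot w^2$ from $\eigvecA{1}$, which is small, and the eigenvalue decay factor $\gamma$ — are balanced: choosing $\gamma = \bigTheta\bigl((d/n)^{1/(2q+1)}\bigr)$ makes $\gamma^{2q+1} = \bigTheta(t)$, so the numerator is $\bigO(t)$ while the denominator is $\bigOmega(\gamma^{2q}) = \bigOmega\bigl((d/n)^{2q/(2q+1)}\bigr)$, yielding $R(\hu) = \bigO\bigl(t^{1/(2q+1)}\bigr) = \bigO\bigl((d/n)^{1/(2q+1)}\bigr)$, exactly as claimed. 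One should double-check that $\gamma\le 1$ (true for $d\le n$) and that $n$ is large enough that $\gamma<1$ strictly, and also verify the matrix is genuinely of the stated form; these are routine.

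The main obstacle I anticipate is not the algebra but making the high-probability statement fully rigorous: I need a two-sided estimate of $\cos^2\theta(\eigvecA{1},\Matrix{S})$ — the lower bound to guarantee $t = \bigOmega(d/n)$ so the denominator is not anomalously small, and the upper bound to guarantee $t = \bigO(d/n)$ so the numerator is genuinely small — and both should hold simultaneously with probability $1-e^{-\bigOmega(d)}$. Lemma~\ref{lem:cos-angle-normal} provides exactly the two-sided $\bigTheta(d/n)$ bound with this probability, so invoking it on $\vect v = \eigvecA{1}$ handles this cleanly; the only subtlety is confirming that $w^2 = \norm{\Matrix{S}^T\vect a}_2^2 - c_1^2$ is bounded below as well, which again follows since $\Matrix{S}^T\vect a$ has norm $\bigTheta(\sqrt d)$ (concentration of $\norm{\Matrix{S}}_2$, $\sigval{d}{\Matrix S}$) while $c_1^2 = \bigO((d/n)\norm{\Matrix S^T\vect a}^2)$ is a vanishing fraction of it. Assembling these estimates and the choice of $\gamma$ above completes the argument.
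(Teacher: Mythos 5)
Your construction — top eigenvalue $1$ with all remaining eigenvalues equal to $\gamma=\bigTheta\bigl((d/n)^{1/(2q+1)}\bigr)$ — and the use of Lemma~\ref{lem:cos-angle-normal} to control $\cos^2\theta(\eigvecA{1},\Matrix{S})$ are exactly what the paper does, so this is essentially the same proof; your reformulation of $R_\vect{a}$ as a monotone function of $s=c_1^2/w^2$ is a slightly cleaner way of organizing the same algebra the paper carries out by splitting $R_\vect{a}$ into two terms and bounding them as $\alpha^{-2q}\cos^2\theta(\eigvecA{1},\Matrix{S})+\alpha$. Only minor slips in your side remarks (only the upper tail of Lemma~\ref{lem:cos-angle-normal} is actually needed, and the spanning vector is $\Matrix{S}\vect{a}\in\reals^n$ with norm $\bigTheta(\sqrt{n})$, not $\Matrix{S}^T\vect{a}$), none of which affect the argument.
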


We prove Theorem~\ref{thm:tightness-rsvd-main} in Appendix~\ref{proof:tightness-rsvd-main} by considering the following eigenvalue distribution $\{\neigval{i}\}$:
\begin{equation}\label{eq:bad-psd-example}
	1=\neigval{1}>\neigval{2}=\cdots=\neigval{n}=\left(\frac{d}{n}\right)^{\frac{1}{2q+1}}.
\end{equation}

While our worst-case analysis is tight, Equation~\eqref{eq:bad-psd-example} rarely happens in practice.
Instead, real-world matrices are often observed to have rapidly decaying singular values \citep{chakrabarti2006graph, eikmeier2017revisiting}.
To take this consideration into account, 
we introduce the following definition to capture whether\Matrix{A} has 
at least power-law decay of its singular values $\{\sigvalA{i}\}_{i\geq i_0}^n$.

\medskip
\begin{definition}\label{def:power-law}
	{\highlight
	Let 
	\[
	i_0=\begin{cases}
		\min_{j\in\mathcal{J}}j & \text{ if }\mathcal{J}\neq\emptyset,\\
		n & \text{ otherwise},
	\end{cases}
	\]
	 where $\mathcal{J}\subseteq[n]$ consists of all the integers $j\in[n]$ such that 
	 there exists $\gamma>1/q$ and $C>0$ satisfying $\sigvalA{i}/\sigvalA{1} \leq C\cdot i^{-\gamma}$,
	 for all $i\geq j$.}
\end{definition}
% \note[Ruochun]{Thanks Prof. Lu for catching this error.}
% \note[Ruochun]{$\uparrow$ rewrite Definition~\ref{def:power-law}, as requested by Reviewer\#6}

\medskip
\begin{theorem}\label{thm:rsvd-power-law}
	Let $\Matrix{A}$ be a positive semi\-definite matrix, $\hu=\multirsvd(\Matrix{A},\gaussian{0}{1}^{n\times d},q,d)$ for any $q\in\ints$, and $i_0$ be defined as in Definition~\ref{def:power-law}.
	Then
	\[R(\hu) = \bigOmega\left(\left(\frac{d}{d+i_0}\right)^{\frac{1}{2q+1}}\right)\] 
	holds with probability at least $1-e^{-\bigOmega(d)}$.
\end{theorem}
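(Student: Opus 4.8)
The plan is to mimic the argument of Theorem~\ref{thm:rsvd-main}, but to exploit the power-law decay so that only the top $\bigO(i_0)$ coordinates of $\Matrix{S}^T\eigvecA{i}$ matter. As in Equation~\eqref{eq:main-intuition3}, for any $\vect{a}\in\sphere^{d-1}$ we have
\[
R_\vect{a}^{2q+1}\geq \frac{\sum_{i=1}^n\neigval{i}^{2q+1}\langle\Matrix{S}^T\eigvecA{i},\vect{a}\rangle^2}{\sum_{i=1}^n\langle\Matrix{S}^T\eigvecA{i},\vect{a}\rangle^2}.
\]
The denominator equals $\norm{\Matrix{S}^T\eigvecA{1}}_2^2\cdot(\text{stuff})$ only if we pick $\vect{a}$ cleverly; instead I would take $\vect{a}=\Matrix{S}^T\vect{w}/\norm{\Matrix{S}^T\vect{w}}_2$ where $\vect{w}$ is the (normalized) projection of $\eigvecA{1}$ onto the subspace spanned by $\eigvecA{1},\dots,\eigvecA{i_0-1}$... actually more simply: choose $\vect{a}$ to be the maximizer, so $R(\hu)^{2q+1}\geq \max_{\vect{a}}(\text{ratio})$, and lower-bound the max by evaluating at a convenient $\vect{a}$. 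The convenient choice is $\vect{a}^\star=\Matrix{S}^T\eigvecA{1}/\norm{\Matrix{S}^T\eigvecA{1}}_2$, which gives
\[
R(\hu)^{2q+1}\geq \frac{\langle\Matrix{S}^T\eigvecA{1},\vect{a}^\star\rangle^2}{\sum_{i=1}^n\langle\Matrix{S}^T\eigvecA{i},\vect{a}^\star\rangle^2}=\frac{\norm{\Matrix{S}^T\eigvecA{1}}_2^4}{\sum_{i=1}^n\langle\Matrix{S}^T\eigvecA{i},\Matrix{S}^T\eigvecA{1}\rangle^2}.
\]
Keeping only the $i=1$ term in the denominator recovers $\cos^2\theta(\eigvecA{1},\Matrix{S})=\bigOmega(d/n)$, which is too weak. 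The improvement must come from splitting the denominator sum at $i_0$.

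The key step is the split. Write the denominator of the last display as $\norm{\Matrix{S}^T\eigvecA{1}}_2^4 + \sum_{i=2}^{i_0-1}\langle\Matrix{S}^T\eigvecA{i},\Matrix{S}^T\eigvecA{1}\rangle^2 + \sum_{i\geq i_0}\langle\Matrix{S}^T\eigvecA{i},\Matrix{S}^T\eigvecA{1}\rangle^2$. For the tail $i\geq i_0$ I would \emph{not} bound each cross term by Cauchy--Schwarz against $\norm{\Matrix{S}^T\eigvecA{1}}_2^2$ (that loses a factor $n$); instead I would go back one step and keep the $\neigval{i}^{2q+1}$ weights. Concretely, redo the estimate as
\[
R(\hu)^{2q+1}\geq\max_{\vect{a}}\frac{\sum_{i=1}^n\neigval{i}^{2q+1}\langle\Matrix{S}^T\eigvecA{i},\vect{a}\rangle^2}{\sum_{i=1}^n\langle\Matrix{S}^T\eigvecA{i},\vect{a}\rangle^2}\geq\max_\vect{a}\frac{\sum_{i=1}^{i_0-1}\neigval{i}^{2q+1}\langle\Matrix{S}^T\eigvecA{i},\vect{a}\rangle^2}{\sum_{i=1}^{i_0-1}\langle\Matrix{S}^T\eigvecA{i},\vect{a}\rangle^2+\sum_{i\geq i_0}\langle\Matrix{S}^T\eigvecA{i},\vect{a}\rangle^2}.
\]
Let $\mathcal{X}$ be the span of $\eigvecA{1},\dots,\eigvecA{i_0-1}$ and $P$ the orthogonal projection onto $\mathcal{X}$. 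Choosing $\vect{a}=\Matrix{S}^T P\eigvecA{1}/\norm{\cdot}$ makes the numerator $\geq\neigval{i_0-1}^{2q+1}\cdot(\text{projection onto }\range{\Matrix{S}}\text{ within }\mathcal{X})$... but $\neigval{i_0-1}$ could be tiny, so that's the wrong handle. The right handle: because $\sigvalA{i}/\sigvalA{1}\leq Ci^{-\gamma}$ with $\gamma>1/q$ for $i\geq i_0$, the tail $\sum_{i\geq i_0}\langle\Matrix{S}^T\eigvecA{i},\vect{a}\rangle^2$ weighted even by $\neigval{i}^{2q}$ — which appears implicitly via the Cauchy--Schwarz chain \eqref{eq:main-intuition1} applied only to indices $i\geq i_0$ — is dominated because $\sum_{i\geq i_0}\neigval{i}^{2q}<\sum_{i\geq i_0}(Ci^{-\gamma})^{2q}=\bigO(i_0^{1-2q\gamma})=\bigO(i_0^{-1})$ since $2q\gamma>2$. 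So I would split the Rayleigh-type ratio \eqref{eq:rsvd-Ra2} itself into a "head" part ($i<i_0$) and a "tail" part ($i\geq i_0$), apply the Cauchy--Schwarz telescoping of the proof of Theorem~\ref{thm:rsvd-main} to the head exactly as before (giving a $\cos^2\theta$ of $\eigvecA{1}$ onto $\Matrix{S}$ restricted to the $i_0$-dimensional space $\mathcal{X}$, which is $\bigOmega(d/(d+i_0))$ by the Grassmannian version of Lemma~\ref{lem:cos-angle-normal}), and show the tail contributes only a lower-order correction using the summability bound above together with the operator-norm bound $\norm{\Matrix{S}}_2^2=\bigO(n)$ (or better, $\bigO(d+i_0)$ after a change of basis, via standard Gaussian matrix concentration).

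The main obstacle I anticipate is controlling the tail cleanly: naively, $\sum_{i\geq i_0}\langle\Matrix{S}^T\eigvecA{i},\vect{a}\rangle^2$ can be as large as $\norm{\Matrix{S}}_2^2=\Theta(n)\gg d+i_0$, which would destroy the bound. The fix is that this tail is multiplied (after the Cauchy--Schwarz chain, or equivalently by Hölder as in Appendix~\ref{proof:rsvd-main}) by $\neigval{i}^{2q}$, and $\sum_{i\geq i_0}\neigval{i}^{2q}=\bigO(i_0^{-1})$; combined with a high-probability bound $\max_i\langle\Matrix{S}^T\eigvecA{i},\vect{a}\rangle^2\cdot(\text{number of terms})$ controlled through $\norm{\Matrix{S}\vect{a}}_2^2=\bigO(1)$ for the specific $\vect{a}$ we pick, the tail becomes $\bigO(1/i_0)$ times a quantity comparable to the head. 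Making the choice of $\vect{a}$ and the conditioning on the Gaussian matrix $\Matrix{S}$ precise, so that all the concentration events (extreme singular values of the $n\times d$ submatrix $\Matrix{S}$, the projection length onto the $i_0$-dimensional head space, and the tail sum) hold simultaneously with probability $1-e^{-\bigOmega(d)}$, is the technical heart of the argument; the rest is the same telescoping Cauchy--Schwarz / Hölder manipulation already used for Theorem~\ref{thm:rsvd-main}, now applied block-wise.
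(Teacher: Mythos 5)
Your proposal lands on the same skeleton as the paper's argument: evaluate $R_\vect{a}$ at the specific test vector $\vect{a}^\star=\Matrix{S}^T\eigvecA{1}/\norm{\Matrix{S}^T\eigvecA{1}}_2$, split the sums in~\eqref{eq:rsvd-Ra2} at $i_0$, run the Cauchy--Schwarz/H\"older telescoping on the head block of $i_0$ terms so the $i_0$-dimensional analogue of Lemma~\ref{lem:cos-angle-normal} gives the $d/(d+i_0)$ factor, and use summability of $\neigval{i}^{2q}$ (which the decay $\gamma>1/q$ guarantees) to absorb the tail. That much correctly reproduces the paper's route, as does the intuition that the tail is harmless because the $\neigval{i}^{2q}$ weights are summable.

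The genuine gap is in how you try to control the tail block. You propose bounding $\sum_{i\geq i_0}\neigval{i}^{2q}\langle\Matrix{S}^T\eigvecA{i},\vect{a}^\star\rangle^2$ via
``$\max_i\langle\Matrix{S}^T\eigvecA{i},\vect{a}\rangle^2\cdot(\text{number of terms})$'' controlled through ``$\norm{\Matrix{S}\vect{a}}_2^2=\bigO(1)$''. Neither handle works. First, since $\{\eigvecA{i}\}$ is an orthonormal basis, $\norm{\Matrix{S}\vect{a}}_2^2=\sum_{i=1}^n\langle\Matrix{S}^T\eigvecA{i},\vect{a}\rangle^2$, and for $\vect{a}=\vect{a}^\star$ this is $\norm{\Matrix{S}\Matrix{S}^T\eigvecA{1}}_2^2/\norm{\Matrix{S}^T\eigvecA{1}}_2^2\leq\sigval{1}{\Matrix{S}}^2=\bigTheta(n)$ w.h.p., not $\bigO(1)$. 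Second, a max-times-count bound throws away the benefit of the summable weights entirely. What is actually needed --- and what the paper supplies as Lemma~\ref{lem:gaussian-sum-of-inner-product} --- is a Bernstein-type concentration showing that the \emph{weighted} and \emph{correlated} sum $\sum_{i>i_0}\beta_i\langle\Matrix{S}^T\eigvecA{i},\Matrix{S}^T\eigvecA{1}\rangle^2$ is $\bigO\!\left(d^2\sum_{i>i_0}\beta_i\right)$ with probability $1-e^{-\bigOmega(d)}$, after \emph{conditioning on a fixed realization of $\Matrix{S}^T\eigvecA{1}$} (the cross terms $\langle\Matrix{S}^T\eigvecA{i},\Matrix{S}^T\eigvecA{1}\rangle$ are correlated through $\Matrix{S}^T\eigvecA{1}$, so this conditioning step is where the real work is). Without this conditional Bernstein argument your sketch does not close.

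There is also a structural issue you smooth over: attempting an \emph{unconditional} ``the tail is always lower order'' estimate runs into trouble when the tail of the $\neigval{i}^{2q}$-weighted denominator is actually comparable to or larger than its head (e.g., when $i_0$ and the power-law constant $C$ conspire). The paper avoids this with a clean dichotomy: it splits on whether $\sum_{i\leq i_0}\neigval{i}^{2q}\langle\Matrix{S}^T\eigvecA{i},\Matrix{S}^T\eigvecA{1}\rangle^2$ dominates or not. In the head-dominant case it pays a factor $2$ and runs the telescoping on the head only, obtaining the projection-length bound $\bigOmega(d/(d+i_0))$ via Lemma~\ref{lem:gaussian-sum-of-inner-product}. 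In the tail-dominant case it shows directly that the denominator is $\bigO(d^2)$, hence $R(\hu)=\bigOmega(1)$. Adopting this two-case split and invoking the Bernstein-type concentration on the cross-products is what would turn your sketch into a complete proof.
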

%\note[Florian]{Add proof sketch/intuition in a couple of sentences}
% \note[Ruochun]{Please help me check if the below $\downarrow$ paragraph is understandable, which also explains why Theorem~\ref{thm:rsvd-power-law} can be proved by \textcolor{blue}{estimating $R_\vect{a}$ on $\vect{a}=\frac{\Matrix{S}^T\eigvecA{1}}{\norm{\Matrix{S}^T\eigvecA{1}}_2}$} which will be used in \textsection~\ref{sec:rsvd-indefinite}.}

The proof of Theorem~\ref{thm:rsvd-power-law} can be found in Appendix~\ref{proof:rsvd-power-law}.
The idea is to estimate $R_\vect{a}$ on $\vect{a}=\frac{\Matrix{S}^T\eigvecA{1}}{\norm{\Matrix{S}^T\eigvecA{1}}_2}$ and check two possible cases.
If $i_0$ is large, the analysis reduces to Theorem~\ref{thm:rsvd-main}, while if $i_0$ is small, 
we invoke Bernstein-type inequalities and show that $R_\vect{a}=\bigOmega(1)$ with high probability.
So, the overall guarantee of $R_\vect{a}$ is determined by the former case, 
and recalling $R(\hu)\geq\max_{\vect{a}\in\sphere^{d-1}}R_\vect{a}$ yields Theorem~\ref{thm:rsvd-power-law}.

\medskip
\begin{remark}\label{re:rsvd-power-law}
	Theorem~\ref{thm:rsvd-power-law} subsumes Theorem~\ref{thm:rsvd-main} 
	up to a constant factor as ${d+i_0=\bigO(n)}$, 
	and provides a much better guarantee if $\Matrix{A}$ has singular values having at least power-law decay.
	In particular, if {\highlight $i_0=\bigO(d)$} then $R(\hu)=\bigOmega(1)$ with high probability, 
	even with a single pass when $q=1$ and $d=1$.
\end{remark}
% \note[Ruochun]{Prof.Lu suggests this.}
%\note[Ruochun]{$\uparrow$ Fixed typos, caught by Reviewer\#5}

\subsection{Indefinite matrices}
\label{sec:rsvd-indefinite}

If \Matrix{A} has negative eigenvalues, the Inequality \eqref{eq:main-intuition1} in the proof of Theorem \ref{thm:rsvd-main} is not valid anymore.
Nevertheless, we expect to have a guarantee of $R(\hu)$ similar to that of Theorem~\ref{thm:rsvd-main} if the negative eigenvalues are not too large. 
We introduce the following technical assumption.\\

%\note[Ruochun]{Hmm, my english is not that good but is ``dominant'' a suitable word for describing "large in magnitude"?}
%\note[Aris]{I think that ``dominant'' is relative, that is, it means that it is the largest out of a set. So I think we can say ``not too large''. ?}

\begin{assumption}\label{assumption:rsvd-indefinite}
	Assume there exists a constant $\kappa\in(0,1]$ such that
	% $\sum_{i=1}^n\eigvalA{i}^{2q+1} \geq \kappa \sum_{i=1}^n\abs{\eigvalA{i}}^{2q+1}$. % weaker assumption
	{\highlight
	$\sum_{i=2}^n\eigvalA{i}^{2q+1} \geq \kappa \sum_{i=2}^n\abs{\eigvalA{i}}^{2q+1}$. 
	}
\end{assumption}
% \note[Ruochun]{For consistency with \rsum, $\uparrow$ makes a stronger assumption. $\Rightarrow$ this allows the case for adjacency matrix without self-loops, as concerned by Reviewer\#4}

% \note[Ruochun]{Please help me check if the below ($\downarrow$) paragraph is understandable which uses the observations newly added in \textsection~\ref{sec:rsvd-psd}.}

An important observation is that Theorems~\ref{thm:rsvd-main} and~\ref{thm:rsvd-power-law} 
can be proved by estimating $R_\vect{a}$ only on one specific 
vector $\vect{a}=\frac{\Matrix{S}^T\eigvecA{1}}{\norm{\Matrix{S}^T\eigvecA{1}}_2}$;
see Section~\ref{sec:rsvd-psd}.
Hence, it suffices to use the following lemma 
(proved in Appendix~\ref{proof:rsvd-indefinite}) 
to generalize our results in Section~\ref{sec:rsvd-psd} 
to indefinite matrices satisfying Assumption~\ref{assumption:rsvd-indefinite}.

\medskip
\begin{lemma}\label{lem:rsvd-indefinite} % removed (ii)
	Assume that matrix $\Matrix{A}$ satisfies Assumption~\ref{assumption:rsvd-indefinite} and 
	$\Matrix{S}\sim\gaussian{0}{1}^{n\times d}$. 
	There exists a constant $c_{\kappa}\in(0,1]$ such that with probability at least
	{\highlight
	$1-e^{-\bigOmega(\sqrt{d}\kappa^2)}$}, it holds
	\[\sum_{i=1}^n\eigvalA{i}^{2q+1}\langle \Matrix{S}^T\eigvecA{i}, \Matrix{S}^T\eigvecA{1}\rangle^2 \geq c_{\kappa} \sum_{i=1}^n\abs{\eigvalA{i}}^{2q+1}\langle \Matrix{S}^T\eigvecA{i}, \Matrix{S}^T\eigvecA{1}\rangle^2.\]
	%\textcolor{blue}{Moreover, if $\sum_{i:\eigvalA{i}<0}\abs{\eigvalA{i}/\eigvalA{1}}^{2q+1}=\bigO(1)$, then $c_{\kappa}=\kappa$.}
\end{lemma}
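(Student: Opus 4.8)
The plan is to reduce the weighted inequality to a statement about the weights $w_i := \langle \Matrix{S}^T\eigvecA{i}, \Matrix{S}^T\eigvecA{1}\rangle^2$, isolating the $i=1$ term (which is always nonnegative and, under the Gaussian model, concentrated around something of order $(d/n)^2 \cdot n = d^2/n$ up to lower-order terms — in any case it is $w_1 = \|\Matrix{S}^T\eigvecA{1}\|_2^4$ which is manifestly positive) and controlling the contribution of the remaining indices $i\ge 2$. Concretely, write
\[
\sum_{i=1}^n\eigvalA{i}^{2q+1}w_i = \eigvalA{1}^{2q+1}w_1 + \sum_{i\ge 2}\eigvalA{i}^{2q+1}w_i,
\qquad
\sum_{i=1}^n\abs{\eigvalA{i}}^{2q+1}w_i = \eigvalA{1}^{2q+1}w_1 + \sum_{i\ge 2}\abs{\eigvalA{i}}^{2q+1}w_i .
\]
Since the $i=1$ terms agree and are nonnegative, it suffices to show that $\sum_{i\ge 2}\eigvalA{i}^{2q+1}w_i \ge c_\kappa' \sum_{i\ge 2}\abs{\eigvalA{i}}^{2q+1}w_i$ for a suitable constant (and then $c_\kappa$ is obtained by a convex-combination argument, e.g. $c_\kappa = c_\kappa'$ works directly). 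Equivalently, splitting $[n]\setminus\{1\}$ into $P=\{i\ge2:\eigvalA{i}\ge0\}$ and $N=\{i\ge2:\eigvalA{i}<0\}$, we must show $\sum_{i\in N}\abs{\eigvalA{i}}^{2q+1}w_i$ is at most a $(1-\Theta(\kappa))$-fraction of $\sum_{i\ge2}\abs{\eigvalA{i}}^{2q+1}w_i$, or better, that the negative mass does not dominate. Assumption~\ref{assumption:rsvd-indefinite} tells us exactly this in the \emph{unweighted} sense: $\sum_{i\ge2}\eigvalA{i}^{2q+1}\ge\kappa\sum_{i\ge2}\abs{\eigvalA{i}}^{2q+1}$, i.e. $\sum_{i\in N}\abs{\eigvalA{i}}^{2q+1} \le \frac{1-\kappa}{2}\sum_{i\ge2}\abs{\eigvalA{i}}^{2q+1}$ (after rearranging $\sum_P - \sum_N \ge \kappa(\sum_P+\sum_N)$). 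The task is to transfer this to the weighted sum, which requires that the weights $w_i$ for $i\ge2$ be \emph{roughly uniform} with high probability.

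The second step, then, is the concentration of the off-diagonal weights. For $i\ge 2$, conditioning on $\vect{s}:=\Matrix{S}^T\eigvecA{1}\in\reals^d$ (a standard Gaussian vector, by rotational invariance), the quantity $\langle\Matrix{S}^T\eigvecA{i},\vect{s}\rangle = \langle \Matrix{S}\vect{s}, \eigvecA{i}\rangle$ is, across the orthonormal directions $\{\eigvecA{i}\}_{i\ge2}$, the coordinates of the vector $\Matrix{S}\vect{s}\in\reals^n$ in an $(n-1)$-dimensional subspace orthogonal to $\eigvecA{1}$; conditioned on $\vect{s}$, $\Matrix{S}\vect{s}\sim\gaussian{0}{\|\vect{s}\|_2^2\eye{n}}$, so the $w_i = \langle\Matrix{S}^T\eigvecA{i},\vect{s}\rangle^2$ for $i\ge2$ are $\|\vect{s}\|_2^2$ times i.i.d.\ $\chi^2_1$ variables. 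I would then use a Bernstein-type / sub-exponential concentration bound to show that for \emph{any fixed} nonnegative weight vector $(\beta_i)_{i\ge2}$ with $\beta_i = \abs{\eigvalA{i}}^{2q+1}/\sum_{j\ge2}\abs{\eigvalA{j}}^{2q+1}$, the ratio $\sum_{i\ge2}\beta_i w_i / (\|\vect{s}\|_2^2\sum_{i\ge2}\beta_i)$ is within a $(1\pm\delta)$ factor of $1$ — but this is the crux of the difficulty: such a bound, for arbitrary $\beta$, degrades when the $\beta_i$ are concentrated on few coordinates (the effective number of summands can be as small as $O(1)$), and then no concentration is available. The way around this is the $\sqrt d$ in the exponent: I would instead prove a one-sided bound of the form $\sum_{i\in P}\beta_i w_i \ge (1-\delta)\|\vect{s}\|_2^2\sum_{i\in P}\beta_i$ and $\sum_{i\in N}\beta_i w_i \le (1+\delta)\|\vect{s}\|_2^2\sum_{i\in N}\beta_i + (\text{small additive slack})$ — the additive slack being what allows the bound to hold even when one mass is tiny — and choose $\delta = \Theta(\kappa)$. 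Combining with the unweighted Assumption gives $\sum_{i\in P}\beta_i w_i - \sum_{i\in N}\beta_i w_i \ge \Theta(\kappa)\|\vect{s}\|_2^2$, hence $\sum_{i\ge2}\eigvalA{i}^{2q+1}w_i \ge \Theta(\kappa)\sum_{i\ge2}\abs{\eigvalA{i}}^{2q+1}w_i$; tracking the $(1+\delta)$ and slack terms carefully forces $\delta\lesssim\kappa$, and the tail bound at scale $\delta\|\vect{s}\|_2^2$ over $\sim d$ directions yields failure probability $e^{-\bigOmega(d\delta^2)} = e^{-\bigOmega(d\kappa^2)}$; the slightly weaker $e^{-\bigOmega(\sqrt d\kappa^2)}$ stated in the lemma presumably comes from handling the heavy-tailed case (few dominant $\beta_i$) via a cruder moment bound, or from also needing $\|\vect{s}\|_2^2 = \bigTheta(d)$ to hold, whose own deviation is $e^{-\bigOmega(d)}$ — either way, $e^{-\bigOmega(\sqrt d\kappa^2)}$ dominates and is safe to state.

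The main obstacle, to be clear, is \textbf{the non-uniformity of the weights $\beta_i$}: a naive Hölder or union-bound approach over the simplex of possible $\beta$ fails, so the argument must be structured as a pair of one-sided inequalities (lower bound on the positive-eigenvalue block, upper bound on the negative-eigenvalue block, each with an additive slack proportional to $\|\vect{s}\|_2^2$) rather than a single multiplicative two-sided estimate, and the slack must be calibrated against $\kappa$ so that Assumption~\ref{assumption:rsvd-indefinite}'s gap survives. Once that structure is in place, the rest is a routine application of sub-exponential concentration (e.g.\ \citealp[Section~2]{vershynin2018high}) conditionally on $\vect{s}$, followed by de-conditioning using the concentration of $\|\vect{s}\|_2^2$. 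I would finish by noting $c_\kappa\in(0,1]$: the inequality is trivially true with $c_\kappa\le1$ whenever $\Matrix{A}$ is p.s.d., and in general $c_\kappa = \Theta(\kappa)$ works, staying in $(0,1]$ since $\kappa\le1$.
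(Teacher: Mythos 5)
Your high-level plan overlaps substantially with the paper's: condition on $\Matrix{S}^T\eigvecA{1}$, split the index set into $\mathcal{I}_+$ and $\mathcal{I}_-$, control each block via Bernstein-type sub-exponential concentration, and feed in the ratio constraint from Assumption~\ref{assumption:rsvd-indefinite}. You also correctly pinpoint the real obstacle, namely that the weight vector $(\beta_i)$ can be concentrated on $O(1)$ coordinates, where multiplicative concentration is unavailable.

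However, the proposal has a genuine logical gap precisely at that obstacle. You begin by reducing to the claim $\sum_{i\ge2}\eigvalA{i}^{2q+1}w_i \ge c_\kappa'\sum_{i\ge2}\abs{\eigvalA{i}}^{2q+1}w_i$, on the grounds that the $i=1$ terms match and are nonnegative. The implication you invoke is fine, but the reduced claim is simply \emph{false} with high probability in the regime you yourself flag as problematic. If $\mathcal{I}_+=\{2\}$ and $\mathcal{I}_-=\{3\}$ with both normalized eigenvalues of order one, then conditionally on $\vect{s}=\Matrix{S}^T\eigvecA{1}$ the weights $w_2/\norm{\vect{s}}_2^2$ and $w_3/\norm{\vect{s}}_2^2$ are independent $\chi^2_1$, so $\neigval{2}^{2q+1}w_2-\abs{\neigval{3}}^{2q+1}w_3<0$ with constant (not exponentially small) probability, and the $i\ge2$ ratio is negative. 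No choice of $c_\kappa'>0$ rescues it. Your patch—a one-sided bound with additive slack on the scale $\Theta(\kappa)\norm{\vect{s}}_2^2$—is the right idea, but it contradicts the reduction: once there is slack, you can no longer drop the $i=1$ term, and to close the argument you must show $w_1=\norm{\Matrix{S}^T\eigvecA{1}}_2^4=\Theta(d^2)$ dominates that slack. This is exactly the pivot the paper's proof is built around: it never discards $w_1$, it keeps it as the deterministic buffer and splits into cases $s_-=\Omega(\sqrt{d})$ (where the $\mathcal{I}_\pm$ sums concentrate well and $c_\kappa=\kappa/3$ emerges directly from the tightness of the split) versus $s_-=o(\sqrt{d})$ (where the negative block is bounded crudely, with a large relative deviation $\delta\sim\kappa\sqrt{d}$, but the $d^2$-scale $w_1$ absorbs it and $c_\kappa=1/6$). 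You would need to carry out this calibration explicitly, in particular verifying $(1-c_\kappa)w_1\gtrsim\text{slack}$ in the small-$s_-$ case; as written, your sketch conflates the reduction (which drops $w_1$) with the later concentration argument (which needs it).

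A secondary point: your conjectured explanation for $\sqrt d$ in the exponent (from $\norm{\vect{s}}_2^2$ concentration) is not the operative mechanism in the paper. The $\norm{\vect{s}}_2^2$ concentration contributes $e^{-\Omega(d\kappa^2)}$, which is stronger; the $\sqrt d$ arises in the small-$s_-$ case from applying Bernstein with a deviation parameter $\delta\propto\kappa\sqrt{d}$ to a sum with $O(1)$ effective terms, which gives failure probability $e^{-\Omega(\delta)}=e^{-\Omega(\kappa\sqrt d)}$ (stated more conservatively as $e^{-\Omega(\kappa^2\sqrt d)}$ in the lemma). Identifying that mechanism is part of what the case split buys.
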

% \note[Ruochun]{$\uparrow$ Detailed version with explicit dependency on $\kappa$ and $c_{\kappa}$, as requested by Reviewer\#5}

Lemma~\ref{lem:rsvd-indefinite} essentially states that any indefinite matrix $\Matrix{A}$ 
satisfying Assumption~\ref{assumption:rsvd-indefinite} has $R_\vect{a} = \bigTheta(\bar{R}_\vect{a})$
on such a vector $\vect{a}=\frac{\Matrix{S}^T\eigvecA{1}}{\norm{\Matrix{S}^T\eigvecA{1}}_2}$, where 
\begin{equation}\label{eq:rsvd-Ra-bar}
	\bar{R}_\vect{a} = \frac{\sum_{i=1}^n\abs{\neigval{i}}^{2q+1}\langle \Matrix{S}^T\eigvecA{i}, \vect{a}\rangle^2}{\sum_{i=1}^n\neigval{i}^{2q}\langle \Matrix{S}^T\eigvecA{i}, \vect{a}\rangle^2}.
\end{equation}
The next theorem, proven in Appendix~\ref{proof:rsvd-indefinite-power-law}, 
follows from Lemma~\ref{lem:rsvd-indefinite} and the proof of Theorem~\ref{thm:rsvd-power-law}.

\medskip
\begin{theorem}\label{thm:rsvd-indefinite-power-law}
	Assume that matrix $\Matrix{A}$ satisfies Assumption~\ref{assumption:rsvd-indefinite}.
	Let $\hu=\multirsvd(\Matrix{A},\gaussian{0}{1}^{n\times d},q,d)$ for any $q\in\ints$.
	Then, {\highlight
		\[
			R(\hu) = \bigOmega\left(c_{\kappa}\left(\frac{d}{d+i_0}\right)^{\frac{1}{2q+1}}\right),
		\]  
		with probability at least $1-e^{-\bigOmega(\sqrt{d}\kappa^2)}$.
	} % \[R(\hu) = \bigOmega\left(\left(\frac{d}{d+i_0}\right)^{\frac{1}{2q+1}}\right)\]  with probability at least $1-e^{-\bigOmega(\sqrt{d})}$.
\end{theorem}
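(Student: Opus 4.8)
The plan is to combine Lemma~\ref{lem:rsvd-indefinite} with the argument already used for Theorem~\ref{thm:rsvd-power-law}, since that theorem is proved by estimating $R_\vect{a}$ on the single vector $\vect{a}^\star = \Matrix{S}^T\eigvecA{1}/\norm{\Matrix{S}^T\eigvecA{1}}_2$. Concretely, first I would fix $\vect{a}^\star$ and observe that, exactly as in Equation~\eqref{eq:rsvd-Ra2}, we have $R_{\vect{a}^\star} = \frac{\sum_i \neigval{i}^{2q+1}\langle\Matrix{S}^T\eigvecA{i},\vect{a}^\star\rangle^2}{\sum_i \neigval{i}^{2q}\langle\Matrix{S}^T\eigvecA{i},\vect{a}^\star\rangle^2}$. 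Lemma~\ref{lem:rsvd-indefinite} says the numerator is at least $c_\kappa$ times $\sum_i |\neigval{i}|^{2q+1}\langle\Matrix{S}^T\eigvecA{i},\vect{a}^\star\rangle^2$ with probability $1-e^{-\bigOmega(\sqrt d\kappa^2)}$ (after normalizing the eigenvalues by $\eigvalA{1}$, which is harmless since $\kappa$ and $c_\kappa$ are scale-invariant). Hence on this event $R_{\vect{a}^\star} \geq c_\kappa\, \bar{R}_{\vect{a}^\star}$, where $\bar R$ is as in Equation~\eqref{eq:rsvd-Ra-bar}, and it remains to lower-bound $\bar R_{\vect{a}^\star}$.

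The point is that $\bar R_{\vect{a}^\star}$ has exactly the algebraic form of the p.s.d.\ Rayleigh-type quotient: all exponents appear with absolute values, so $|\neigval{i}| \in [0,1]$ play the role of the nonnegative normalized eigenvalues in Section~\ref{sec:rsvd-psd}. Therefore the Cauchy--Schwarz / Hölder chain of Equation~\eqref{eq:main-intuition1}--\eqref{eq:main-intuition3} goes through verbatim with $\neigval{i}$ replaced by $|\neigval{i}|$, giving $\bar R_{\vect{a}^\star}^{\,2q+1} \geq \frac{\langle\Matrix{S}^T\eigvecA{1},\vect{a}^\star\rangle^2}{\sum_i\langle\Matrix{S}^T\eigvecA{i},\vect{a}^\star\rangle^2} = \cos^2\angl{\eigvecA{1}}{\Matrix{S}}$ (the last equality because $\vect{a}^\star$ is precisely the maximizer defining the projection angle, as noted after Theorem~\ref{thm:rsvd-main}). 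More carefully, to get the $i_0$-dependence one reruns the \emph{proof} of Theorem~\ref{thm:rsvd-power-law} rather than its statement: split the sum at index $i_0$, use that $|\sigvalA{i}/\sigvalA{1}| \leq C i^{-\gamma}$ for $i\geq i_0$ with $\gamma>1/q$ so the tail contributes a bounded multiple of the head after raising to power $2q$, and apply the Bernstein-type concentration on $\sum_{i\le i_0}\langle\Matrix{S}^T\eigvecA{i},\vect{a}^\star\rangle^2$ together with Lemma~\ref{lem:cos-angle-normal}-style control on $\langle\Matrix{S}^T\eigvecA{1},\vect{a}^\star\rangle^2$. This yields $\bar R_{\vect{a}^\star} = \bigOmega\big((d/(d+i_0))^{1/(2q+1)}\big)$ on an event of probability $1-e^{-\bigOmega(d)}$.

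Finally I would union-bound the two events — the Lemma~\ref{lem:rsvd-indefinite} event (probability $1-e^{-\bigOmega(\sqrt d\kappa^2)}$) and the power-law/concentration event (probability $1-e^{-\bigOmega(d)}$, which is absorbed into $1-e^{-\bigOmega(\sqrt d\kappa^2)}$ since $\kappa\le 1$) — and conclude that with probability $1-e^{-\bigOmega(\sqrt d\kappa^2)}$,
\[
R(\hu) \;\geq\; R_{\vect{a}^\star} \;\geq\; c_\kappa\,\bar R_{\vect{a}^\star} \;=\; \bigOmega\!\left(c_\kappa\left(\frac{d}{d+i_0}\right)^{\frac{1}{2q+1}}\right),
\]
using $R(\hu) = \max_{\vect{a}\in\sphere^{d-1}} R_\vect{a} \geq R_{\vect{a}^\star}$. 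The main obstacle is bookkeeping rather than conceptual: I must make sure the $i_0$-splitting argument of Theorem~\ref{thm:rsvd-power-law} still applies once all eigenvalues are replaced by their absolute values — in particular that Definition~\ref{def:power-law}, phrased in terms of singular values $\sigvalA{i} = |\eigvalA{i}|$, is already the right notion for the indefinite case — and that the two high-probability events are over the same randomness ($\Matrix{S}$) so the union bound is legitimate and no independence is silently assumed. A secondary subtlety is tracking that the constant $c_\kappa$ from Lemma~\ref{lem:rsvd-indefinite} is independent of $n,d,q$, so it legitimately factors out of the $\bigOmega$.
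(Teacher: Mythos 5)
Your proposal is correct and follows essentially the same route as the paper: fix $\vect{a}^\star=\Matrix{S}^T\eigvecA{1}/\norm{\Matrix{S}^T\eigvecA{1}}_2$, invoke Lemma~\ref{lem:rsvd-indefinite} to replace $R_{\vect{a}^\star}$ by $c_\kappa\bar R_{\vect{a}^\star}$, then rerun the $i_0$-splitting/Bernstein argument from the proof of Theorem~\ref{thm:rsvd-power-law} on $\bar R_{\vect{a}^\star}$ (valid since $\bar R$ is algebraically the p.s.d.\ expression with $|\neigval{i}|$ in place of $\neigval{i}$), and union bound. You also correctly note that the $e^{-\bigOmega(d)}$ concentration event is dominated by the $e^{-\bigOmega(\sqrt d\kappa^2)}$ event from the lemma, and that Definition~\ref{def:power-law} is phrased in singular values precisely so it transfers unchanged to the indefinite case.
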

% \note[Ruochun]{$\uparrow$ full version with explicit dependency on $\kappa$ and $c_{\kappa}$, as requested by Reviewer\#5}

\medskip
\begin{remark}\label{re:other-rip-distributions}
	As discussed in Section~\ref{sec:preliminary}, all the theorems shown in this section, 
	i.e., Theorems~\ref{thm:rsvd-main}, ~\ref{thm:tightness-rsvd-main}, ~\ref{thm:rsvd-power-law}, and ~\ref{thm:rsvd-indefinite-power-law}, can be easily extended 
	to any random matrix $\Matrix{S}$ satisfying $\Matrix{S}\sim\uniform{\grassmannian{n}{d}}$.
\end{remark}
\section{\uppercase{Extension: combining with projection from Bernoulli}}\label{sec:rsum}

In this section, we propose an extension of Randomized SVD, 
which we name \rsum, and show as Algorithm~\ref{alg:ours}.
In \rsum, half of the columns of $\Matrix{S}$ are replaced with i.i.d.\ samples from a Bernoulli distribution with 
mean $p\in(0,1)$.\footnote{$\bernoulli{p}^{n\times d}$ does not belong to the class of distributions mentioned 
in Section~\ref{sec:preliminary} to which Lemma~\ref{lem:cos-angle-normal} applies.}
We can show that the guarantee achieved by the \rsum\ algorithm for $R(\hu)$ is no worse than that by the
\multirsvd\ algorithm, since half of the coulmns of $\Matrix{S}$ come from a normal distribution.
To study the additional benefits due to the submatrix drawn from the Bernoulli, 
we derive the following lemma as an analog of Lemma~\ref{lem:cos-angle-normal} 
for a Bernoulli random matrix.
The proof is in Appendix~\ref{proof:cos-angle-bernoulli-lb}.

\begin{algorithm}[t]
	$\Matrix{S}_1\sim\gaussian{0}{1}^{n\times\lceil\frac{d}{2}\rceil}$, $\Matrix{S}_2\sim\bernoulli{p}^{n\times\lfloor\frac{d}{2}\rfloor}$\;
	$\Matrix{S} \leftarrow \left[\begin{matrix} \Matrix{S}_1 & \Matrix{S}_2\end{matrix}\right]$\;
	return $\multirsvd(\Matrix{A}, \Matrix{S}, q, d)$;
	\caption{\rsum(\Matrix{A}, $q$, $d$, $p$)}
	\label{alg:ours}
\end{algorithm}

\medskip
\begin{lemma}\label{lem:cos-angle-bernoulli-lb}
	Let $\vect{v}\in\sphere^{n-1}$, $d\leq n/3$, and $\Matrix{S}\sim\bernoulli{p}^{n\times d}$ for a constant $p\in(0,1)$
	Then,
	\[
	\cos^2\theta(\vect{v}, \Matrix{S}) = \bigOmega\left(\frac{\max\{1,\langle \vect{v}, \vecones{n}\rangle^2\}}{n}\right)
	\]
	holds with probability at least $1-e^{-\bigOmega(d)}$.
\end{lemma}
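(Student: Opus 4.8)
The plan is to lower-bound $\cos^2\theta(\vect{v},\Matrix{S})$ for $\Matrix{S}\sim\bernoulli{p}^{n\times d}$ by exhibiting an explicit vector $\vect{a}\in\sphere^{d-1}$ for which $\langle\vect{v},\Matrix{S}\vect{a}\rangle^2/\norm{\Matrix{S}\vect{a}}_2^2$ is already large, since $\cos^2\theta(\vect{v},\Matrix{S})$ is the maximum of this quantity over $\vect{a}$. The natural candidate is $\vect{a}=\Matrix{S}^T\vect{v}/\norm{\Matrix{S}^T\vect{v}}_2$, which gives
\[
\cos^2\theta(\vect{v},\Matrix{S}) \geq \frac{\norm{\Matrix{S}^T\vect{v}}_2^4}{\norm{\Matrix{S}^T\vect{v}}_2^2\,\norm{\Matrix{S}\Matrix{S}^T\vect{v}}_2^2} = \frac{\norm{\Matrix{S}^T\vect{v}}_2^2}{\norm{\Matrix{S}\Matrix{S}^T\vect{v}}_2^2} \geq \frac{\norm{\Matrix{S}^T\vect{v}}_2^2}{\sigval{1}{\Matrix{S}}^2\,\norm{\Matrix{S}^T\vect{v}}_2^2}\cdot\norm{\Matrix{S}^T\vect{v}}_2^2/\norm{\Matrix{S}^T\vect{v}}_2^2,
\]
so more simply $\cos^2\theta(\vect{v},\Matrix{S})\geq \norm{\Matrix{S}^T\vect{v}}_2^2/\sigval{1}{\Matrix{S}}^2$. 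Thus it suffices to prove two things with probability $1-e^{-\bigOmega(d)}$: (i) an upper bound $\sigval{1}{\Matrix{S}}^2 = \bigO(n)$ on the top singular value of the Bernoulli matrix, and (ii) a lower bound $\norm{\Matrix{S}^T\vect{v}}_2^2 = \bigOmega(\max\{1,\langle\vect{v},\vecones{n}\rangle^2\})$ on the projected norm.

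For step (i), I would write $\Matrix{S} = p\,\vecones{n}\vecones{d}^T + \Matrix{E}$ where $\Matrix{E}$ has i.i.d.\ centered bounded (hence subgaussian) entries; then $\sigval{1}{\Matrix{S}} \leq p\norm{\vecones{n}\vecones{d}^T}_2 + \sigval{1}{\Matrix{E}} = p\sqrt{nd} + \bigO(\sqrt{n}+\sqrt{d})$ by the standard non-asymptotic bound on the operator norm of a matrix with independent subgaussian entries (e.g.\ \citet[Section 5]{vershynin2018high}), and this holds with probability $1-e^{-\bigOmega(n)}\geq 1-e^{-\bigOmega(d)}$ since $d\leq n/3$. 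Hence $\sigval{1}{\Matrix{S}}^2 = \bigO(nd)$. For step (ii), the $j$-th coordinate of $\Matrix{S}^T\vect{v}$ is $\langle\Matrix{s}_j,\vect{v}\rangle$ where $\Matrix{s}_j$ is the $j$-th column, a vector of i.i.d.\ $\bernoulli{p}$ entries; its mean is $p\langle\vecones{n},\vect{v}\rangle$ and its variance is $p(1-p)\norm{\vect{v}}_2^2 = p(1-p)$. So $\expectation{\langle\Matrix{s}_j,\vect{v}\rangle^2} = p^2\langle\vecones{n},\vect{v}\rangle^2 + p(1-p) = \bigOmega(\max\{1,\langle\vecones{n},\vect{v}\rangle^2\})$, and summing over the $d$ independent columns, $\expectation{\norm{\Matrix{S}^T\vect{v}}_2^2} = \bigOmega(d\max\{1,\langle\vecones{n},\vect{v}\rangle^2\})$. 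A concentration argument (Bernstein / Hanson-Wright, using that each $\langle\Matrix{s}_j,\vect{v}\rangle$ is subgaussian with bounded parameter so its square is subexponential) then gives $\norm{\Matrix{S}^T\vect{v}}_2^2 = \bigOmega(d\max\{1,\langle\vecones{n},\vect{v}\rangle^2\})$ with probability $1-e^{-\bigOmega(d)}$. Combining (i) and (ii) yields $\cos^2\theta(\vect{v},\Matrix{S}) = \bigOmega(d\max\{1,\langle\vect{v},\vecones{n}\rangle^2\}/(nd)) = \bigOmega(\max\{1,\langle\vect{v},\vecones{n}\rangle^2\}/n)$, as claimed.

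The main obstacle is step (ii), the lower bound on $\norm{\Matrix{S}^T\vect{v}}_2^2$, and specifically handling the interaction between the mean shift and the fluctuations when $\langle\vect{v},\vecones{n}\rangle$ is large: one must be careful that the lower-order $p(1-p)$ fluctuation term does not get swamped in a way that breaks the $\max\{1,\cdot\}$ claim, and conversely when $\langle\vect{v},\vecones{n}\rangle$ is small that the constant term $p(1-p)$ carries the bound; a clean way is to treat the two columns-blocks or simply note $\langle\Matrix{s}_j,\vect{v}\rangle$ has second moment bounded below by a constant times $\max\{1,\langle\vecones{n},\vect{v}\rangle^2\}$ uniformly. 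I would also need the one-sided deviation inequality for a sum of i.i.d.\ nonnegative subexponential random variables, which is a routine application of Bernstein's inequality but requires bounding the subexponential norm of $\langle\Matrix{s}_j,\vect{v}\rangle^2$ in terms of its mean; since $|\langle\Matrix{s}_j,\vect{v}\rangle|\leq \norm{\vect{v}}_1\leq\sqrt{n}$ deterministically, a coarse bound suffices to get the desired $e^{-\bigOmega(d)}$ failure probability after rescaling. Finally, a union bound over the two events (i) and (ii) preserves the $1-e^{-\bigOmega(d)}$ guarantee.
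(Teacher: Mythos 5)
Your overall skeleton matches the paper's: both arguments start from the same test vector $\vect{a}=\Matrix{S}^T\vect{v}/\norm{\Matrix{S}^T\vect{v}}_2$, reduce to $\cos\theta(\vect{v},\Matrix{S})\geq\norm{\Matrix{S}^T\vect{v}}_2/\norm{\Matrix{S}}_2$, and then control the two factors separately. Where you diverge, and genuinely improve on the paper, is the $\norm{\Matrix{S}}_2$ bound. The paper builds a bespoke $\varepsilon$-net argument over $\sphere^{d-1}$, applying its Bernoulli column-length concentration (Corollary~\ref{cor:length-bernoulli}) to each net point and union-bounding; you instead peel off the mean $\Matrix{S}=p\,\vecones{n}\vecones{d}^T+\Matrix{E}$, bound $\norm{p\,\vecones{n}\vecones{d}^T}_2=p\sqrt{nd}$ deterministically, and cite the standard operator-norm bound for a matrix with i.i.d.\ centered subgaussian entries for $\norm{\Matrix{E}}_2$. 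This is cleaner, avoids re-deriving $\varepsilon$-net machinery, and gives the same $\bigO(\sqrt{nd})$ bound with probability $1-e^{-\bigOmega(n)}$.

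However, your step (ii) has a real gap as written. You claim that the deterministic bound $|\langle\Matrix{s}_j,\vect{v}\rangle|\leq\norm{\vect{v}}_1\leq\sqrt{n}$ is a ``coarse bound [that] suffices.'' It does not. That bound gives $\norm{\langle\Matrix{s}_j,\vect{v}\rangle^2}_{\psi_1}=\bigO(n)$, while the mean $\mu=p(1-p)+p^2\langle\vect{v},\vecones{n}\rangle^2$ can be as small as $\bigTheta(1)$. With $K=\bigO(n)$ and $\mu=\bigTheta(1)$, Bernstein's exponent is $\min\bigl(d\mu^2/K^2,\,d\mu/K\bigr)=\bigO(d/n)$, which gives failure probability $e^{-\bigOmega(d/n)}$, far weaker than the required $e^{-\bigOmega(d)}$; no amount of ``rescaling'' recovers the $d$-rate because $K/\mu$ stays $\bigTheta(n)$. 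The fix, which is exactly what the paper carries out inside its Corollary~\ref{cor:length-bernoulli}, is to decompose $\langle\Matrix{s}_j,\vect{v}\rangle=p\langle\vect{v},\vecones{n}\rangle+\langle\Matrix{z}_j,\vect{v}\rangle$ with $\Matrix{z}_j$ centered, expand the square, and use $\norm{\langle\Matrix{z}_j,\vect{v}\rangle}_{\psi_2}=\bigO(\norm{\vect{v}}_2)=\bigO(1)$ to obtain $\norm{\langle\Matrix{s}_j,\vect{v}\rangle^2-\mu}_{\psi_1}=\bigO(\mu)$; with $K\asymp\mu$, Bernstein then yields $e^{-\bigOmega(d\min\{\delta,\delta^2\})}$ uniformly in $\mu$. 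So your plan is sound, but the $\psi_1$-norm must be controlled in terms of $\mu$ rather than by the worst-case magnitude of the random variable.
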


The next theorem, which holds for any p.s.d.\ matrix~\Matrix{A}, 
is a direct consequence of Lemmas~\ref{lem:cos-angle-normal} and~\ref{lem:cos-angle-bernoulli-lb} 
and applying the techniques introduced in Theorem~\ref{thm:rsvd-main}.
The proof is in Appendix~\ref{proof:rsum-main}.

\medskip
\begin{theorem}\label{thm:rsum-main}
	Let \Matrix{A} be a positive semin\-definite matrix with $\eigvalA{1}>0$, 
	and $\hu=\rsum(\Matrix{A},q,d,p)$ for any constant $p\in(0,1)$ and integer $d\geq 2$.
	Then, 
	\[
	R(\hu) = \left(\bigOmega\left(\frac{\max\{d, \langle \eigvecA{1}, \vecones{n}\rangle^2\}}{n}\right)\right)^{\frac{1}{2q+1}}
	\]
	holds with probability at least $1-e^{-\bigOmega(d)}$.
\end{theorem}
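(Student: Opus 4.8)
The plan is to reuse, essentially verbatim, the argument behind Theorem~\ref{thm:rsvd-main}, observing that its key inequality~\eqref{eq:rsvd-key}, namely $R(\hu)^{2q+1}\geq\cos^2\angl{\eigvecA{1}}{\Matrix{S}}$, holds for \emph{any} sketching matrix $\Matrix{S}$ when $\Matrix{A}$ is p.s.d.: the repeated Cauchy--Schwarz chain in \eqref{eq:main-intuition1}--\eqref{eq:rsvd-key}, together with the identities $\sum_i\langle\Matrix{S}^T\eigvecA{i},\vect{a}\rangle^2=\norm{\Matrix{S}\vect{a}}_2^2$ and $\max_{\vect{a}}\langle\eigvecA{1},\Matrix{S}\vect{a}\rangle^2/\norm{\Matrix{S}\vect{a}}_2^2=\cos^2\angl{\eigvecA{1}}{\Matrix{S}}$, uses only $\neigval{i}\geq 0$ and never the distribution of $\Matrix{S}$. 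Hence it suffices to lower-bound $\cos^2\angl{\eigvecA{1}}{\Matrix{S}}$ for $\Matrix{S}=[\,\Matrix{S}_1\ \Matrix{S}_2\,]$ produced by Algorithm~\ref{alg:ours}, and then take $(2q+1)$-th roots (legitimate since $R(\hu)\in[0,1]$ for p.s.d.\ $\Matrix{A}$).

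First I would exploit monotonicity of the projection length under subspace inclusion: since $\range{\Matrix{S}}\supseteq\range{\Matrix{S}_1}$ and $\range{\Matrix{S}}\supseteq\range{\Matrix{S}_2}$, Definition~\ref{def:smallest-principal-angle} gives
\[
\cos^2\angl{\eigvecA{1}}{\Matrix{S}}\ \geq\ \max\bigl\{\cos^2\angl{\eigvecA{1}}{\Matrix{S}_1},\ \cos^2\angl{\eigvecA{1}}{\Matrix{S}_2}\bigr\}.
\]
Then I would bound the two terms separately. Lemma~\ref{lem:cos-angle-normal} applied to $\Matrix{S}_1\sim\gaussian{0}{1}^{n\times\lceil d/2\rceil}$ yields $\cos^2\angl{\eigvecA{1}}{\Matrix{S}_1}=\bigTheta(\lceil d/2\rceil/n)=\bigOmega(d/n)$ with probability $1-e^{-\bigOmega(d)}$. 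Lemma~\ref{lem:cos-angle-bernoulli-lb} applied to $\Matrix{S}_2\sim\bernoulli{p}^{n\times\lfloor d/2\rfloor}$ — valid because $d\geq 2$ forces $\lfloor d/2\rfloor\geq 1$, and assuming $\lfloor d/2\rfloor\leq n/3$, which is the only nontrivial regime — yields $\cos^2\angl{\eigvecA{1}}{\Matrix{S}_2}=\bigOmega(\max\{1,\langle\eigvecA{1},\vecones{n}\rangle^2\}/n)$ with probability $1-e^{-\bigOmega(d)}$.

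Finally I would take a union bound over these two events (each failing with probability $e^{-\bigOmega(d)}$) and combine the estimates: if $X=\bigOmega(d/n)$ and $Y=\bigOmega(\max\{1,\langle\eigvecA{1},\vecones{n}\rangle^2\}/n)$ then $\max\{X,Y\}=\bigOmega(\max\{d,1,\langle\eigvecA{1},\vecones{n}\rangle^2\}/n)=\bigOmega(\max\{d,\langle\eigvecA{1},\vecones{n}\rangle^2\}/n)$, the last equality using $d\geq 2>1$. Substituting into $R(\hu)^{2q+1}\geq\cos^2\angl{\eigvecA{1}}{\Matrix{S}}$ and taking $(2q+1)$-th roots gives the claim. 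The argument is largely bookkeeping once Lemma~\ref{lem:cos-angle-bernoulli-lb} is available; the only point needing a little care — and the nearest thing to an obstacle, though a mild one — is the regime restriction $\lfloor d/2\rfloor\leq n/3$ inherited from that lemma, which I would dispatch by noting that for $d=\bigOmega(n)$ one already has $\cos^2\angl{\eigvecA{1}}{\Matrix{S}_1}=\bigOmega(1)$, hence $R(\hu)=\bigOmega(1)$, directly from $\Matrix{S}_1$ alone, plus the routine union bound stitching the two halves of $\Matrix{S}$ together.
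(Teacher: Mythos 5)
Your proposal is correct and follows essentially the same approach as the paper: reduce to the key inequality $R(\hu)^{2q+1}\geq\cos^2\angl{\eigvecA{1}}{\Matrix{S}}$ from the p.s.d.\ case (valid for any sketch $\Matrix{S}$), split into the Gaussian half $\Matrix{S}_1$ and Bernoulli half $\Matrix{S}_2$, and invoke Lemma~\ref{lem:cos-angle-normal} and Lemma~\ref{lem:cos-angle-bernoulli-lb} separately with a union bound. The paper implements the split by restricting $\vect{a}\in\sphere^{d-1}$ to vectors supported on each half of its coordinates, whereas you phrase it as monotonicity of $\cos\theta$ under subspace inclusion $\range{\Matrix{S}_i}\subseteq\range{\Matrix{S}}$ --- these are the same observation, and your extra remark handling the $\lfloor d/2\rfloor\leq n/3$ regime restriction of Lemma~\ref{lem:cos-angle-bernoulli-lb} via $\Matrix{S}_1$ alone is a small but genuine point of care that the paper's proof glosses over.
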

{\highlight
Theorem~\ref{thm:rsum-main} shows that $R(\hu)=\Theta(1)$ with high probability when $\langle\eigvecA{1}, \vecones{n}\rangle^2=\Theta(n)$, which is acheviable as the maximum possible value of $\langle\eigvecA{1}, \vecones{n}\rangle^2$ is $n$.
%$\langle\eigvecA{1}, \vecones{n}\rangle^2 \leq n\norm{\eigvecA{1}}_2^2 = n$ by Cauchy-Schwarz inequality
}

\medskip
\begin{remark}\label{re:example-large-sum}
	For certain tasks such as conflicting-group detection \citep{bonchi2019discovering, tzeng2020discovering},
	one could expect to have large $\langle\eigvecA{1}, \vecones{n}\rangle^2$, since $\langle\eigvecA{1}, \vecones{n}\rangle^2$ naturally corresponds to the size of the subgraph, 
	which is located by $\eigvecA{1}$.\footnote{We say that $\eigvecA{1}$ is located around some indices $\mathcal{I}\subseteq [n]$ if the magnitude of $(\eigvecA{1})_i$ for any $i\in\mathcal{I}$ is much larger than those not in $\mathcal{I}$.}
	{\highlight
	However, for tasks such as community detection, $\langle\eigvecA{1}, \vecones{n}\rangle^2\approx 0$ is often the case.
	}
\end{remark}
% \note[Ruochun]{$\uparrow$ adds the discussion where \rsum is not useful, as promised to Reviewer\#4}

{\highlight
Finally, we consider the generalization of Theorem~\ref{thm:rsum-main} to indefinite matrices.
To derive Lemma~\ref{lem:rsum-indefinite}, the analog of Lemma~\ref{lem:rsvd-indefinite} for Bernoulli random matrices, we introduce Assumption~\ref{assumption:rsum-indefinite}, where 
($i$) is merely for the ease of presentation and 
($ii$) generalizes Assumption~\ref{assumption:rsvd-indefinite} 
as $\xi_i=1$ for $\Matrix{S}\sim\bernoulli{p}^{n\times d}$.
The proof of Lemma~\ref{lem:rsum-indefinite} can be found in Appendix~\ref{proof:rsum-indefinite}.
}
%\note[Ruochun]{$\uparrow$ adds some intepretation to Assumption~\ref{assumption:rsum-indefinite}}

\medskip
\begin{assumption}\label{assumption:rsum-indefinite}
	{\highlight
	Assume that (i) $\langle\vect{u}_1,\vecones{n}\rangle^2=\bigOmega(1)$ and
	(ii) there exists a constant $\kappa'\in(0,1]$ such that
		\[\sum_{i=2}^n\eigvalA{i}^{2q+1}\xi_i \geq \kappa' \sum_{i=2}^n\abs{\eigvalA{i}}^{2q+1}\xi_i,\]
		where $\xi_i=\expectation{\langle\Matrix{S}^T\vect{u}_i,\frac{\vecones{d}}{\sqrt{d}}\rangle^2}$, 
		for all $i\in[n]$. %p(1-p+pd\langle\vect{u}_i,\vecones{n}\rangle^2)
	}
\end{assumption}

\medskip

\begin{lemma}\label{lem:rsum-indefinite}
	Assume that $\Matrix{A}$ satisfies Assumption~\ref{assumption:rsum-indefinite}.
	Let $\Matrix{S}\sim\bernoulli{p}^{n\times d}$ for a constant $p\in(0,1)$.
	There exists a constant $c_{\kappa'}\in(0,1]$, such that
	\[
	\sum_{i=1}^n\eigvalA{i}^{2q+1}\langle \Matrix{S}^T\eigvecA{i}, \Matrix{S}^T\eigvecA{1}\rangle^2 \geq c_{\kappa'} \sum_{i=1}^n\abs{\eigvalA{i}}^{2q+1}\langle \Matrix{S}^T\eigvecA{i}, \Matrix{S}^T\eigvecA{1}\rangle^2,
	\]
	with probability at least $1-e^{-\bigOmega(\sqrt{d}{\kappa'}^2)}$. 
	%\textcolor{blue}{Moreover, if $\sum_{i:\eigvalA{i}<0} (\eigvalA{i}/\eigvalA{1})^{2q+1}\mu_i = \bigO(1)$, then $c_{\kappa'}=\kappa'$.}
	%Moreover, if $\sum_{i:\eigvalA{i}<0} (\eigvalA{i}/\eigvalA{1})^{2q+1}\langle\eigvecA{i}, \vecones{n}\rangle^2 = \bigO(1)$, then $c_{\kappa'}=\kappa'$. % typo
\end{lemma}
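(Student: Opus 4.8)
The plan is to mimic the proof of Lemma~\ref{lem:rsvd-indefinite}, replacing the Gaussian-specific arguments with Bernoulli-specific ones. First I would set $\vect{a}=\frac{\Matrix{S}^T\eigvecA{1}}{\norm{\Matrix{S}^T\eigvecA{1}}_2}$ and write the target inequality in the equivalent homogeneous form
\[
\sum_{i=1}^n \eigvalA{i}^{2q+1} \langle \Matrix{S}^T\eigvecA{i}, \Matrix{S}^T\eigvecA{1}\rangle^2
\;\geq\; c_{\kappa'}\sum_{i=1}^n \abs{\eigvalA{i}}^{2q+1}\langle \Matrix{S}^T\eigvecA{i}, \Matrix{S}^T\eigvecA{1}\rangle^2,
\]
which after moving the (nonnegative) terms $\eigvalA{i}^{2q+1}\geq 0$ to one side reduces to controlling the ``bad'' sum $\sum_{i\,:\,\eigvalA{i}<0}\abs{\eigvalA{i}}^{2q+1}\langle \Matrix{S}^T\eigvecA{i}, \Matrix{S}^T\eigvecA{1}\rangle^2$ against the ``good'' sum $\sum_{i\,:\,\eigvalA{i}>0}\eigvalA{i}^{2q+1}\langle \Matrix{S}^T\eigvecA{i}, \Matrix{S}^T\eigvecA{1}\rangle^2$ plus the isolated $i=1$ term. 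Writing $w_i = \langle \Matrix{S}^T\eigvecA{i}, \Matrix{S}^T\eigvecA{1}\rangle^2 = \langle\Matrix{S}^T\eigvecA{i},\vect{a}\rangle^2\cdot\norm{\Matrix{S}^T\eigvecA{1}}_2^2$, the heart of the matter is showing that each $w_i$ concentrates around its mean, and that the mean of $w_i$ is proportional to $\xi_i=\expectation{\langle\Matrix{S}^T\eigvecA{i},\vecones{d}/\sqrt d\rangle^2}$ so that Assumption~\ref{assumption:rsum-indefinite}(ii) can be invoked.

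The key steps, in order, are: (1) decompose $\Matrix{S}^T\eigvecA{1}$ using $\langle\vect{u}_1,\vecones n\rangle^2=\bigOmega(1)$ (Assumption~\ref{assumption:rsum-indefinite}(i)) — since the columns of $\Matrix{S}$ are i.i.d.\ $\bernoulli p$, each coordinate of $\Matrix{S}^T\eigvecA{1}$ has mean $p\langle\vect{u}_1,\vecones n\rangle$ and bounded variance, so with probability $1-e^{-\bigOmega(d)}$ the vector $\Matrix{S}^T\eigvecA{1}$ is close in direction to $\vecones d$, i.e.\ $\vect{a}\approx\vecones d/\sqrt d$; (2) on this event, $w_i$ is up to constants $\norm{\Matrix{S}^T\eigvecA{1}}_2^2\cdot\langle\Matrix{S}^T\eigvecA{i},\vecones d/\sqrt d\rangle^2$, and $\langle\Matrix{S}^T\eigvecA{i},\vecones d/\sqrt d\rangle^2$ has expectation exactly $\xi_i$ by definition; (3) apply a Bernstein-type concentration bound over the $d$ i.i.d.\ columns of $\Matrix{S}$ to show $\sum_{i}\abs{\eigvalA{i}}^{2q+1}\langle\Matrix{S}^T\eigvecA{i},\vecones d/\sqrt d\rangle^2$ concentrates around $\sum_i\abs{\eigvalA{i}}^{2q+1}\xi_i$ (and similarly with signed $\eigvalA{i}^{2q+1}$), incurring a relative error that I will make $\leq \kappa'/2$ by paying the failure probability $e^{-\bigOmega(\sqrt d{\kappa'}^2)}$ — this matches the square-root-in-$d$ rate already seen in Lemma~\ref{lem:rsvd-indefinite} because the relevant random variables are sub-exponential rather than sub-Gaussian after the $2q{+}1$-power weighting; (4) combine with Assumption~\ref{assumption:rsum-indefinite}(ii) to conclude the signed weighted sum over $i\geq 2$ is at least $\tfrac{\kappa'}{2}$ times the absolute weighted sum over $i\geq 2$, then add back the $i=1$ term (which is nonnegative and appears identically on both sides) to extract a final constant $c_{\kappa'}=\Theta(\kappa')$.

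The main obstacle I anticipate is step (3): the concentration must be \emph{simultaneous} over all eigen-indices $i$, but the sum is weighted by $\abs{\eigvalA{i}}^{2q+1}$, so rather than a union bound over $n$ events I should treat $\sum_i\abs{\eigvalA{i}}^{2q+1}\langle\Matrix{S}^T\eigvecA{i},\vecones d/\sqrt d\rangle^2$ as a single scalar statistic — it is a sum over the $d$ columns $\vect{s}_j$ of the Bernoulli matrix of the i.i.d.\ quantities $\sum_i\abs{\eigvalA{i}}^{2q+1}\langle\vect{s}_j-p\vecones n,\vect{u}_i\rangle^2$ plus cross terms, which is a (shifted) quadratic form in a bounded random vector and hence sub-exponential with parameters controlled by $\norm{\Matrix{A}}$-type quantities. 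Getting the $\sqrt d$ (rather than $d$) in the exponent requires being slightly careful that the Bernstein bound is in its ``sub-exponential regime'' at the relevant deviation scale; this is exactly the trade-off already made in Lemma~\ref{lem:rsvd-indefinite}, so I would follow that template. A secondary subtlety is making sure the approximation $\vect{a}\approx\vecones d/\sqrt d$ in step (1) is quantitatively good enough that replacing $\langle\Matrix{S}^T\eigvecA{i},\vect{a}\rangle^2$ by $\langle\Matrix{S}^T\eigvecA{i},\vecones d/\sqrt d\rangle^2$ only perturbs both the good and bad sums by a $(1\pm o(1))$ factor uniformly in $i$ — this is where Assumption~\ref{assumption:rsum-indefinite}(i) is essential, since without a macroscopic $\langle\vect{u}_1,\vecones n\rangle^2$ the direction of $\Matrix{S}^T\eigvecA{1}$ would not be pinned down.
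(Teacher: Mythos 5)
Your plan hinges on step~(1): replacing $\vect{a}=\Matrix{S}^T\eigvecA{1}/\norm{\Matrix{S}^T\eigvecA{1}}_2$ by the deterministic direction $\vecones{d}/\sqrt{d}$ up to a $(1\pm o(1))$ uniform factor. That step does not hold under Assumption~\ref{assumption:rsum-indefinite}(i). Write $\Matrix{S}^T\eigvecA{1}=p\langle\eigvecA{1},\vecones{n}\rangle\vecones{d}+\boldsymbol{\eta}$, where $\boldsymbol{\eta}$ has i.i.d.\ zero-mean coordinates of variance $p(1-p)$; since the assumption only guarantees $\langle\eigvecA{1},\vecones{n}\rangle^2=\bigOmega(1)$ and not $\gg 1$, the signal energy $p^2\langle\eigvecA{1},\vecones{n}\rangle^2 d$ and the noise energy $p(1-p)d$ are of the same order, so $\Matrix{S}^T\eigvecA{1}$ concentrates inside a cone of $\bigTheta(1)$ angular aperture around $\vecones{d}$, not on $\vecones{d}$ itself. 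Consequently the vector replacement is off by a $\bigTheta(1)$ random rotation, and worse, the map $\vect{v}\mapsto\langle\Matrix{S}^T\eigvecA{i},\vect{v}\rangle^2$ is arbitrarily ill-conditioned at indices $i$ for which $\langle\Matrix{S}^T\eigvecA{i},\vecones{d}/\sqrt d\rangle$ is near zero (e.g.\ all $i$ with $\langle\eigvecA{i},\vecones{n}\rangle=0$), so even a small direction change in $\vect{a}$ changes those terms by an unbounded \emph{random} multiplicative factor, not a uniform $(1\pm o(1))$. These random per-index factors are exactly what cannot be pushed through Assumption~\ref{assumption:rsum-indefinite}(ii), which is a \emph{deterministic} inequality between weighted sums.

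The paper avoids this by a different decoupling. It conditions on the realized vector $\vect{v}=\Matrix{S}^T\eigvecA{1}$ and restricts $\vect{v}$ to a set $\mathcal{V}_\epsilon$ on which it controls two \emph{separate} scales: $\norm{\vect{v}}_2^2\approx d\mu_1$ and $\langle\vect{v},\vecones{d}\rangle^2\approx d\xi_1$ (Corollary~\ref{cor:length-bernoulli} for the first and a sub-Gaussian/sub-exponential argument for the second). It then proves a Hanson--Wright--type deviation bound (Lemma~\ref{lem:bernoulli-sum-of-inner-product-iii}) for $\sum_{i\ge 2}\beta_i\langle\Matrix{S}^T\eigvecA{i},\vect{v}\rangle^2$ whose lower tail scales with $\xi_i\,\langle\vect{v},\vecones{d}\rangle^2/d$ and whose upper tail scales with $\xi_i\,\norm{\vect{v}}_2^2$ --- an essential asymmetry. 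These two scales are only reconnected at the end via $\xi_1\ge p\nu\,d\mu_1$, which is precisely where Assumption~\ref{assumption:rsum-indefinite}(i) is used. Collapsing $\vect{v}$ onto $\vecones{d}/\sqrt{d}$, as you propose, erases this asymmetry and throws away the $\norm{\vect{v}}_2^2$-vs-$\langle\vect{v},\vecones{d}\rangle^2$ bookkeeping on which the constant $c_{\kappa'}$ actually depends. Your remaining steps (the $s_-$ case split, the $\sqrt{d}$ exponent from the sub-exponential regime, adding back the $i=1$ term) do match the paper's skeleton, but they cannot be completed without first replacing step~(1) with the conditioning/asymmetric-tail machinery above.
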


Our last result, Theorem~\ref{thm:rsum-indefinite-stronger}, immediately follows from 
Theorems~\ref{thm:rsvd-indefinite-power-law} and \ref{thm:rsum-main} and Lemma~\ref{lem:rsum-indefinite}.
{\highlight
The proof and the full version are in Appendix~\ref{proof:rsum-indefinite-stronger}.
}

\medskip
\begin{theorem}\label{thm:rsum-indefinite-stronger}
	Assume that $\Matrix{A}$ satisfies Assumptions~\ref{assumption:rsvd-indefinite} and ~\ref{assumption:rsum-indefinite}.
	Let $\hu=\rsum(\Matrix{A},q,d,p)$ for any constant $p\in(0,1)$ and any $q\in\ints$, 
	and $i_0$ be defined as in Definition~\ref{def:power-law}.
	Then,
	\[R(\hu) = \bigOmega\left(\left(\max\left\{\frac{d}{d+i_0}, \frac{\langle \eigvecA{1}, \vecones{n}\rangle^2}{n}\right\}\right)^{\frac{1}{2q+1}}\right)\]  holds with probability at least $1-e^{-\bigOmega(\sqrt{d})}$.
	{\highlight
		(For the full dependency on $\kappa$, $\kappa'$, $c_{\kappa}$, and $c_{\kappa'}$, 
		see Appendix~\ref{proof:rsum-indefinite-stronger}.)
	}
\end{theorem}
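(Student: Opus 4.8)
The plan is to exploit the block structure of $\Matrix{S}$ built by \rsum. Write $\Matrix{S}=[\,\Matrix{S}_1\ \Matrix{S}_2\,]$ as in Algorithm~\ref{alg:ours}, with $\Matrix{S}_1\sim\gaussian{0}{1}^{n\times\lceil d/2\rceil}$ and $\Matrix{S}_2\sim\bernoulli{p}^{n\times\lfloor d/2\rfloor}$. Since $\range{\Matrix{A}^q\Matrix{S}}$ contains both $\range{\Matrix{A}^q\Matrix{S}_1}$ and $\range{\Matrix{A}^q\Matrix{S}_2}$, and $\hu$ maximizes $R$ over $\range{\Matrix{Y}}=\range{\Matrix{A}^q\Matrix{S}}$, we get
\[R(\hu)\ \geq\ \max\{R_1,\,R_2\},\qquad R_j:=\max_{\vect{v}\in\range{\Matrix{A}^q\Matrix{S}_j}\setminus\{\veczeros{n}\}}R(\vect{v}),\]
where $R_j$ is precisely the quantity analyzed for \multirsvd\ run with the submatrix $\Matrix{S}_j$ alone. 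It therefore suffices to lower bound $R_1$ and $R_2$ separately and take the larger estimate.

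For the Gaussian block, $\Matrix{A}$ satisfies Assumption~\ref{assumption:rsvd-indefinite}, so Theorem~\ref{thm:rsvd-indefinite-power-law} applied with dimension $\lceil d/2\rceil$ gives, with probability at least $1-e^{-\bigOmega(\sqrt{d}\kappa^2)}$,
\[R_1\ =\ \bigOmega\!\left(c_{\kappa}\left(\frac{\lceil d/2\rceil}{\lceil d/2\rceil+i_0}\right)^{\frac{1}{2q+1}}\right)\ =\ \bigOmega\!\left(c_{\kappa}\left(\frac{d}{d+i_0}\right)^{\frac{1}{2q+1}}\right),\]
the halving of $d$ (and $\sqrt{\lceil d/2\rceil}=\bigTheta(\sqrt d)$) being absorbed into the hidden constants, and $i_0$ being unchanged since Definition~\ref{def:power-law} depends only on $\Matrix{A}$.

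For the Bernoulli block, the Cauchy--Schwarz chain used in the proof of Theorem~\ref{thm:rsum-main} (Equations~\eqref{eq:main-intuition1}--\eqref{eq:main-intuition3}) is no longer valid since $\Matrix{A}$ has negative eigenvalues, so we work instead with the surrogate $\bar{R}_\vect{a}$ of Equation~\eqref{eq:rsvd-Ra-bar}. Evaluate $R_\vect{a}$ (with respect to $\Matrix{S}_2$) at the specific direction $\vect{a}^\star=\Matrix{S}_2^T\eigvecA{1}/\norm{\Matrix{S}_2^T\eigvecA{1}}_2$. Because $\Matrix{A}$ satisfies Assumption~\ref{assumption:rsum-indefinite}, Lemma~\ref{lem:rsum-indefinite} yields $R_{\vect{a}^\star}\geq c_{\kappa'}\bar{R}_{\vect{a}^\star}$ with probability at least $1-e^{-\bigOmega(\sqrt{d}{\kappa'}^2)}$; then, since $\abs{\neigval{i}}\geq 0$ for all $i$, applying Cauchy--Schwarz repeatedly to $\bar{R}_{\vect{a}^\star}$ exactly as in the p.s.d.\ case gives
\[\bar{R}_{\vect{a}^\star}^{\,2q+1}\ \geq\ \frac{\langle \Matrix{S}_2^T\eigvecA{1},\vect{a}^\star\rangle^2}{\sum_{i=1}^n\langle \Matrix{S}_2^T\eigvecA{i},\vect{a}^\star\rangle^2},\]
whose right-hand side is exactly the quantity lower bounded in the proof of Lemma~\ref{lem:cos-angle-bernoulli-lb} (which estimates $\cos^2\theta$ precisely at this direction $\vect{a}^\star$). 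Using Assumption~\ref{assumption:rsum-indefinite}(i), which makes $\max\{1,\langle\eigvecA{1},\vecones{n}\rangle^2\}=\bigTheta(\langle\eigvecA{1},\vecones{n}\rangle^2)$, this gives
\[R_2\ \geq\ R_{\vect{a}^\star}\ =\ \bigOmega\!\left(c_{\kappa'}\left(\frac{\langle \eigvecA{1},\vecones{n}\rangle^2}{n}\right)^{\frac{1}{2q+1}}\right)\]
with probability at least $1-e^{-\bigOmega(\sqrt{d}{\kappa'}^2)}-e^{-\bigOmega(d)}$. (Equivalently, after the reduction of Lemma~\ref{lem:rsum-indefinite} one may invoke Theorem~\ref{thm:rsum-main} verbatim for the positive semidefinite matrix $\tilde{\Matrix{A}}=\sum_i\abs{\eigvalA{i}}\eigvecA{i}\eigvecA{i}^T$.)

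Combining the two displayed bounds and a union bound over the constantly many failure events, and treating $p,\kappa,\kappa',c_\kappa,c_{\kappa'}$ as constants, yields $R(\hu)=\bigOmega\big((\max\{d/(d+i_0),\ \langle\eigvecA{1},\vecones{n}\rangle^2/n\})^{1/(2q+1)}\big)$ with probability at least $1-e^{-\bigOmega(\sqrt d)}$; the appendix records the full dependence on the constants. The main obstacle is the Bernoulli block: one must verify that the proof of Lemma~\ref{lem:cos-angle-bernoulli-lb} controls the ratio at the particular direction $\vect{a}^\star$ (not merely the maximizing one) and that Lemma~\ref{lem:rsum-indefinite} is stated for exactly the inner products $\langle \Matrix{S}_2^T\eigvecA{i},\Matrix{S}_2^T\eigvecA{1}\rangle^2$ that arise there, so that the indefinite correction $c_{\kappa'}$ and the Bernoulli projection bound compose cleanly; the Gaussian block and the final bookkeeping are routine given the earlier results.
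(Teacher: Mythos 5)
Your proposal is correct and follows essentially the same route as the paper: you split $\Matrix{S}$ into the Gaussian and Bernoulli blocks, bound $R$ restricted to each subspace by choosing the direction $\vect{a}^\star=\Matrix{S}_j^T\eigvecA{1}/\norm{\Matrix{S}_j^T\eigvecA{1}}_2$, apply Lemma~\ref{lem:rsvd-indefinite} (resp.\ Lemma~\ref{lem:rsum-indefinite}) to pass to $\bar{R}_{\vect{a}^\star}$, and then reuse the p.s.d.\ machinery (the H\"older/Cauchy--Schwarz chain plus the projection-length estimate of Lemma~\ref{lem:cos-angle-normal} or Lemma~\ref{lem:cos-angle-bernoulli-lb}). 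The paper packages the Gaussian block by repeating the proof of Theorem~\ref{thm:rsvd-power-law} rather than citing Theorem~\ref{thm:rsvd-indefinite-power-law} as a black box, and your worry about the Bernoulli estimate at $\vect{a}^\star$ is moot since the lower bound in the proof of Lemma~\ref{lem:cos-angle-bernoulli-lb} is obtained precisely at that direction.
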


\iffalse
\begin{remark}\label{remark:rsvd-1-pass}
	In Appendix~\ref{sec:single-pass-result}, we provide the theoretical guarantee of single-pass variant for \multirsvd and our \rsum when $d=1$ since they coincide with the multipass versions with $q=1$.
\end{remark}
\fi
\section{\uppercase{Experiments}}

In this section we evaluate the randomized algorithms we analyze in this paper 
using synthetic and real-world datasets. 
In Section~\ref{sec:exp-synthetic}, we use synthetic datasets to benchmark the \multirsvd algorithm 
with respect to the $R$ measure, 
and study the effect of its parameters. 
In Section~\ref{sec:exp-real}, 
we employ \multirsvd and \rsum as sub\-routines of spectral approaches 
for specific knowledge-discovery tasks on real-world datasets.

\para{Settings.}
We use \lancoz, provided by the ARPACK library \citep{lehoucq1998arpack}, 
for computing~$\eigvalA{1}$, 
which is required for measuring $R$.
We fix $q=1$ while varying $d\in\{1,5,10,25,50\}$ to study the effect of~$d$, and 
fix $d=10$ while varying $q\in\{1,2,4,8,16\}$ to study the effect of~$q$.
Each setting is repeated $100$ times and the average is reported.
All experiments are performed on an Intel\,Core\,i5 machine at 1.8\,GHz with 8\,GB\,RAM. 
All methods are implemented in Python~3.7.4.\footnote{The code is available at the github repo \url{https://bit.ly/34dI4Nl}.}

\subsection{Evaluation with synthetic data}
\label{sec:exp-synthetic}

We consider different types of eigenvalue distributions, 
also illustrated in Figure~\ref{fig:exp-syn-eigval}.
The size of the input matrix is set to $n=10\,000$ and $i_0=100$ 
(see Definition~\ref{def:power-law}).
For all types of synthetic matrices
we set $\eigvalA{i}=i^{-0.01}$, 
%for $i=\{1,\ldots,i_0-1\}$, 
for $i<i_0$, 
and the rest of the eigenvalues $\{\eigvalA{i}\}_{i\geq i_0}^n$ are specified as follows:
\begin{itemize}
	\item Type 1: $\eigvalA{i}=i^{-1}$ for $i\geq i_0$.
	\item Type 2: $\eigvalA{i}=i^{-\frac{1}{7}}$ for $i\geq i_0$.
	\item Type 3:
	$\eigvalA{i}=
	\begin{cases}
		i^{-\frac{1}{3}} & \text{if }i\in[i_0, \frac{2n}{3}],\\ 
		-(i-\frac{2n}{3})^{-1} & \text{if }i>\frac{2n}{3}.
	\end{cases}$
	\item Type 4:
	$\eigvalA{i}=
	\begin{cases}
		i^{-\frac{1}{2}} & \text{if }i\in[i_0, \frac{n}{2}],\\
		-\frac{9}{10}(i-\frac{n}{2})^{-\frac{1}{2}} & \text{if }i\in(\frac{n}{2}, n-i_0),\\
		-\frac{9}{10}i^{-0.01} & \text{if }i\geq n-i_0.
	\end{cases}$
\end{itemize}

For the value of $\kappa$ in Assumption~\ref{assumption:rsvd-indefinite}, 
we compute $\kappa$ with $q=1$ and get: 
$\kappa=1$ for Type 1 and Type 2, 
$\kappa=0.99$ for Type 3, and 
$\kappa=0.22$ for Type 4.
For each type of eigenvalue distribution, 
we generate a random $n\times n$ input matrix by sampling the eigenvectors uniformly from
the space of orthogonal matrices.

\begin{figure}[t]
	\begin{center}
		\includegraphics[width=0.6\linewidth]{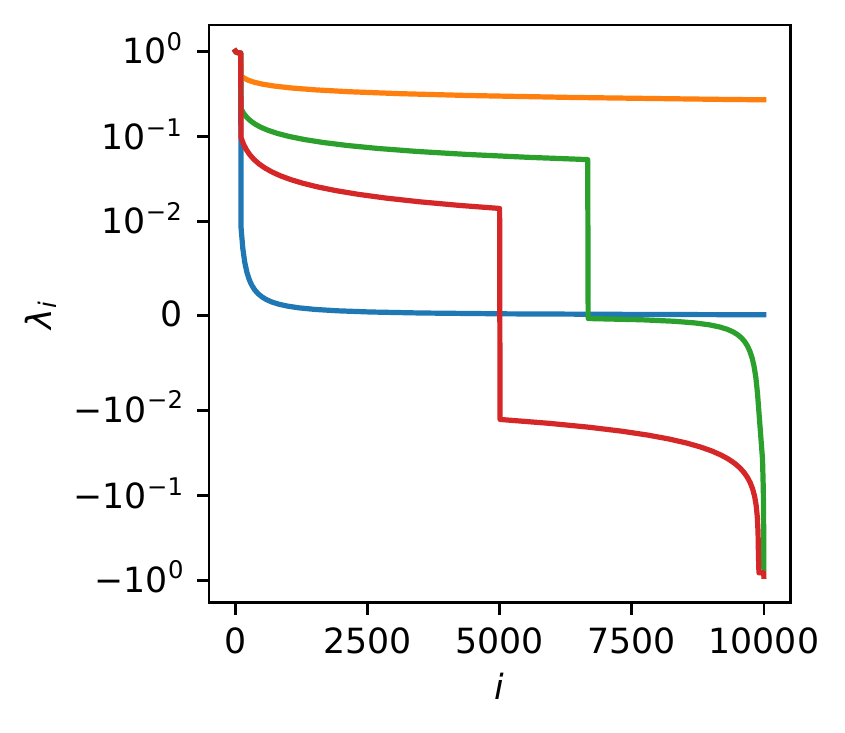}
		\includegraphics[width=0.9\linewidth]{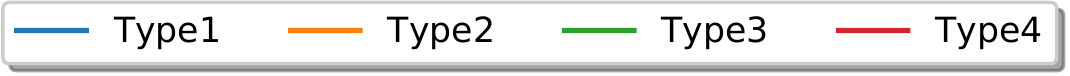}
	\end{center}
	\caption{Different types of eigenvalue distributions.}
	\label{fig:exp-syn-eigval}
\end{figure}

\begin{figure}[t]
	\begin{center}
		\includegraphics[width=0.99\linewidth]{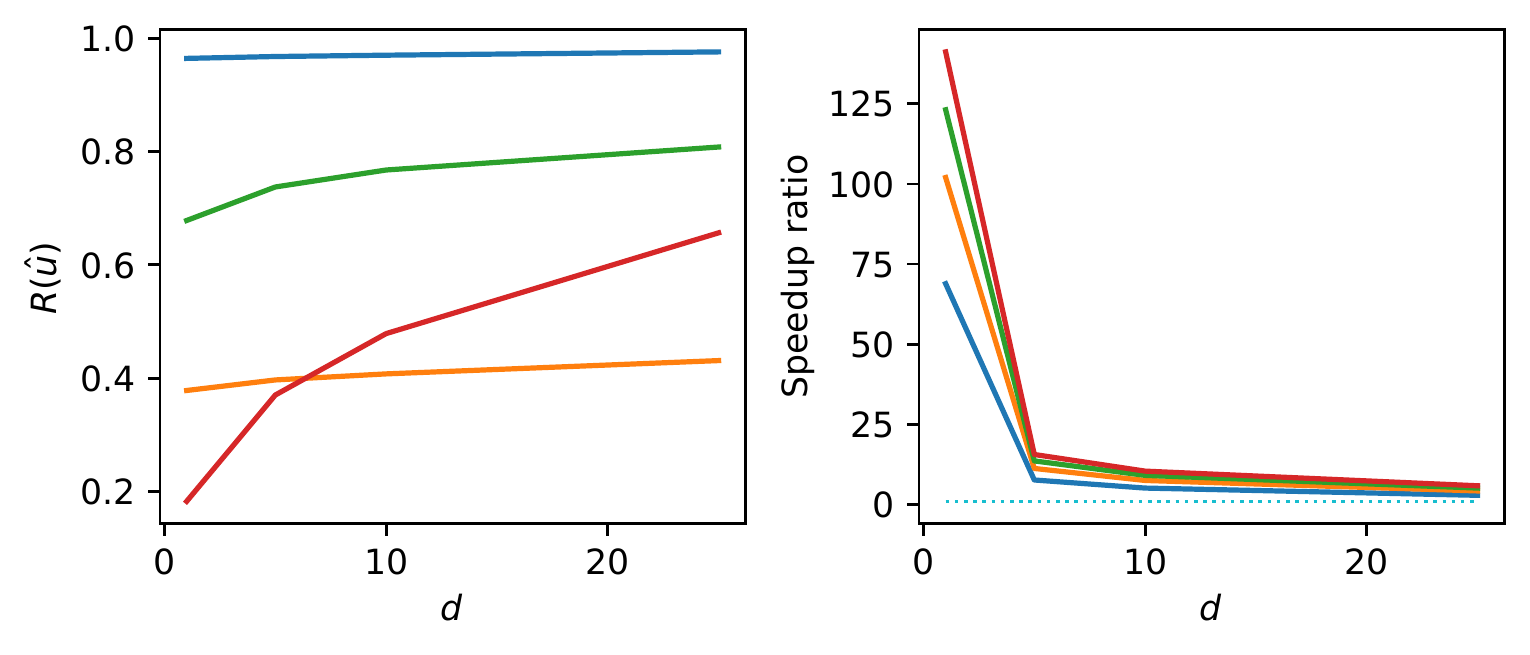}
		\includegraphics[width=0.99\linewidth]{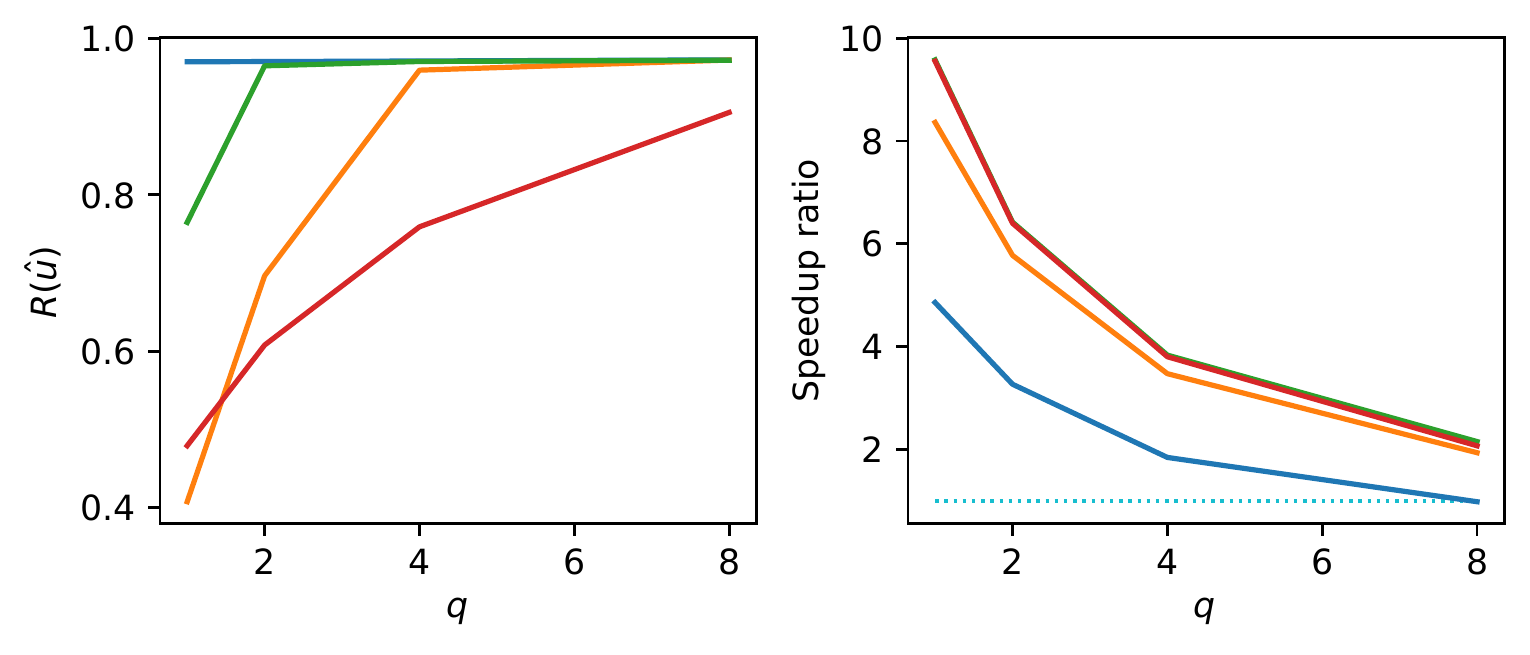}
		\includegraphics[width=0.9\linewidth]{figs/synthetic-legend.pdf}
	\end{center}
	\caption{The value of $R(\hu)$ for $\hu$ computed by 
	$\multirsvd(\Matrix{A},\gaussian{0}{1}^{n\times d},q,d)$. 
	Top row shows dependence with~$d$. Bottom row shows dependence with~$q$. 
	The speedup is measured against \lancoz.}
	\label{fig:exp-syn}
\end{figure}

Figure~\ref{fig:exp-syn} shows the value of $R$ 
for the vector \hu computed by $\multirsvd(\Matrix{A}, \gaussian{0}{1}^{n\times d}, q, d)$,
and the speedup in running time against \lancoz.

For matrices of Type 1, 
it is expected that \multirsvd performs the best as the eigenvalues of such matrices
have the fastest decay and $\kappa=1$.

For matrices of Type 2, 
we notice that $R(\hu)$ is very close to 1 when $q\geq 4$.
This result is better than what our analysis predicts
since by Theorem~\ref{thm:rsvd-power-law} it is 
$R(\hu)=\bigOmega(1)$ with high probability after $q=7$ (since the decay rate of Type 2 is $1/7$).
% \note[Ruochun]{This needs to be updated due to a change in Definition~\ref{def:power-law}.}

For matrices of Type 3, 
despite being indefinite, the magnitude of the negative eigenvalues is almost negligible ($\kappa=0.99$).
By Theorem~\ref{thm:rsvd-indefinite-power-law} and Lemma~\ref{lem:rsvd-indefinite}, 
$R(\hu)$ is nearly identical to its counterpart $\bar{R}$ (see~\eqref{eq:rsvd-Ra-bar}), 
so it is expected that \multirsvd performs better on data of Type 3 than on data of Type 2, 
as the eigen\-value-distribution decay rate is faster.

For matrices of Type 4, 
although the eigenvalues decay faster than those of Type 3 matrices, 
the magnitudes of the negative eigenvalues are much larger ($\kappa=0.22$).
By Theorem~\ref{thm:rsvd-indefinite-power-law} and Lemma~\ref{lem:rsvd-indefinite}, 
$R(\hu)$ is upper-bounded by a factor of $\kappa$ when increasing $q$, 
and the results indeed show that the performance of \multirsvd 
is worse for Type 4 matrices, compared to Type 3 ($\kappa=0.99$).

\subsection{Applications on real-world data}
\label{sec:exp-real}

We use publicly-available networks from the SNAP collection \citep{snapnets}.
Statistics of the datasets are listed in Tables~\ref{tab:exp-signed} and~\ref{tab:exp-unsigned}.

\subsubsection{Detection of $2$ conflicting groups}
\label{sec:exp-cg}

The problem of $2$-conflicting group detection aims to find two optimal groups 
that maximize the polarity objective $P(\vect{x})=\vect{x}^T\Matrix{A}\vect{x}/\vect{x}^T\vect{x}$, 
where $\Matrix{A}$ is the signed adjacency matrix and 
$x\in\mathcal{T}=\{0,\pm1\}^n\backslash\{\veczeros{n}\}$.
\citet{bonchi2019discovering} propose a tight $\bigO(n^{1/2})$-approximation algorithm 
% (\randround) 
based on the leading eigenvector~$\eigvecA{1}$. 
In Appendix~\ref{sec:conflicting-group-approx} we show that applying their approach on the approximated top-eigenvector $\hu$ yields an $\bigO(n^{1/2}/R(\hu))$-approx algorithm.

\begin{table}[t]
	\caption{\label{tab:exp-signed} Datasets for conflicting group detection.}
	\centering
	\resizebox{0.45\textwidth}{!}{%
		\begin{tabular}{lrrrr}\toprule
			& \wikiv & \referendum & \slashdot & \wikic \\\midrule
			$|V|$ & 7\,115 & 10\,884 & 82\,140 & 116\,717 \\
			$|E|$ & 100\,693 & 251\,406 & 500\,481 & 2\,026\,646 \\
			$(\gamma, i_0)$ & (4.6, 15) & (4.5, 16) & (5.3, 17) & (2.8, 22) \\
			$\kappa$ & 0.397 & 0.620 & 0.204 & 0.034 \\
			$\cos\theta(\eigvecA{1},\vecones{n})$ & 0.378 & 0.399 & 0.194 & 0.193 \\
			\bottomrule
		\end{tabular}%
	}
\end{table}

\begin{table}[t]
	\caption{\label{tab:exp-unsigned} Datasets for community detection.}
	\centering
	\resizebox{0.45\textwidth}{!}{%
		\begin{tabular}{lrrrr}\toprule
			& \fbart &  \ptop & \youtube & \roadca \\\midrule
			$|V|$ & 50\,515 & 62\,586 & 1\,134\,890 & 1\,965\,206 \\
			$|E|$ & 819\,306 & 147\,892 & 2\,987\,624 & 2\,766\,607 \\
			%$(\gamma, i_0)$ & (3.7, 29) & (9.3, 40) & (4.5, 40) & --\\
			%$\kappa$ & 0.38 & 0.04 & 0.21 & 0.94\\
			%$\cos\theta(\eigvecA{1},\vecones{n})$ & 0.342 & 0.427 & 0.182 & 0.05\\
			\bottomrule
		\end{tabular}%
	}
\end{table}

\para{Datasets.}
The statistics of datasets we use for this experiment are listed in Table~\ref{tab:exp-signed}.
We observe that all datasets have rapidly-decaying singular values.
To measure the parameters $\gamma$ and $i_0$ (see Definition~\ref{def:power-law}),
due to memory limitations, we compute the top $1\,000$ eigenvalues (in magnitude) 
of its signed adjacency matrix by \lancoz, 
and fit the parameters $(\gamma,i_0)$ by an MLE-based method \citep{clauset2009power}.
Moreover, we test the validity of 
Assumption~\ref{assumption:rsvd-indefinite} by computing $\kappa$ with $q=1$, 
and also computing $\langle\eigvecA{1},\vecones{n}\rangle$.
% by computing $\cos\theta(\eigvecA{1}, \vecones{n})$.

%\note[Ruochun]{Prof. Lu asked to emphasize the observation of rapidly decaying singular values in real-world datasets (in correspondence to \textsection~\ref{sec:intro}).}
% The method \citep{clauset2009power} is known to be more reliable than slope-fitting in the log-log plots, as noted by \cite{eikmeier2017revisiting}.

\para{Results.}
Figure~\ref{fig:exp-cg} illustrates the results obtained 
by applying the spectral algorithm of \citet{bonchi2019discovering} 
on the top-eigenvector $\hu$ returned by \multirsvd and \rsum.
Due to the value of $\kappa$, 
the result is that, as expected,
both algorithms perform the best on \referendum.
Due to the value of $\cos\theta(\eigvecA{1}, \vecones{n})$, 
the superiority of \rsum over \multirsvd is, as expected, 
more pronounced on \wikiv and \referendum than on \slashdot and \wikic.

\iffalse
\begin{figure}[t] % this compare both R(\hu) and polarity
	\begin{center}
		\includegraphics[width=0.99\linewidth]{}
		\includegraphics[width=0.99\linewidth]{}
		\includegraphics[width=0.9\linewidth]{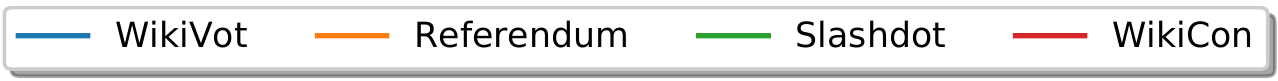}
	\end{center}
	\caption{$2$-conflicting group detection.}
	\label{fig:exp-cg}
\end{figure}
\fi

\begin{figure}[t] % this compare only polarity
	\begin{center}
		\includegraphics[width=0.99\linewidth]{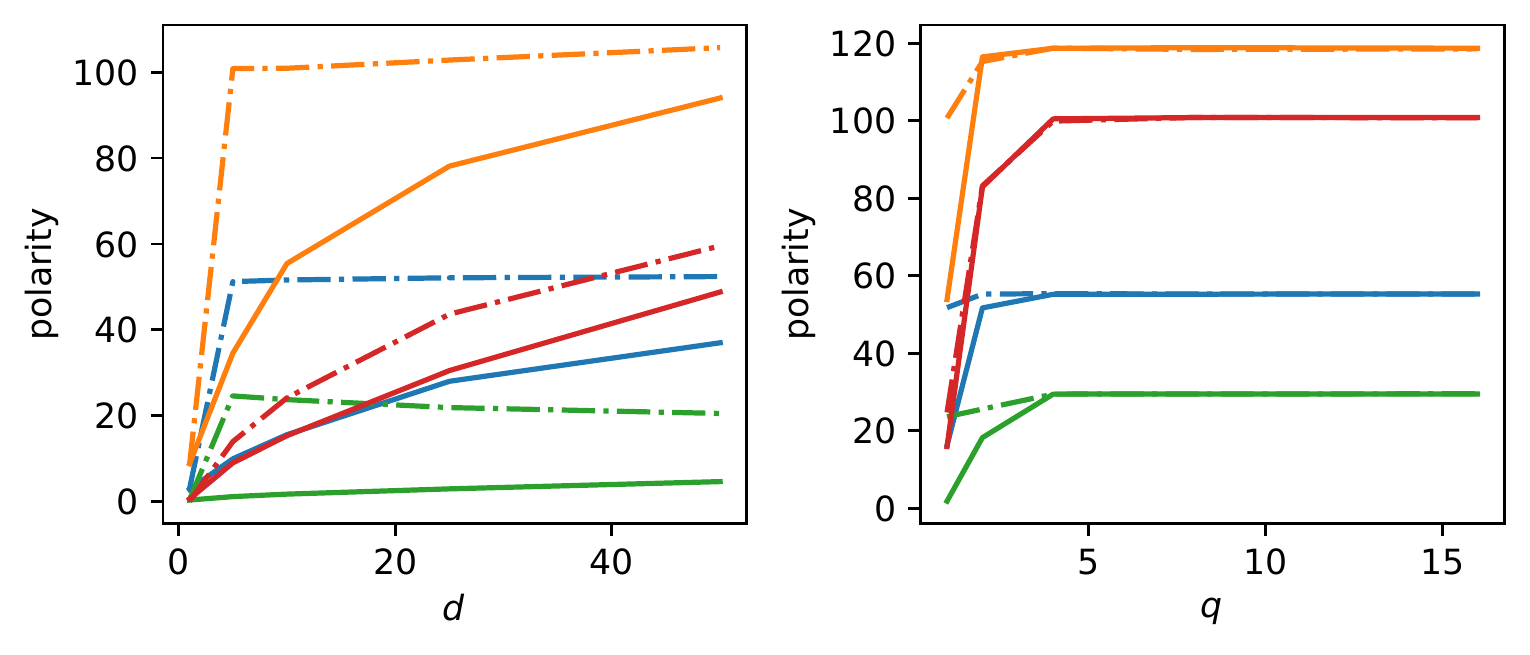}
		\includegraphics[width=0.99\linewidth]{figs/cg-legend.pdf}
	\end{center}
	\caption{Results on the task of detecting $2$ conflicting groups. 
	Results for \multirsvd (resp. \rsum) are plotted with a solid (resp. dashed) line.}
	\label{fig:exp-cg}
\end{figure}

\iffalse
\begin{figure}[t] % this compares both R(\hu) and modularity
	\begin{center}
		\includegraphics[width=0.99\linewidth]{}
		\includegraphics[width=0.99\linewidth]{}
		\includegraphics[width=0.9\linewidth]{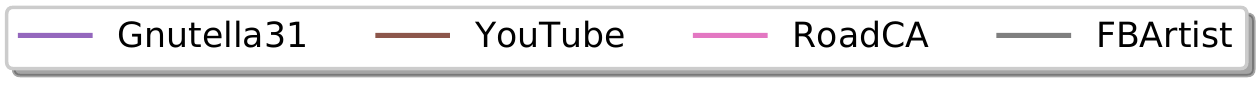}
	\end{center}
	\caption{$2$-community detection.}
	\label{fig:exp-mod}
\end{figure}
\fi

\begin{figure}[t] % this compares only modularity
	\begin{center}
		\includegraphics[width=0.99\linewidth]{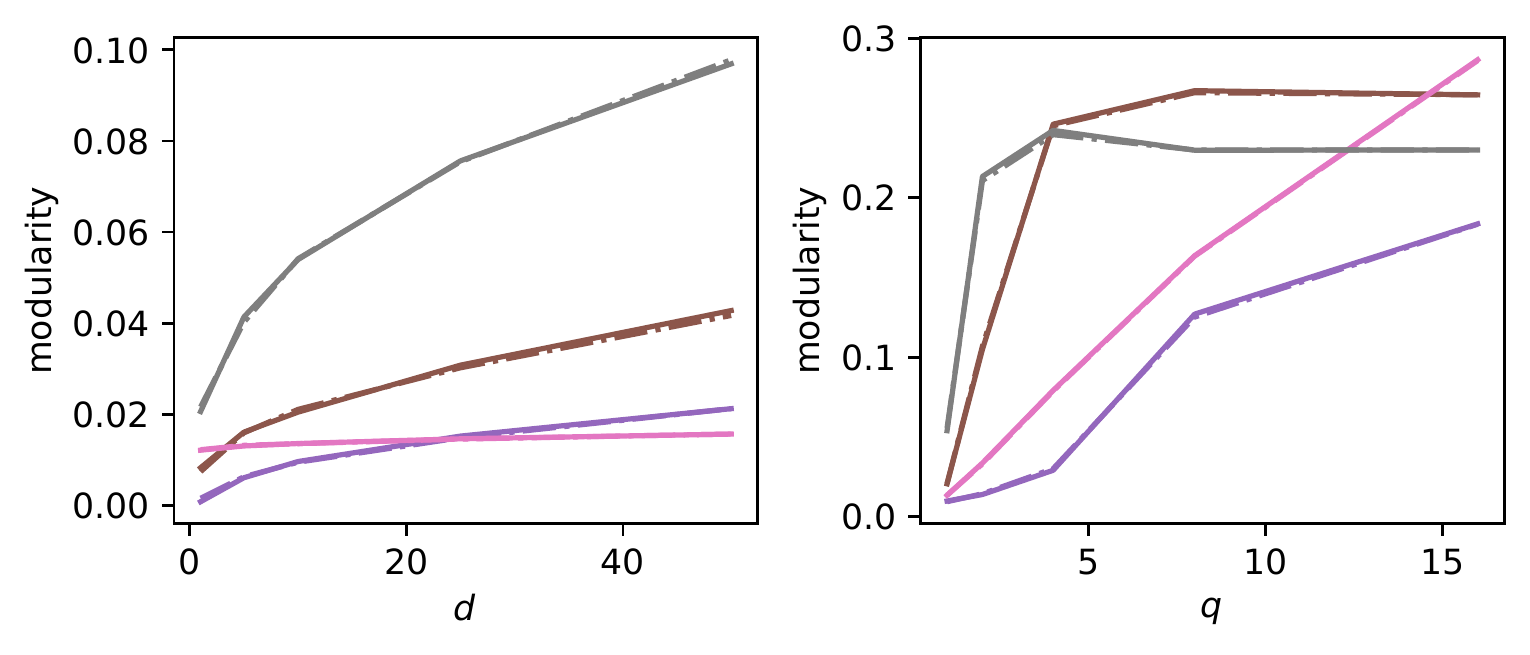}
		\includegraphics[width=0.99\linewidth]{figs/mod-legend.pdf}
	\end{center}
	\caption{
		Results on the task of detecting $2$ communities. 
		{\highlight
			Results for \multirsvd (in solid line) and \rsum (in dotted line) are nearly the same.}
		}
	\label{fig:exp-mod}
\end{figure}

\subsubsection{Detection of $2$ communities}
\label{sec:exp-mod}

For the task of detecting two communities in a graph,
\citet{newman2006modularity} proposed an efficient algorithm
by maximizing the modularity score 
$Q(\vect{x})=\vect{x}^T\Matrix{M}\vect{x}/4|E|$, 
where $\Matrix{M}_{i,j}=\Matrix{A}_{i,j} - \deg(i)\deg(j)/2|E|$, 
\Matrix{A} is the adjacency matrix of the input graph, 
and the two communities are determined by the sign of the top eigenvector of~$\Matrix{M}$.

\para{Datasets.}
The datasets used for evaluating this task are listed in Table~\ref{tab:exp-unsigned}.
As the modularity matrix $\Matrix{M}$ is dense and the networks are large, 
\lancoz runs out-of-memory on our machine when trying to compute the top eigenvalues,
{\highlight
and hence, unlike Table~\ref{tab:exp-signed}, the number $\kappa$ and the parameters $(\gamma, i_0)$ 
are not displayed in Table~\ref{tab:exp-unsigned}.
}
\note[Ruochun]{$\uparrow$ resolves the suggestion on wording by Reviewer\#6}

\para{Results.}
Figure~\ref{fig:exp-mod} shows the results by applying the spectral algorithm of \cite{newman2006modularity} 
on the top-eigenvector $\hu$ returned by \multirsvd and \rsum.
Notice that on this task, \rsum has no advantage over \multirsvd since $\Matrix{M}\vecones{n}=0$, 
and thus $\langle\eigvecA{1},\vecones{n}\rangle=\veczeros{n}$ if $\eigvalA{1}\geq 0$.
When fixing $d=10$ and increasing $q$, the modularity scores converge much faster on \fbart and \youtube 
%(after $q=5$) 
than on \ptop and \roadca, suggesting that it could be hard to discover community structures in \ptop and \roadca.
This is an expected result. 
For Gnutella (\ptop) the design of the network prevents the formation of large communities so as to enable 
reliable communication
For the road network of California (\roadca) the reason is the grid-like structure of the network \citep{leskovec2009community}.

\iffalse
\subsubsection{PageRank.}\label{sec:exp-pagerank}
$\Matrix{A}_{i,j} = \epsilon A_{i,j}/\deg(j) + (1-\epsilon)/n$ where $A$ is the adjacency matrix and we set $\epsilon=0.1$.
\begin{table}[t]
	\caption{\label{tab:exp-directed} Datasets for PageRank.}
	\centering
	\resizebox{0.45\textwidth}{!}{%
		\begin{tabular}{c|rrrr}\toprule
			& \stanford &  \notredame & \berkstan & \google \\\midrule
			$|V|$ & 281\,903 & 325\,729 & 685\,230 & 875\,713 \\
			$|E|$ & 2\,312\,497 & 1\,497\,134 & 7\,600\,696 & 5\,105\,039 \\
			$\cos\theta(\eigvecA{1},\vecones{n})$ & 0.004 & 0.156 & 0.028 & 0.021 \\
			\bottomrule
		\end{tabular}%
	}
\end{table}

\para{Datasets.}
The statistics of datasets we used are listed in Table~\ref{tab:exp-signed}.
For each network in Table~\ref{tab:exp-signed}, we use the same way as in Section~\ref{sec:exp-cg}) to compute the top eigenvector of the Google matrix by \lancoz and measure the condition $\langle\eigvecA{1},\vecones{n}\rangle^2$.

\begin{figure}[t] % this compares only modularity
	\begin{center}
		\includegraphics[width=0.99\linewidth]{}
		\includegraphics[width=0.9\linewidth]{figs/pr-legend.png}
	\end{center}
	\caption{PageRank.}
	\label{fig:exp-pr}
\end{figure}
\fi
\section{\uppercase{Conclusion}}%Conclusion and future work

In this paper, we study the problem of approximating the leading eigenvector 
of a matrix with limited number of passes.
The problem is of interest in many applications.
We provide a tight theoretical analysis of the popular randomized SVD method, 
with respect to the metric 
$R(\hu)=\eigvalA{1}^{-1}{\hu^T\Matrix{A}\hu}/{\hu^T\hu}$.
Our results substantially improve the analysis of randomized SVD 
in the regime of $o(\ln n)$ passes and recover the analysis of prior works 
in the regime of $\bigOmega(\ln n)$ passes.
A new technique is introduced to transform the problem of maximizing $R(\hu)$ 
into a well-studied problem in the literature of high-dimensional probability.

Our work opens several interesting directions.
First, it is an open problem to characterize the fundamental limit of maximizing $R(\hu)$ 
for any algorithm with fixed number of pass and $\bigO(n)$ space.
Second, our results may be extended in different ways.
For example, we may relax the requirement on the input matrix from symmetric to stochastic, so as to analyze approximations of Page\-Rank~\citep{page1999pagerank}.
{\highlight
Or, we may extend \rsum to use any non-centered subgaussian distribution for drawing $\Matrix{S}_2$, and we conjecture this yields similar results.} 
Another direction is to extend our analysis to top-$k$ eigenvectors;
since there are already several 
methods for computing top-$k$ eigenvectors~\citep{halko2011finding,mackey2008deflation,allen2016lazysvd},
the most challenging part is to define the proper metric to maximize,
as a generalization of~$R(\hu)$.

\subsubsection*{Acknowledgements}
We thank the anonymous reviewers for their insightful feedback. 
This research is supported by the ERC Advanced Grant REBOUND (834862), 
the EC H2020 RIA project SoBigData++ (871042), 
and the Wallenberg AI, Autonomous Systems and Software Program (WASP) funded by the Knut and Alice Wallenberg Foundation. 

\bibliography{sscg}

\appendix
\onecolumn \makesupplementtitle
%\paragraph{Note.} We fixed two minor typos in Section~\ref{sec:rsum}, highlighted with blue color in this version, and removed the ``moreover'' part in Lemma~\ref{lem:rsvd-indefinite} and Lemma~\ref{lem:rsum-indefinite} for simplicity.

\section{Proofs of \multirsvd}\label{sec:proof-rsvd}
\subsection{Large deviation of projection length for Gaussian random matrix}
This subsection is devoted to proving Lemma~\ref{lem:cos-angle-normal} restated below.\\

\begin{replemma}{lem:cos-angle-normal}\label{proof:cos-angle-normal}
	Let $\vect{v}\in\reals^n$ be a nonzero vector and $\Matrix{S}\sim\gaussian{0}{1}^{n\times d}$ where $n,d\in\ints$ and $n\geq d$.
	Then,
	\[
	\cos^2\theta(\vect{v},\Matrix{S}) = \bigTheta\left(\frac{d}{n}\right)
	\]
	with probability at least $1-e^{-\bigOmega(d)}$.\\
\end{replemma}

This lemma stems from the observations that $\frac{\sigval{1}{\Matrix{S}^T\vect{v}}}{\sigval{1}{\Matrix{S}}} \leq \cos\theta(\vect{v},\Matrix{S}) \leq \frac{\sigval{1}{\Matrix{S}^T\vect{v}}}{\sigval{d}{\Matrix{S}}}$ and the distribution of $\frac{\Matrix{S}^T \vect{v}}{\left\|\vect{v}\right\|_2}$ is exactly $\gaussian{0}{1}^{d\times 1}$.
The proof relies on the union bound of concentration inequalities on the extreme singular values of Gaussian random matrix, Lemma \ref{lem:largest-sigval-gaussian}, and Lemma \ref{lem:smallest-sigval-gaussian}. 
Similar inequalities shown in the previous works, e.g. \cite{hardt2014noisy}, also rely on this observation.\\

\medskip
\noindent

\begin{lemma}[Theorem 4.4.5~\citep{vershynin2018high}]\label{lem:largest-sigval-gaussian}
	Let $\Matrix{S}$ be a $n\times d$ random matrix whose entries are i.i.d.\ zero-mean subgaussian r.v.'s.
	\[
	\text{ For all } t>0,\quad
	\prob{\sigval{1}{\Matrix{S}} \geq c\,(\sqrt{n} + \sqrt{d} +t)} \leq 2e^{-t^2},
	\]
	where $c>0$ depends linearly only on $\norm{\Matrix{S}_{1,1}}_{\psi_2}$ (see Definition~\ref{def:subgaussian-norm} of $\psi_2$-norm in Appendix~\ref{app:sub}).
\end{lemma}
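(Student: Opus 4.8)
The plan is to prove this via the standard $\epsilon$-net (covering) argument, which turns the operator-norm bound for the random matrix into a union bound over finitely many scalar subgaussian tail estimates. Write $K=\norm{\Matrix{S}_{1,1}}_{\psi_2}$ and recall the variational identity $\sigval{1}{\Matrix{S}}=\max_{\vect{x}\in\sphere^{d-1},\,\vect{y}\in\sphere^{n-1}}\vect{y}^T\Matrix{S}\vect{x}$. First I would fix $\tfrac14$-nets $\mathcal{N}\subseteq\sphere^{d-1}$ and $\mathcal{M}\subseteq\sphere^{n-1}$ with $\abs{\mathcal{N}}\leq 9^{d}$ and $\abs{\mathcal{M}}\leq 9^{n}$ (the usual volumetric covering bound for the sphere), and establish the approximation inequality $\sigval{1}{\Matrix{S}}\leq 2\max_{\vect{x}\in\mathcal{N},\,\vect{y}\in\mathcal{M}}\vect{y}^T\Matrix{S}\vect{x}$, obtained by approximating a maximizing pair simultaneously in both arguments. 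After this reduction it suffices to control the maximum of the bilinear form over the finite grid $\mathcal{N}\times\mathcal{M}$.

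Second, I would analyze a single fixed pair $(\vect{x},\vect{y})\in\sphere^{d-1}\times\sphere^{n-1}$. Here $\vect{y}^T\Matrix{S}\vect{x}=\sum_{i\in[n]}\sum_{j\in[d]}y_i\,\Matrix{S}_{i,j}\,x_j$ is a linear combination of the i.i.d.\ zero-mean subgaussian entries of $\Matrix{S}$, so by additivity of the square of the subgaussian norm over independent summands it is itself a zero-mean subgaussian random variable with $\norm{\vect{y}^T\Matrix{S}\vect{x}}_{\psi_2}^2\leq C_0\,K^2\sum_{i,j}y_i^2 x_j^2=C_0\,K^2$ for an absolute constant $C_0$. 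Consequently $\prob{\vect{y}^T\Matrix{S}\vect{x}\geq s}\leq e^{-c_0 s^2/K^2}$ for every $s\geq 0$, where $c_0$ is absolute.

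Third, I would union-bound over the net and tune the threshold. Combining the two previous steps,
\[
\prob{\sigval{1}{\Matrix{S}}\geq 2s}\ \leq\ \abs{\mathcal{N}}\,\abs{\mathcal{M}}\,e^{-c_0 s^2/K^2}\ \leq\ 9^{n+d}\,e^{-c_0 s^2/K^2}.
\]
Taking $s=c_1 K(\sqrt{n}+\sqrt{d}+t)$ and using $(\sqrt{n}+\sqrt{d}+t)^2\geq n+d+t^2$, the exponent $c_0 s^2/K^2$ is at least $c_0 c_1^2(n+d)+c_0 c_1^2 t^2$; choosing $c_1$ so that $c_0 c_1^2\geq\ln 9$ makes $9^{n+d}e^{-c_0 c_1^2(n+d)}\leq 1$ and $e^{-c_0 c_1^2 t^2}\leq e^{-t^2}$, hence the probability is at most $e^{-t^2}\leq 2e^{-t^2}$. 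Relabelling the constant $2c_1$ as $c$, which is linear in $K=\norm{\Matrix{S}_{1,1}}_{\psi_2}$, yields the stated bound.

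I expect the main obstacle to be bookkeeping rather than anything conceptual: one has to carry the net-approximation factor $(1-2\epsilon)^{-1}$ correctly through both arguments of the bilinear form, and verify that $\norm{\vect{y}^T\Matrix{S}\vect{x}}_{\psi_2}$ is bounded by an absolute multiple of $K\,\norm{\vect{x}}_2\,\norm{\vect{y}}_2$ with an explicit constant, so that the exponential decay $e^{-c_0 s^2/K^2}$ genuinely dominates the $e^{(n+d)\ln 9}$ cost of the union bound. Since this is precisely Theorem~4.4.5 of \citet{vershynin2018high}, in the paper I would simply invoke it and, if desired, include the sketch above for completeness.
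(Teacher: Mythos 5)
Your proposal is correct: the paper does not prove this lemma at all but simply invokes it as Theorem~4.4.5 of \citet{vershynin2018high}, and your $\epsilon$-net sketch is an accurate reproduction of the standard argument given there (the $1/4$-net sizes, the factor-$2$ approximation inequality, rotation invariance of the $\psi_2$-norm for linear combinations, and the union bound with $s=\Theta(K(\sqrt{n}+\sqrt{d}+t))$ are all as in the reference). Your closing remark that one would simply cite the theorem in the paper is exactly what the authors do.
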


\medskip
\noindent

\begin{lemma}[Theorem 1.1 \citep{rudelson2009smallest}]\label{lem:smallest-sigval-gaussian}
	Let $\Matrix{S}$ be a $n\times d$ random matrix whose entries are i.i.d.\ zero-mean subgaussian r.v.'s and $n\geq d$.
	\[\text{ For all } \delta>0,\quad
	\prob{\sigval{d}{\Matrix{S}} \leq \delta\,(\sqrt{n} - \sqrt{d-1})} \leq (c_1\delta)^{n-d+1} + e^{-c_2n},\]
	where $c_1,c_2>0$ have polynomial dependence on $\norm{\Matrix{S}_{1,1}}_{\psi_2}$ (see Definition~\ref{def:subgaussian-norm} of $\psi_2$-norm in Appendix~\ref{app:sub}).
\end{lemma}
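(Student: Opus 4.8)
The plan is to follow the small-ball-plus-net strategy of Rudelson and Vershynin. Writing $\sigval{d}{\Matrix{S}}=\min_{\vect{x}\in\sphere^{d-1}}\norm{\Matrix{S}\vect{x}}_2$, I would split $\sphere^{d-1}$ into the set $\mathrm{Comp}$ of \emph{compressible} vectors --- those within some small absolute distance $\rho$ of a vector supported on at most $\delta_0 d$ coordinates, for suitable absolute constants $\rho,\delta_0$ --- and the complementary set $\mathrm{Incomp}$ of \emph{incompressible} vectors, and lower bound $\norm{\Matrix{S}\vect{x}}_2$ over each part separately against the target level $t=\delta(\sqrt n-\sqrt{d-1})$.

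For compressible vectors a net argument should suffice. For a fixed $\vect{x}\in\sphere^{d-1}$, $\Matrix{S}\vect{x}$ has i.i.d.\ mean-zero subgaussian entries of variance $\bigTheta(1)$, so a one-sided Bernstein bound on $\norm{\Matrix{S}\vect{x}}_2^2$ gives $\norm{\Matrix{S}\vect{x}}_2\ge c_0\sqrt n$ except with probability $e^{-\bigOmega(n)}$; the unit vectors supported on a fixed $\delta_0 d$-element set admit a $\rho$-net of size $(C/\rho)^{\delta_0 d}$, and there are $\binom{d}{\delta_0 d}$ such supports, so choosing $\delta_0$ small enough (using $d\le n$) and taking a union bound over the net, together with the operator-norm bound $\norm{\Matrix{S}}_2\le C\sqrt n$ from Lemma~\ref{lem:largest-sigval-gaussian} to pass from the net to all of $\mathrm{Comp}$, would yield $\prob{\min_{\vect{x}\in\mathrm{Comp}}\norm{\Matrix{S}\vect{x}}_2\le\tfrac{1}{2}c_0\sqrt n}\le e^{-c_2 n}$. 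Since $t\le\delta\sqrt n\le\tfrac{1}{2}c_0\sqrt n$ once $\delta$ is below an absolute constant (the asserted bound being vacuous otherwise), this is what produces the $e^{-c_2 n}$ term.

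The substance is the incompressible part. I would use that an incompressible $\vect{x}$ is \emph{spread}: there is a set $\Lambda(\vect{x})\subseteq[d]$ of size $\bigOmega(d)$ on which $\tfrac{c}{\sqrt d}\le|x_k|\le\tfrac{C}{\sqrt d}$. Writing $\Matrix{S}$ through its columns $\Matrix{S}^{(1)},\dots,\Matrix{S}^{(d)}$ and setting $H_k=\operatorname{span}\{\Matrix{S}^{(j)}:j\ne k\}$, projecting onto $H_k^{\perp}$ gives $\norm{\Matrix{S}\vect{x}}_2\ge|x_k|\cdot\mathrm{dist}(\Matrix{S}^{(k)},H_k)\ge\tfrac{c}{\sqrt d}\,\mathrm{dist}(\Matrix{S}^{(k)},H_k)$ for every $k\in\Lambda(\vect{x})$, so on the event $\{\min_{\vect{x}\in\mathrm{Incomp}}\norm{\Matrix{S}\vect{x}}_2\le t\}$ at least $\bigOmega(d)$ indices $k$ satisfy $\mathrm{dist}(\Matrix{S}^{(k)},H_k)\le\tfrac{\sqrt d}{c}t$. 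A first-moment (Markov) step over $k\in[d]$ then bounds this probability by $\bigO(1)\cdot\prob{\mathrm{dist}(\Matrix{S}^{(1)},H_1)\le\tfrac{\sqrt d}{c}t}$, the factor $d$ from the sum cancelling the $\bigOmega(d)$ in the denominator --- which is precisely why one uses the compressible/incompressible split rather than a bare union bound. Finally, conditioning on $H_1$ (which is independent of $\Matrix{S}^{(1)}$) we have $\dim H_1\le d-1$, so $H_1^{\perp}$ has dimension $m\ge n-d+1$ and $\mathrm{dist}(\Matrix{S}^{(1)},H_1)=\norm{P_{H_1^{\perp}}\Matrix{S}^{(1)}}_2$ is the norm of the projection of a subgaussian vector onto a fixed $m$-dimensional subspace; the anti-concentration estimate I need is $\prob{\norm{P_{H_1^{\perp}}\Matrix{S}^{(1)}}_2\le\varepsilon\sqrt m}\le(C\varepsilon)^m$, which for the Gaussian $\Matrix{S}$ used in this paper is simply the $\chi^2_m$ lower-tail bound $\prob{\chi^2_m\le u}\le(Cu/m)^{m/2}$, and in general follows from tensorizing the L\'evy concentration function of a single coordinate. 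Matching $\varepsilon\sqrt m=\tfrac{\sqrt d}{c}t$ and using $\sqrt n-\sqrt{d-1}\asymp\tfrac{n-d+1}{\sqrt n}$, $d\le n$, and $m\ge n-d+1$ forces $\varepsilon=\bigO(\delta)$ and yields the $(c_1\delta)^{n-d+1}$ term, with the spreading constants (and hence the polynomial dependence of $c_1,c_2$ on $\norm{\Matrix{S}_{1,1}}_{\psi_2}$) absorbed along the way.

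The hard part will be making the incompressible analysis deliver the full exponent $n-d+1$ \emph{uniformly} in the aspect ratio. The single-column distance bound above is sharp only when $n-d+1$ is small (near-square $\Matrix{S}$); when $n-d+1=\bigOmega(n)$ the term $(c_1\delta)^{n-d+1}$ is anyway dominated by $e^{-c_2 n}$, obtainable directly from Gordon's comparison inequality and Gaussian concentration via $\prob{\sigval{d}{\Matrix{S}}\le\sqrt n-\sqrt d-r}\le e^{-r^2/2}$; but the intermediate regime requires a more careful conditioning/iteration together with sharp control of the L\'evy concentration function of weighted subgaussian sums along spread directions, and stitching these regimes together (while controlling $\dim H_k$ and the spreading set of incompressible vectors) is the delicate step.
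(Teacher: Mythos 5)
You should first note that the paper does not prove this lemma at all: it is quoted verbatim as Theorem~1.1 of \citet{rudelson2009smallest} and used as a black box (only in the Gaussian case, via Lemma~\ref{lem:cos-angle-normal}), so there is no in-paper argument to compare against; what you have written is an outline of the cited paper's proof. As such an outline, the skeleton is faithful: the split of $\sphere^{d-1}$ into compressible and incompressible vectors, the net-plus-operator-norm bound producing the $e^{-c_2 n}$ term, and the averaging step that converts the incompressible event into a bound on $\mathrm{dist}(\Matrix{S}^{(k)},H_k)$ for a single column, with codimension $m\geq n-d+1$, are exactly the Rudelson--Vershynin strategy.

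The genuine gap is the anti-concentration step, which you dismiss in one clause. The claim that $\mathbb{P}\left[\lVert P_{H_1^{\perp}}\Matrix{S}^{(1)}\rVert_2\leq \varepsilon\sqrt{m}\right]\leq (C\varepsilon)^m$ ``in general follows from tensorizing the L\'evy concentration function of a single coordinate'' is not correct: in an orthonormal basis of $H_1^{\perp}$ the coordinates of $P_{H_1^{\perp}}\Matrix{S}^{(1)}$ are correlated linear forms in the same entries, so tensorization (which requires independence) does not apply, and the bound is in fact false for worst-case subspaces once the entry distribution is discrete. For example, with $\pm1$ Bernoulli-type entries and $E$ spanned by $(e_1-e_2)/\sqrt{2},\ldots,(e_{2m-1}-e_{2m})/\sqrt{2}$ one has $\mathbb{P}\left[P_E X=0\right]=2^{-m}$, which no estimate of the form $(C\varepsilon)^m$ can reproduce as $\varepsilon\to0$. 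This is precisely why \citet{rudelson2009smallest} must (i) prove a small-ball theorem for projections governed by the essential least common denominator of the subspace and (ii) show that the random subspace $H_k$ spanned by the remaining columns has exponentially large essential LCD with probability $1-e^{-cn}$; that arithmetic-structure argument is the heart of the cited paper, it is what yields the clean $(c_1\delta)^{n-d+1}+e^{-c_2 n}$ form with the stated dependence of $c_1,c_2$ on the subgaussian norm, and it is absent from your sketch (your closing paragraph locates the difficulty in stitching aspect-ratio regimes instead). If you only intend the Gaussian case actually used in this paper, your $\chi^2_m$ argument for the projection is fine and the lemma even has much shorter proofs (Gordon's comparison plus concentration, or Szarek/Edelman-type estimates); but as a proof of the statement for general i.i.d.\ zero-mean (unit-variance) subgaussian entries, the proposal does not close.
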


\medskip
\noindent

\begin{proofof}{Lemma~\ref{lem:cos-angle-normal}}
	For the simplicity of presentation, we assume $\norm{\vect{v}}_2=1$ as $\cos\theta(\cdot,\cdot)$ is scale-invariant.

	\underline{(i) $\cos\theta(\vect{v},\Matrix{S}) = \Omega (\sqrt{d/n})$:}\\

	Recall that $\cos\theta(\vect{v},\Matrix{S}) = \max_{\vect{a}\in\sphere^{d-1}}\frac{\langle \vect{v}, \Matrix{S}\vect{a}\rangle}{\norm{\Matrix{S}\vect{a}}_2} $.  
	Let $\vect{a}=\Matrix{S}^T\vect{v}/\norm{\Matrix{S}^T\vect{v}}$. 
	We get
	\begin{align*}
		\cos\theta(\vect{v},\Matrix{S})  \geq \frac{\langle \vect{v}, \Matrix{S}\Matrix{S}^T\vect{v}\rangle}{\norm{\Matrix{S}\Matrix{S}^T\vect{v}}_2} = \frac{\norm{\Matrix{S}^T\vect{v}}_2^2}{\norm{\Matrix{S}\Matrix{S}^T\vect{v}}_2} \geq \frac{\norm{\Matrix{S}^T\vect{v}}_2}{\sigval{1}{\Matrix{S}}} = \frac{\sigval{1}{\Matrix{S}^T\vect{v}}}{\sigval{1}{\Matrix{S}}},
	\end{align*}
	where the second inequality directly follows from the definitions of the largest singular value. Because $\Matrix{S}^T \vect{v}\sim \gaussian{0}{1}^{d\times 1}$, invoking Lemma \ref{lem:smallest-sigval-gaussian} with $\delta = e^{-1}$ yields that $\prob{\sigval{1}{\Matrix{S}^T\vect{v}} \ge \sqrt{d}/e} \geq 1-e^{-\Omega(d)}$. Meanwhile, Lemma~\ref{lem:largest-sigval-gaussian} with $t=\sqrt{n}-\sqrt{d}$ implies that $\prob{\sigval{1}{\Matrix{S}}\le 2c\sqrt{n}}\ge 1-e^{-\Omega(n)}$. We hence conclude (i) by applying the union bound. 

	\medskip
	\noindent

	\underline{(ii) $\cos\theta(\vect{v},\Matrix{S}) = \bigO (\sqrt{d/n})$:}\\

	Due to $\sigval{d}{\Matrix{S}}\le \left\|\Matrix{S}\right\|_2$ and $\langle \vect{v}, \Matrix{S}\vect{a}\rangle \le \left\|\Matrix{S}^T\vect{v}\right\|_2 \left\|\vect{a}\right\|_2= \sigval{1}{\Matrix{S}^T\vect{v}}$, $\text{ for all } \vect{a}\in \sphere^{d-1}$,
	\[
	\cos \theta(\vect{v},\Matrix{S})
	= \max_{\vect{a}\in\sphere^{d-1}}\frac{\langle \vect{v}, \Matrix{S}\vect{a}\rangle}{\norm{\Matrix{S}\vect{a}}_2}
	\le \frac{\sigval{1}{\Matrix{S}^T\vect{v}}}{\sigval{d}{\Matrix{S}}}.
	\]
	For the denominator, Lemma \ref{lem:smallest-sigval-gaussian} with $\delta = e^{-1}$ is applied to permit that $\prob{\sigval{d}{\Matrix{S}} \ge \frac{\sqrt{n}-\sqrt{d-1}}{e}} \geq 1-e^{-\Omega (n-d+1)} - e^{-\Omega (n)}$.
	For the numerator, as $\Matrix{S}^T\vect{v} \sim \gaussian{0}{1}^{d\times 1}$, Lemma \ref{lem:largest-sigval-gaussian} with $t=\sqrt{d}$ shows that $\prob{\sigval{1}{\Matrix{S}^T\vect{v}} \le 2\sqrt{d}} \geq 1-e^{-\Omega (d)}$.
	Thus, (ii) holds by applying the union bound.
\end{proofof}

\subsection{\multirsvd with positive semidefinite matrices}\label{sec:A1}

\begin{lemma}\label{lem:psd-rsvd-holder}
	Let $\vect{x}=(\vect{x}_1,\ldots,\vect{x}_n)$ and $\vect{y}=(\vect{y}_1,\ldots,\vect{y}_n)$ be two vectors in $\reals^n$ satisfying (i) there exists $i\in [n]$ s.t.\ $\vect{x}_i \vect{y}_i\neq 0$, and (ii) there exists $j\in [n]$ s.t.\ $\vect{y}_j\neq 0$. Then for all $q\in \mathbb{N}$,
	\[
	\frac{\sum_{i=1}^n\left|\vect{x}_i\right|^{2q+1}\vect{y}_i^2}{\sum_{i=1}^n \left|\vect{x}_i\right|^{2q}\vect{y}_i^2} \geq \left(\frac{\sum_{i=1}^n\left|\vect{x}_i\right|^{2q}\vect{y}_i^2}{\sum_{i=1}^n \vect{y}_i^2}\right)^{\frac{1}{2q}}.
	\]
\end{lemma}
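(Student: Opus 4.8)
The plan is to recognize this inequality as a one-line instance of Lyapunov's inequality (equivalently, the power-mean inequality, or Hölder's inequality), with the two hypotheses serving only to keep the relevant sums strictly positive. First I would record the bookkeeping: condition (ii) gives $C:=\sum_{i=1}^n\vect{y}_i^2>0$, so $\mu_i:=\vect{y}_i^2/C$ defines a probability distribution on $[n]$; condition (i) gives $B:=\sum_{i=1}^n|\vect{x}_i|^{2q}\vect{y}_i^2>0$, so the ratio on the left-hand side is well defined and the quantity $B/C$ under the $1/(2q)$-th root is positive. Let $Z$ be the nonnegative random variable equal to $|\vect{x}_i|$ with probability $\mu_i$. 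Dividing every sum in the statement by $C$, the claimed inequality becomes exactly
\[
\frac{\expectation{Z^{2q+1}}}{\expectation{Z^{2q}}} \ge \bigl(\expectation{Z^{2q}}\bigr)^{1/(2q)},
\]
i.e.\ $\expectation{Z^{2q+1}} \ge \expectation{Z^{2q}}^{(2q+1)/(2q)}$.

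To establish this I would apply Lyapunov's inequality with exponents $s=2q$ and $t=2q+1$, both positive since $q\in\ints$, obtaining $\expectation{Z^{2q}}^{1/(2q)}\le \expectation{Z^{2q+1}}^{1/(2q+1)}$; raising both sides to the power $2q+1$ gives $\expectation{Z^{2q}}^{(2q+1)/(2q)}\le\expectation{Z^{2q+1}}$, which is precisely the displayed bound. Undoing the normalization by $C$ recovers the statement.

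For a fully elementary write-up that avoids probabilistic language I would instead invoke Hölder's inequality directly: write
\[
|\vect{x}_i|^{2q}\vect{y}_i^2 = \bigl(|\vect{x}_i|^{2q+1}\vect{y}_i^2\bigr)^{\frac{2q}{2q+1}}\bigl(\vect{y}_i^2\bigr)^{\frac{1}{2q+1}},
\]
and apply Hölder with the conjugate exponents $\frac{2q+1}{2q}$ and $2q+1$ to get $B \le A^{\frac{2q}{2q+1}}C^{\frac{1}{2q+1}}$, where $A=\sum_{i=1}^n|\vect{x}_i|^{2q+1}\vect{y}_i^2$. Raising to the power $2q+1$ yields $B^{2q+1}\le A^{2q}C$, and dividing through by $B$ and $C$ (both positive by (i), (ii)) gives $A/B\ge (B/C)^{1/(2q)}$, as required.

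I do not anticipate any genuine obstacle: the mathematical content is a single convexity fact. The only care needed is in tracking the exponents (checking $\frac{2q}{2q+1}+\frac{1}{2q+1}=1$ and that both Hölder exponents exceed $1$), and in verifying that hypotheses (i) and (ii) are exactly what guarantees every denominator, and every base of a fractional power, is strictly positive.
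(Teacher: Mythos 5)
Your proof is correct and takes essentially the same approach as the paper's own: both reduce the claim to a single application of Hölder's inequality with conjugate exponents $(2q+1)/(2q)$ and $2q+1$ (the Lyapunov phrasing is just a probabilistic repackaging of that same fact). Your handling of hypotheses (i) and (ii) as the positivity conditions is also the same as in the paper.
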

\begin{pf}
	For any $n$-dimensional vectors $\vect{a}=(\vect{a}_1,\ldots,\vect{a}_n)$, $\vect{v}=(\vect{b}_1,\ldots,\vect{b}_n)$ satisfying that (i)$^\prime$ there exists $i\in [n]$ s.t.\ $\vect{a}_i \vect{b}_i\neq 0$, and (ii)$^\prime$ there exists $j\in [n]$ s.t.\ $\vect{b}_j\neq 0$,  Hölder's inequality implies that
	\begin{equation}\label{eq:holder-statement}
	\left(\frac{\sum_{i=1}^n\abs{\vect{a}_i}^{r}}{ \sum_{i=1}^n \abs{\vect{a}_i\vect{b}_i}   }\right)^{\frac{1}{r}} \ge \left( \frac{ \sum_{i=1}^n \abs{\vect{a}_i\vect{b}_i}    }{            \sum_{i=1}^n\abs{\vect{b}_i}^s  }\right)^{\frac{1}{s}},
	\end{equation}
	where $r,s \in [1,\infty]$ with $1/r+1/s=1$. Let $\vect{a}_i=\abs{\vect{x}_i}^{2q}\vect{y}_i^{2/r}$ and $\vect{b}_i=\abs{\vect{y}_i}^{2/s}, \text{ for all } i\in[n]$, then (i) and (ii) imply (i)$^\prime$ and (ii)$^\prime$ respectively. Hence, (\ref{eq:holder-statement}) with $r=(2q+1)/2q$, $s=2q+1$ gives us that
	\[
	\left(\frac{\sum_{i=1}^n\left(\left|\vect{x}_i\right|^{2q}\vect{y}_i^{\frac{4q}{2q+1}}\right)^{\frac{2q+1}{2q}}}{\sum_{i=1}^n\left|\vect{x}_i\right|^{2q}\vect{y}_i^2}\right)^{\frac{2q}{2q+1}}
	\ge \left(\frac{\sum_{i=1}^n\left|\vect{x}_i\right|^{2q}\vect{y}_i^2}{\sum_{i=1}^n\left(\vect{y}_i^{\frac{2}{2q+1}}\right)^{2q+1}}\right)^{\frac{1}{2q+1}}.
	\]
	We conclude this lemma by rearranging the above inequality.
	%		Rearranging the terms gets the desired.
\end{pf}

\medskip
\noindent

\begin{reptheorem}{thm:rsvd-main}\label{proof:rsvd-main}
	Let \Matrix{A} be a positive semidefinite matrix with $\eigvalA{1}>0$ and $\hu=\multirsvd(\Matrix{A},\gaussian{0}{1}^{n\times d},q,d)$ for any $q\in\ints$.
	Then,
	\[
	R(\hu) = \left(\bigOmega\left(\frac{d}{n}\right)\right)^{\frac{1}{2q+1}}
	\]
	holds with probability at least $1-e^{-\bigOmega(d)}$.
\end{reptheorem}
\begin{pf}
	Thanks to Lemma \ref{lem:cos-angle-normal}, the proof follows if the following inequality holds almost surely
	\begin{equation}\label{eq:thm1-1}
		R(\hu)^{2q+1}		\ge \max_{\vect{a}\in\sphere^{d-1}}\frac{\langle \Matrix{S}^T\eigvecA{1}, \vect{a}\rangle^2}{\sum_{i=1}^n\langle \Matrix{S}^T\eigvecA{i}, \vect{a}\rangle^2} = \cos^2\angl{\eigvecA{1}}{\Matrix{S}},
	\end{equation}
	where the equation is due to Definition~\ref{def:smallest-principal-angle}. 
	We show~(\ref{eq:thm1-1}) by Lemma~\ref{lem:psd-rsvd-holder} and the alternating form of $R(\hu)$  follows by \eqref{eq:rsvd-Ra2} in Section~\ref{sec:rsvd-psd},
	\begin{equation}\label{eq:thm1-2}
		R(\hu)=\max_{\vect{a}\in \sphere^{d-1} }	R_{\vect{a}} = \max_{\vect{a}\in \sphere^{d-1} }	\frac{\sum_{i=1}^n\neigval{i}^{2q+1}\langle \Matrix{S}^T\eigvecA{i}, \vect{a}\rangle^2}{\sum_{i=1}^n\neigval{i}^{2q}\langle \Matrix{S}^T\eigvecA{i}, \vect{a}\rangle^2}.
	\end{equation}
	Let $\vect{x}_i=\neigval{i}$ and $\vect{y}_i= \langle \Matrix{S}^T\eigvecA{i}, \vect{a}\rangle, \, \text{for all } i\in [n]$, because $\langle \Matrix{S}^T \eigvecA{1},a\rangle \neq 0$ a.e., the conditions of Lemma~\ref{lem:psd-rsvd-holder}, (i) and (ii)., hold a.e.. Therefore, it holds almost surely that
	\begin{align*}
			R_{\vect{a}} &= \frac{\sum_{i=1}^n\neigval{i}^{2q+1}\langle \Matrix{S}^T\eigvecA{i}, \vect{a}\rangle^2}{\sum_{i=1}^n\neigval{i}^{2q}\langle \Matrix{S}^T\eigvecA{i}, \vect{a}\rangle^2}\ge \left(   \frac{\sum_{i=1}^n\neigval{i}^{2q}\langle \Matrix{S}^T\eigvecA{i}, \vect{a}\rangle^2}{\sum_{i=1}^n\langle \Matrix{S}^T\eigvecA{i}, \vect{a}\rangle^2}\right)^{\frac{1}{2q}}\\
			&=\left(\frac{\sum_{i=1}^n\neigval{i}^{2q}\langle \Matrix{S}^T\eigvecA{i}, \vect{a}\rangle^2}  {\sum_{i=1}^n\neigval{i}^{2q+1}\langle \Matrix{S}^T\eigvecA{i}, \vect{a}\rangle^2} \frac{\sum_{i=1}^n\neigval{i}^{2q+1}\langle \Matrix{S}^T\eigvecA{i}, \vect{a}\rangle^2}{\sum_{i=1}^n\langle \Matrix{S}^T\eigvecA{i}, \vect{a}\rangle^2}\right)^{\frac{1}{2q}} =\left(R_{\vect{a}}^{-1} \frac{\sum_{i=1}^n\neigval{i}^{2q+1}\langle \Matrix{S}^T\eigvecA{i}, \vect{a}\rangle^2}{\sum_{i=1}^n\langle \Matrix{S}^T\eigvecA{i}, \vect{a}\rangle^2}\right)^{\frac{1}{2q}},
	\end{align*}
where the last equation follows from~\eqref{eq:rsvd-Ra2} in Section~\ref{sec:rsvd-psd} again. 
Rearranging the above inequality, we get that
\begin{equation}\label{eq:thm1-3}
	R_{\vect{a}}^{2q+1}
	\geq \frac{\sum_{i=1}^n\neigval{i}^{2q+1}\langle \Matrix{S}^T\eigvecA{i}, \vect{a}\rangle^2}{\sum_{i=1}^n\langle \Matrix{S}^T\eigvecA{i}, \vect{a}\rangle^2}
	\geq \frac{\langle \Matrix{S}^T\eigvecA{1}, \vect{a}\rangle^2}{\sum_{i=1}^n\langle \Matrix{S}^T\eigvecA{i}, \vect{a}\rangle^2},\, \hbox{a.e.,}
\end{equation}
where the second inequality is leveraged the fact that $\sum_{i\neq 1}\neigval{i}^{2q+1}\langle \Matrix{S}^T\eigvecA{i}, \vect{a}\rangle^2\ge 0$. (\ref{eq:thm1-3}) and the definition $R(\hu)=\max_{\vect{a}\in \sphere^{d-1} }	R_{\vect{a}}$ imply~(\ref{eq:thm1-1}) as desired and hence the proof completes.
\end{pf}

\medskip
\noindent

\begin{reptheorem}{thm:tightness-rsvd-main}\label{proof:tightness-rsvd-main}
	For any $q\in \ints$, there exists a positive semi-definite matrix $\Matrix{A}$ with $\eigvalA{1}>0$, 
	so that for $\hu=\multirsvd(\Matrix{A},\gaussian{0}{1}^{n\times d},q,d)$, it holds
	\[
	R(\hu) = \bigO\left(\left(\frac{d}{n}\right)^{\frac{1}{2q+1}}\right),
	\] 
	with probability at least $1-e^{-\bigOmega(d)}$.
\end{reptheorem}
\begin{pf}
	Let \Matrix{A} be a diagonal matrix with $\Matrix{A}_{1,1}=1$ and $\Matrix{A}_{i,i}=(d/n)^{\frac{1}{2q+1}},\,\text{ for all } i\neq 1$. Apparently, $\Matrix{A} = \vect{e}_1^T \vect{e}_1 + \sum_{i=2}^n \alpha\vect{e}_i^T \vect{e}_i$, where $\alpha = (d/n)^{\frac{1}{2q+1}}$ and $\{\vect{e}_1,\ldots, \vect{e}_n\}$ is the canonical basis in $\mathbb{R}^n$. As discussed in Section~\ref{sec:rsvd}, $R (\hu) =\max_{\vect{a}\in \sphere^{d-1} } R_{\vect{a}}$ and the alternating expression of $R_{\vect{a}}$,~\eqref{eq:rsvd-Ra2} in Section~\ref{sec:rsvd-psd}, can be rewritten as
	\begin{equation}\label{eq:thm2-1}
	\text{ for all } \vect{a}\in \sphere^{d-1},	\quad	R_{\vect{a}} 
	= \frac{  \langle \Matrix{S}^T\vect{e}_1,\vect{a}\rangle^2  }{  \langle \Matrix{S}^T\vect{e}_1, \vect{a}\rangle^2+ \sum_{i=2}^n \alpha^{2q}\langle \Matrix{S}^T\vect{e}_i,\vect{a}\rangle^2  }  
	+ \frac{  \sum_{i=2}^n \alpha^{2q+1}\langle \Matrix{S}^T\vect{e}_i,\vect{a}\rangle^2}{  \langle \Matrix{S}^T\vect{e}_1, \vect{a}\rangle^2+ \sum_{i=2}^n \alpha^{2q}\langle \Matrix{S}^T\vect{e}_i,\vect{a}\rangle^2 }.
	\end{equation}
	On the one hand, as $1>(d/n)^{\frac{2q}{2q+1}}=\alpha^{2q}$, the first term in \eqref{eq:thm2-1} is upper bounded as:
	\begin{equation}\label{eq:thm2-2}
		\frac{  \langle \Matrix{S}^T\vect{e}_1,\vect{a}\rangle^2  }{  \langle \Matrix{S}^T\vect{e}_1, a\rangle^2+ \sum_{i=2}^n \alpha^{2q}\langle \Matrix{S}^T\vect{e}_i,\vect{a}\rangle^2  } 
		\leq \frac{  \langle \Matrix{S}^T\vect{e}_1,\vect{a}\rangle^2  }{   \sum_{i=1}^n \alpha^{2q}\langle \Matrix{S}^T\vect{e}_i,\vect{a}\rangle^2  } \le \alpha^{-2q} \cos^2\theta (\vect{e}_1 , \Matrix{S}),
	\end{equation}
	where the second inequality follows directly from the definition of $\cos^2\theta (\vect{e}_1 , \Matrix{S})$. On the other hand, the second term in \eqref{eq:thm2-1} is upper bounded as:
	\begin{equation}\label{eq:thm2-3}
		 \frac{  \sum_{i=2}^n \alpha^{2q+1}\langle \Matrix{S}^T\vect{e}_i,\vect{a}\rangle^2}{  \langle \Matrix{S}^T\vect{e}_1 , \vect{a}\rangle^2+ \sum_{i=2}^n \alpha^{2q}\langle \Matrix{S}^T\vect{e}_i,\vect{a}\rangle^2 }\le  \frac{  \sum_{i=2}^n \alpha^{2q+1}\langle \Matrix{S}^T\vect{e}_i,\vect{a}\rangle^2}{  \sum_{i=2}^n \alpha^{2q}\langle \Matrix{S}^T\vect{e}_i,\vect{a}\rangle^2 }=\alpha .
	\end{equation}
	By substituting \eqref{eq:thm2-2} and\eqref{eq:thm2-3} into \eqref{eq:thm2-1}, we derive that $R_{\vect{a}}\le \alpha^{-2q} \cos^2\theta (\vect{e}_1 , \Matrix{S})+\alpha,\,\text{ for all } \vect{a}\in \sphere^{d-1}$, which provides an upper bound of $R(\hu)$. Finally, invoking Lemma~\ref{lem:cos-angle-normal}, which states that $\cos^2\theta(\vect{e}_1,\Matrix{S})=\bigTheta(d/n)$ with high probability, and recalling that $\alpha=( d/n)^{\frac{1}{2q+1}}$ yields the conclusion.
\end{pf}

\medskip
\noindent

\begin{reptheorem}{thm:rsvd-power-law}\label{proof:rsvd-power-law}
Let $\Matrix{A}$ be a positive semi-definite matrix, $\hu=\multirsvd(\Matrix{A},\gaussian{0}{1}^{n\times d},q,d)$ for any $q\in\ints$, and $i_0$ be defined as in Definition~\ref{def:power-law} in Section~\ref{sec:rsvd-psd}.
Then
\[
R(\hu) = \bigOmega\left(\left(\frac{d}{d+i_0}\right)^{\frac{1}{2q+1}}\right)
\] 
holds with probability at least $1-e^{-\bigOmega(d)}$.
\end{reptheorem}
\begin{pf}
If $i_0=n$, then we subsume the result by Theorem \ref{thm:rsvd-main} directly. Hence we assume that $ i_0<n$ below.
 	
By applying Corollary \ref{cor:length-gaussian} in Appendix \ref{app:C1} with $\delta = \frac{1}{3}, \vect{x} =\vect{u}_1$, we have probability $1-e^{-\bigOmega(d)}$ that 
\begin{equation}\label{eq:power0}
	\frac{2d}{3}\le \norm{\Matrix{S}^T\vect{u}_1}_2^2\le \frac{4d}{3} , 
\end{equation}
	which directly implies that $\Matrix{S}^T\vect{u}_1\neq 0$. In the following, we consider (i). $\sum_{i=1}^{i_0} \neigval{i}^{2q}\langle \Matrix{S}^T\eigvecA{i}, \Matrix{S}^T\eigvecA{1}\rangle^2 >  \sum_{i=i_0+1}^{n} \neigval{i}^{2q}\langle \Matrix{S}^T\eigvecA{i}, \Matrix{S}^T\eigvecA{1}\rangle^2     $; (ii). otherwise. Then, we show the claimed lower bound in (i). and (ii). separately by invoking Lemma \ref{lem:gaussian-sum-of-inner-product} in Appendix \ref{app:tech}, which gives the bounds for the weighted sum with high probability.\\
	
	\underline{(i). $\sum_{i=1}^{i_0} \neigval{i}^{2q}\langle \Matrix{S}^T\eigvecA{i}, \Matrix{S}^T\eigvecA{1}\rangle^2 >  \sum_{i=i_0+1}^{n} \neigval{i}^{2q}\langle \Matrix{S}^T\eigvecA{i}, \Matrix{S}^T\eigvecA{1}\rangle^2 $.}\\
	Roughly speaking in (i), the top $i_0$ terms dominate, hence one can expect the similar proof for Theorem \ref{thm:rsvd-main} without the last $n-i_0$ terms will help us reason. The alternating form of $R(\hu)$ follows by~\eqref{eq:rsvd-Ra2} in Section \ref{sec:4.1}, 
	\begin{equation}\label{eq:power1-1}
	R(\hu)= \max_{\vect{a}\in \sphere^{d-1} }	\frac{\sum_{i=1}^n\neigval{i}^{2q+1}\langle \Matrix{S}^T\eigvecA{i}, \vect{a}\rangle^2}{\sum_{i=1}^n\neigval{i}^{2q}\langle \Matrix{S}^T\eigvecA{i}, \vect{a}\rangle^2} \ge \frac{\sum_{i=1}^{n}\neigval{i}^{2q+1}\langle \Matrix{S}^T\eigvecA{i}, \Matrix{S}^T\vect{u}_1\rangle^2}{\sum_{i=1}^n\neigval{i}^{2q}\langle \Matrix{S}^T\eigvecA{i},  \Matrix{S}^T\vect{u}_1\rangle^2}  > \frac{\sum_{i=1}^{i_0} \neigval{i}^{2q+1}\langle \Matrix{S}^T\eigvecA{i}, \Matrix{S}^T\vect{u}_1\rangle^2}{2\sum_{i=1}^{i_0}\neigval{i}^{2q}\langle \Matrix{S}^T\eigvecA{i},  \Matrix{S}^T\vect{u}_1\rangle^2},
	\end{equation}
	where the first inequality comes from the fact that $\Matrix{S}^T\vect{u}_1/\left\| \Matrix{S}^T\vect{u}_1 \right\|_2 \in \sphere^{d-1}$ and the last one uses that $\sum_{i=i_0+1}^n \neigval{i}^{2q+1}\langle \Matrix{S}^T\eigvecA{i}, \Matrix{S}^T\vect{u}_1\rangle^2\ge  0$ and (i). 
	From \eqref{eq:power1-1}, we repeat the deduction of \eqref{eq:main-intuition3} in Section \ref{sec:4.1}, by viewing $R_{\vect{a}}$ as $2R(\hu)$, $\alpha_i=\alpha_i$ for $i=1,\ldots,i_0$, $\alpha_i=0$ for $i>i_0$, and $\langle \Matrix{S}^T\eigvecA{i},  \Matrix{S}^T\vect{a}\rangle^2=\langle \Matrix{S}^T\eigvecA{i},  \Matrix{S}^T\vect{u}_1\rangle^2$ to conclude that  (an alternative way is to use Lemma~\ref{lem:psd-rsvd-holder} as shown Appendix~\ref{sec:A1})
	\begin{equation}\label{eq:power1-2}
 \frac{\sum_{i=1}^{i_0}\neigval{i}^{2q+1}\langle \Matrix{S}^T\eigvecA{i} ,   \Matrix{S}^T\eigvecA{1}  \rangle^2}{\sum_{i=1}^{i_0}\neigval{i}^{2q}\langle \Matrix{S}^T\eigvecA{i},  \Matrix{S}^T\eigvecA{1} \rangle^2}>\left((2R(\hu))^{-1} \frac{\sum_{i=1}^{i_0}\neigval{i}^{2q+1}\langle \Matrix{S}^T\eigvecA{i},\Matrix{S}^T\eigvecA{1}   \rangle^2}{\sum_{i=1}^{i_0}\langle \Matrix{S}^T\eigvecA{i},  \Matrix{S}^T\eigvecA{1}\rangle^2}\right)^{\frac{1}{2q}}.
	\end{equation}
	Rearranging the inequalities (\ref{eq:power1-1}) and (\ref{eq:power1-2}), we get
	\begin{equation}\label{eq:power1-3}
		(2R(\hu))^{2q+1} >  \frac{\langle \Matrix{S}^T\eigvecA{1}, \Matrix{S}^T\eigvecA{1}\rangle^2}{\sum_{i=1}^{i_0}\langle \Matrix{S}^T\eigvecA{i}, \Matrix{S}^T\eigvecA{1} \rangle^2}\ge \frac{4d^2}{16d^2+9\sum_{1\le i\le i_0}  \langle \Matrix{S}^T\eigvecA{i}, \Matrix{S}^T\eigvecA{1} \rangle^2 },
	\end{equation}
	where the last inequality is a consequence of \eqref{eq:power0}. By applying Lemma \ref{lem:gaussian-sum-of-inner-product} in Appendix \ref{app:tech} with $\epsilon=\frac{1}{3},\delta=\frac{1}{3}$, $\beta_1=\ldots=\beta_{i_0}=1$ and $\beta_{i_0+1}=\ldots =\beta_{n}=0$, then we have probability $1-e^{-\bigOmega(d)}$ that $\sum_{1\le i\le i_0}  \langle \Matrix{S}^T\eigvecA{i}, \Matrix{S}^T\eigvecA{1} \rangle^2 \le \frac{16di_0}{9}$. Together with \eqref{eq:power1-3}, the proof is derived by the union bound.
	
	\medskip
	\noindent
	
	\underline{(ii). $\sum_{i=1}^{i_0} \neigval{i}^{2q}\langle \Matrix{S}^T\eigvecA{i}, \Matrix{S}^T\eigvecA{1}\rangle^2 \le  \sum_{i=i_0+1}^{n} \neigval{i}^{2q}\langle \Matrix{S}^T\eigvecA{i}, \Matrix{S}^T\eigvecA{1}\rangle^2 $.}\\
	
	As $\Matrix{S}^T\vect{u}_1/\left\| \Matrix{S}^T\vect{u}_1 \right\|_2 \in \sphere^{d-1}$, \eqref{eq:rsvd-Ra2} in Section~\ref{sec:rsvd-psd} yields that 
	\[
	R(\hu)	\geq \frac{\sum_{i=1}^n\neigval{i}^{2q+1}\langle \Matrix{S}^T\eigvecA{i}, \Matrix{S}^T\eigvecA{1}\rangle^2}{\sum_{i=1}^n\neigval{i}^{2q}\langle \Matrix{S}^T\eigvecA{i}, \Matrix{S}^T\eigvecA{1}\rangle^2}
	\geq \frac{\langle \Matrix{S}^T\eigvecA{1}, \Matrix{S}^T\eigvecA{1}\rangle^2}{2\sum_{i=i_0+1}^n\neigval{i}^{2q}\langle \Matrix{S}^T\eigvecA{i}, \Matrix{S}^T\eigvecA{1}\rangle^2}  \ge \frac{2d^2}{9\sum_{i=i_0+1}^n\neigval{i}^{2q}\langle \Matrix{S}^T\eigvecA{i}, \Matrix{S}^T\eigvecA{1}\rangle^2} ,
	\]
where the second inequality is due to (ii); the last is a result of \eqref{eq:power0} . By Lemma~\ref{lem:gaussian-sum-of-inner-product} with $\delta=d, \epsilon=\frac{1}{2}$, $\beta_2=\ldots=\beta_{i_0}=0$, and $\beta_{i}= \alpha_i^{2q}$ for all $i=i_0+1,\ldots, n$, we have
\[
\prob{\sum_{i=i_0+1}^n\neigval{i}^{2q}\langle \Matrix{S}^T\eigvecA{i}, \Matrix{S}^T\eigvecA{1}\rangle^2
	\le \frac{3d(d+1)}{2}\sum_{i=i_0+1}^n\neigval{i}^{2q}}
\leq 1- e^{-\bigOmega(d)}.
\]
By Definition~\ref{def:power-law}, since $\gamma>1/q$,
\[
\sum_{i=i_0+1}^n\neigval{i}^{2q} 
\leq C\int_{1}^\infty x^{-2q\gamma}dx < C\int_{1}^\infty x^{-2}dx=C.
\]
Hence, the union bound yields $R (\hu)=\bigOmega(1)$ with probability at least $1-e^{-\bigOmega(d)}$.
\end{pf}

\medskip
\noindent

\subsection{\multirsvd with indefinite matrices}\label{app:A3}
Assumption \ref{assumption:rsvd-indefinite} is restated here for convenience.

\medskip

\begin{repassumption}{assumption:rsvd-indefinite}		
	Assume there exists a constant $\kappa\in(0,1]$ such that
	% $\sum_{i=1}^n\eigvalA{i}^{2q+1} \geq \kappa \sum_{i=1}^n\abs{\eigvalA{i}}^{2q+1}$. % weaker assumption
	$\sum_{i=2}^n\eigvalA{i}^{2q+1} \geq \kappa \sum_{i=2}^n\abs{\eigvalA{i}}^{2q+1}$.
\end{repassumption}

\medskip

\begin{replemma}{lem:rsvd-indefinite}\label{proof:rsvd-indefinite}
Assume that matrix $\Matrix{A}$ satisfies Assumption~\ref{assumption:rsvd-indefinite} and 
$\Matrix{S}\sim\gaussian{0}{1}^{n\times d}$. 
There exists a constant $c_{\kappa}\in(0,1]$ such that  
{\highlight
\[
\mathbb{P}\left[ \sum_{i=1}^n\eigvalA{i}^{2q+1}\langle \Matrix{S}^T\eigvecA{i}, \Matrix{S}^T\eigvecA{1}\rangle^2 \geq c_{\kappa} \sum_{i=1}^n\abs{\eigvalA{i}}^{2q+1}\langle \Matrix{S}^T\eigvecA{i}, \Matrix{S}^T\eigvecA{1}\rangle^2\right] \ge 1-e^{-\bigOmega(\sqrt{d}\kappa^2)}.
\]
}
\end{replemma}
% weaker result, for consistency with \rsum
\begin{pf}
	Recall $\neigval{i}=\eigvalA{i}/\eigvalA{1}$ for all $i\in[n]$ and introduce $\mathcal{I}_+=\{i\in[n]\backslash\{1\}:\neigval{i}>0\}$ and $\mathcal{I}_-=\{i\in[n]:\neigval{i}<0\}$.
	It is natural to assume $\mathcal{I}_- \neq \emptyset$, as otherwise this lemma trivially holds. Also, for simplicity $\kappa\in (0,1]$ is assumed to be the number such that $\sum_{i=2}^n\neigval{i}^{2q+1}=\kappa\sum_{i=2}^n\abs{\neigval{i}}^{2q+1}$ (this number can be found always).\\
	
	Apparently in both sums of interest, $\sum_{i=1}^n \alpha_{i}^{2q+1} \langle \Matrix{S}^T\eigvecA{i}, \Matrix{S}^T\eigvecA{1}\rangle^2$ and $\sum_{i=1}^n\abs{\alpha_i}^{2q+1}\langle \Matrix{S}^T\eigvecA{i}, \Matrix{S}^T\eigvecA{1}\rangle^2$, the largest single term is the first one, $\langle \Matrix{S}^T\eigvecA{1}, \Matrix{S}^T\eigvecA{1}\rangle^2 $, so our initial step is to derive its high probability bound.  Applying Corollary~\ref{cor:length-gaussian} in Appendix \ref{app:C1} with $\vect{x}=\vect{u}_1$ and $\delta=1-\sqrt{1-\kappa/4}$ (resp. $\delta=\sqrt{1+\kappa/4}-1$) for the lower-tail (resp. upper-tail) yields
	{\highlight
		\begin{equation}\label{eq:A3_0}
			\prob{d^2\left(1-\frac{\kappa}{4}\right)
				\leq \langle \Matrix{S}^T\eigvecA{1}, \Matrix{S}^T\eigvecA{1}\rangle^2 
				\leq d^2\left(1+\frac{\kappa}{4}\right)}
			\geq 1-e^{-\bigOmega(d\kappa^2)}.
		\end{equation}
	}
	
	However, as the other terms highly depend on the decay rate of eigenvalues, to derive a high probability bound, we need to carefully choose the parameters when applying concentration inequalities.
	In what follows, we define $s_+=\sum_{i\in\mathcal{I}_+}\neigval{i}^{2q+1}$ and $s_-=\sum_{i\in\mathcal{I}_-}\abs{\neigval{i}}^{2q+1}$, and then prove in two cases: either (i). $s_- = \bigOmega(\sqrt{d})$ or (ii). $s_- = o(\sqrt{d})$.\\
	
	\underline{(i). $s_- = \bigOmega(\sqrt{d})$.} Applying the lower-tail (resp. upper tail) of Lemma~\ref{lem:gaussian-sum-of-inner-product} in Appendix \ref{app:C1} with $\delta=\epsilon=1-\sqrt{1-\kappa/2}$ (resp. $\delta=\epsilon=\sqrt{1+\kappa/2}-1$), $\beta_i=\alpha_i$ for $i\in \mathcal{I}_+$, and $\beta_i=0$ otherwise, we get
	{\highlight
		\begin{equation}\label{eq:A3_11}
			\prob{
				d\left(1-\frac{\kappa}{2}\right)s_+
				\leq \sum_{i\in\mathcal{I}_+}\neigval{i}^{2q+1}\langle \Matrix{S}^T\eigvecA{i}, \Matrix{S}^T\eigvecA{1}\rangle^2
				\leq d\left(1+\frac{\kappa}{2}\right)s_+
			}
			\geq 1-e^{-\bigOmega(\sqrt{d}\kappa^2)}.
		\end{equation}
	}
	In addition, using Lemma~\ref{lem:gaussian-sum-of-inner-product} with $\delta=\epsilon=\sqrt{1+\kappa/2}-1$ in Appendix \ref{app:C1} , $\beta_i=\left|\alpha_i\right|$ for $i\in \mathcal{I}_-$, and $\beta_i=0$ otherwise, we derive
	{\highlight
		\begin{equation}\label{eq:A3_12}
			\prob{
				\sum_{i\in\mathcal{I}_-}\abs{\neigval{i}}^{2q+1}\langle \Matrix{S}^T\eigvecA{i}, \Matrix{S}^T\eigvecA{1}\rangle^2
				\leq d\left(1+\frac{\kappa}{2}\right)s_-
			}
			\geq 1-e^{-\bigOmega(\sqrt{d}\kappa^2)}.
		\end{equation}
	}	
	Now, we prove our assertion. 
	The lower-tails in \eqref{eq:A3_0}\eqref{eq:A3_11} and the upper-tail in \eqref{eq:A3_12} implies 
	\begin{align*}
		\sum_{i=1}^n\neigval{i}^{2q+1}\langle\Matrix{S}\vect{u}_i, \Matrix{S}\vect{u}_1\rangle^2  
		& \ge d\left(d(1-\frac{\kappa}{4}) + (1-\frac{\kappa}{2})s_+ - (1+\frac{\kappa}{2})s_-\right)\\
		& \overset{(a)}{=} d\left(\frac{d(4-\kappa)}{4} + \frac{\kappa(s_+ + s_-)}{2}\right)\\
		& \overset{(b)}{\geq} \frac{\kappa}{3}\left(d\left(d(1+\frac{\kappa}{4}) + (1+\frac{\kappa}{2})(s_+ + s_-)\right)\right)
		\overset{(c)}{\geq} \frac{\kappa}{3}\sum_{i=1}^n\abs{\neigval{i}}^{2q+1}\langle\Matrix{S}\vect{u}_i, \Matrix{S}\vect{u}_1\rangle^2,
	\end{align*}
	where 
	$(a)$ is due to $(1-\kappa)s_+ = (1+\kappa)s_-$ (rearranged from $\sum_{i=2}^n\neigval{i}^{2q+1}=\kappa\sum_{i=2}^n\abs{\neigval{i}}^{2q+1}$),
	$(b)$ is easily checked by comparing the coefficients,
	and $(c)$ follows from the upper-tails in \eqref{eq:A3_0}\eqref{eq:A3_11}\eqref{eq:A3_12}. Therefore, a union bound completes the proof with $c_\kappa=\frac{\kappa}{3}$ in this case.

	\smallskip
	\noindent
	
	\underline{(ii). $s_- = o(\sqrt{d})$.} This is equivalent to say that there exists a constant $c>0$ such that $s_- \leq c\sqrt{d}$.
	Notice that from $\sum_{i=1}^n \alpha_{i}^{2q+1} \langle \Matrix{S}^T\eigvecA{i}, \Matrix{S}^T\eigvecA{1}\rangle^2$ to $\sum_{i=1}^n\abs{\alpha_{i}}^{2q+1}\langle \Matrix{S}^T\eigvecA{i}, \Matrix{S}^T\eigvecA{1}\rangle^2$, only the term with index $i\in\mathcal{I}_-$ changes its sign and we show our assertion in the sense that the terms with indices in $\mathcal{I}_-$  do not affect too much with high probability. \\
	
	Invoking Lemma~\ref{lem:gaussian-sum-of-inner-product} in Appendix \ref{app:tech} with $\delta=\frac{\kappa\sqrt{d}}{8c}$, $\epsilon=\frac{\delta}{1+\delta}$, $\beta_i=\left|\alpha_i\right|$ for $i\in \mathcal{I}_-$, and $\beta_i=0$ otherwise , we get
	{\highlight
		\begin{equation}\label{eq:A3_22}
			\prob{
				\sum_{i\in\mathcal{I}_-}\abs{\neigval{i}}^{2q+1}\langle \Matrix{S}^T\eigvecA{i}, \Matrix{S}^T\eigvecA{1}\rangle^2
				\leq d\left(1+\frac{\kappa\sqrt{d}}{4c}\right)s_-
			}
			\geq 1-e^{-\bigOmega(\sqrt{d}\kappa^2)}.
		\end{equation}
	}
	On the one hand, the lower-tail in \eqref{eq:A3_0} and the upper-tail in \eqref{eq:A3_22} yield that with high probability
	\begin{equation}\label{eq:A3_23}
		\sum_{i=1}^n\neigval{i}^{2q+1}\langle\Matrix{S}\vect{u}_i, \Matrix{S}\vect{u}_1\rangle^2
		\geq d^2\left(1-\frac{\kappa}{4}\right) - d\left(1+\frac{\kappa\sqrt{d}}{4c}\right)s_- + \sum_{i\in\mathcal{I}_+}^n\neigval{i}^{2q+1}\langle\Matrix{S}\vect{u}_i, \Matrix{S}\vect{u}_1\rangle^2.
	\end{equation}
	On the other hand, the upper-tails in \eqref{eq:A3_0} and \eqref{eq:A3_22} imply that with high probability
	\begin{equation}\label{eq:A3_24}
		\sum_{i=1}^n\abs{\neigval{i}}^{2q+1}\langle\Matrix{S}\vect{u}_i, \Matrix{S}\vect{u}_1\rangle^2
		\leq  d^2(1+\frac{\kappa}{4}) + d(1+\frac{\kappa\sqrt{d}}{4c})s_-
		+ \sum_{i\in\mathcal{I}_+}^n\neigval{i}^{2q+1}\langle\Matrix{S}\vect{u}_i, \Matrix{S}\vect{u}_1\rangle^2.
	\end{equation}
	Finally with a union bound on \eqref{eq:A3_23} and \eqref{eq:A3_24} , we have probability at least {\highlight $1-e^{-\bigOmega(\sqrt{d}\kappa^2)}$} that for any $d\geq \left(\frac{14c}{10-7\kappa}\right)^2 = \bigTheta(1)$,
	\begin{align*}
		\sum_{i=1}^n\neigval{i}^{2q+1}\langle\Matrix{S}\vect{u}_i, \Matrix{S}\vect{u}_1\rangle^2 - \frac{\sum_{i=1}^n\abs{\neigval{i}}^{2q+1}\langle\Matrix{S}\vect{u}_i, \Matrix{S}\vect{u}_1\rangle^2}{6} 
		& \geq d^2\left(1-\frac{\kappa}{4}\right) - d\left(1+\frac{\kappa\sqrt{d}}{4c}\right)s_-
		- \frac{d^2(1+\frac{\kappa}{4}) + d(1+\frac{\kappa\sqrt{d}}{4c})s_-}{6}\\
		& \geq 		\frac{(10-7\kappa)d^2 - 14cd\sqrt{d}}{12} \geq 0,
	\end{align*}
	 where the second inequality stems from $s_-\leq c\sqrt{d}$. Hence, the proof is completed with $c_\kappa=\frac{1}{6}$ in this case.
\end{pf}

\medskip
\noindent

\begin{reptheorem}{thm:rsvd-indefinite-power-law}\label{proof:rsvd-indefinite-power-law}
	Assume $\Matrix{A}$ satisfies Assumption~\ref{assumption:rsvd-indefinite}.
	Let $\hu=\multirsvd(\Matrix{A},\gaussian{0}{1}^{n\times d},q,d)$ for any $q\in\ints$.
	Then, {\highlight
		\[R(\hu) = \bigOmega\left(c_{\kappa}\left(\frac{d}{d+i_0}\right)^{\frac{1}{2q+1}}\right)\]  
		with probability at least $1-e^{-\bigOmega(\sqrt{d}\kappa^2)}$.
	} % \[R(\hu) = \bigOmega\left(\left(\frac{d}{d+i_0}\right)^{\frac{1}{2q+1}}\right)\]  with probability at least $1-e^{-\bigOmega(\sqrt{d})}$.
\end{reptheorem}

\begin{pf}
	Evaluating $R_{\vect{a}}$ defined in \eqref{eq:rsvd-Ra2} in Section~\ref{sec:4.1} on $\vect{a}=\Matrix{S}^T\eigvecA{1}/\norm{\Matrix{S}^T\eigvecA{1}}_2$ and by Lemma~\ref{lem:rsvd-indefinite} there exists a constant $c_{\kappa}\in (0,1]$ such that
	\[
	R_{\vect{a}} =
	\frac{\sum_{i=1}^n\neigval{i}^{2q+1}\langle \Matrix{S}^T\eigvecA{i}, \Matrix{S}^T\eigvecA{1}\rangle^2}{\sum_{i=1}^n\neigval{i}^{2q}\langle \Matrix{S}^T\eigvecA{i}, \Matrix{S}^T\eigvecA{1}\rangle^2}
	\geq c_{\kappa}\, \frac{\sum_{i=1}^n\abs{\neigval{i}}^{2q+1}\langle \Matrix{S}^T\eigvecA{i}, \Matrix{S}^T\eigvecA{1}\rangle^2}{\sum_{i=1}^n\neigval{i}^{2q}\langle \Matrix{S}^T\eigvecA{i}, \Matrix{S}^T\eigvecA{1}\rangle^2}
	= 	c_{\kappa}\,\bar{R}_{\vect{a}},
	\]
	where $\bar{R}_{\vect{a}}$ is introduced in \eqref{eq:rsvd-Ra-bar} in Section~\ref{sec:rsvd-indefinite}, with probability at least {\highlight $1-e^{-\bigOmega(\sqrt{d}\kappa^2)}$}. Repeating the arguments in the proof of Theorem~\ref{thm:rsvd-power-law} in Appendix~\ref{proof:rsvd-power-law} with replacing $R(\hu)$ by $\bar{R}_{\vect{a}}$ yields:
	{\highlight
	\[
	\bar{R}_{\vect{a}} = \bigOmega\left(c_{\kappa}\left(\frac{d}{d+i_0}\right)^\frac{1}{2q+1}\right)
	\]
	with probability at least $1-e^{-\bigOmega(\sqrt{d}\kappa^2)}$}, and hence the desired result follows by the union bound.
\end{pf}

\section{Proofs of \rsum}\label{sec:proof-ours}
\subsection{Large deviation of projection length for Bernoulli random matrix}
This subsection is used to prove Lemma~\ref{lem:cos-angle-bernoulli-lb}, which serves as an intermediate step for Theorem~\ref{thm:rsum-main}, restated below. The proof relies on a simple but powerful concept, $\varepsilon$-net. As its usefulness, the definition and related theorems can be found in literature of random matrix. Here we shortly define it and state its important property below Lemma~\ref{lem:cos-angle-bernoulli-lb}. Interested reader are referred to the reference therein.

\medskip
\noindent 

\begin{definition}[$\varepsilon$-net,  Definition 4.2.1 in \citep{vershynin2018high}]
Let $(\sphere^{d-1}, \norm{\cdot}_2)$ be a metric space and $\varepsilon>0$.
A subset $\mathcal{N_{\varepsilon}}\subseteq \sphere^{d-1}$ is called $\varepsilon$-net if 
\[\forall x,y\in\mathcal{N}_{\varepsilon}, \norm{x-y}_2 \leq \varepsilon.\]
\end{definition}

\medskip
\noindent

\begin{lemma}[Corrollary 4.2.13 in \citep{vershynin2018high}]\label{lem:size-of-eps-net}
For any $\varepsilon\in (0,1)$, the size of $\mathcal{N}_{\varepsilon}$ is bounded by
\[ \abs{\mathcal{N}_{\varepsilon}} \leq 3^d\varepsilon^{-d}.\]
\end{lemma}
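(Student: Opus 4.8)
The plan is to establish the bound by the classical volumetric packing argument, reducing the question about nets to a comparison of Lebesgue volumes of balls in $\reals^d$. First I would note that it suffices to prove the estimate for a \emph{maximal} $\varepsilon$-net $\mathcal{N}_\varepsilon \subseteq \sphere^{d-1}$, i.e., a subset whose points are pairwise $\varepsilon$-separated ($\norm{\vect{x}-\vect{y}}_2 \ge \varepsilon$ for distinct $\vect{x},\vect{y}\in\mathcal{N}_\varepsilon$) and to which no further point of $\sphere^{d-1}$ can be added without violating this separation; such a set exists because $\sphere^{d-1}$ is compact, and any $\varepsilon$-net is contained in a maximal one (equivalently, a minimal covering $\varepsilon$-net is automatically $\varepsilon$-separated), so the cardinality bound for the maximal one dominates the general case.

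Next comes the key geometric observation: by the $\varepsilon$-separation property, the open Euclidean balls $B(\vect{x},\varepsilon/2)$ centered at the points $\vect{x}\in\mathcal{N}_\varepsilon$ are pairwise disjoint, since any common point would witness two centers at distance $<\varepsilon$. Moreover, because every $\vect{x}$ lies on the unit sphere, each such ball is contained in the ball $B(\vect{0},1+\varepsilon/2)$ about the origin. Then I would compare Lebesgue volumes in $\reals^d$, using the scaling $\mathrm{vol}(B(\vect{0},r)) = r^d\,\mathrm{vol}(B(\vect{0},1))$: disjointness plus containment give
\[
\abs{\mathcal{N}_\varepsilon}\left(\frac{\varepsilon}{2}\right)^{d}\mathrm{vol}(B(\vect{0},1)) \;\le\; \left(1+\frac{\varepsilon}{2}\right)^{d}\mathrm{vol}(B(\vect{0},1)),
\]
hence $\abs{\mathcal{N}_\varepsilon} \le \left(1+\tfrac{2}{\varepsilon}\right)^{d} = \left(\tfrac{\varepsilon+2}{\varepsilon}\right)^{d}$. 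Finally, since $\varepsilon\in(0,1)$ we have $\varepsilon+2<3$, so $\abs{\mathcal{N}_\varepsilon} \le (3/\varepsilon)^{d} = 3^{d}\varepsilon^{-d}$, as claimed.

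There is no substantial obstacle here; this is the standard "many disjoint small balls packed inside one big ball" volume argument. The only points needing a little care are (i) interpreting the $\varepsilon$-net so that the half-radius balls are genuinely disjoint — this is why one passes to a maximal $\varepsilon$-separated set (or, equivalently, a minimal covering net) rather than an arbitrary one — and (ii) remembering to invoke the hypothesis $\varepsilon<1$ in the very last step to replace the factor $(2+\varepsilon)$ by $3$.
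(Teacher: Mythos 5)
Your volumetric packing argument is correct and is precisely the standard proof of the cited result: the paper itself gives no proof here but defers to Vershynin's Corollary 4.2.13, whose proof is exactly this comparison of disjoint $\varepsilon/2$-balls packed inside $B(\vect{0},1+\varepsilon/2)$, yielding $(1+2/\varepsilon)^d \le 3^d\varepsilon^{-d}$ for $\varepsilon<1$. You also rightly note the one subtlety — the bound is really about a net chosen to be $\varepsilon$-separated (equivalently, the covering number), which is how the lemma is used in the paper's $\varepsilon$-net argument for bounding $\norm{\Matrix{S}}_2$.
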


\medskip
\noindent 

We are ready to prove Lemma~\ref{lem:cos-angle-bernoulli-lb} restated below.\\

\begin{replemma}{lem:cos-angle-bernoulli-lb}\label{proof:cos-angle-bernoulli-lb}
	Let $\vect{v}\in\sphere^{n-1}$, $d\leq n/3$, and $\Matrix{S}\sim\bernoulli{p}^{n\times d}$ for a constant $p\in(0,1)$
	Then,
	\[
	\cos^2\theta(\vect{v}, \Matrix{S}) = \bigOmega\left(\frac{\max\{1,\langle \vect{v}, \vecones{n}\rangle^2\}}{n}\right)
	\]
	holds with probability at least $1-e^{-\bigOmega(d)}$.
\end{replemma}
\begin{pf}
		As it is easy to see that $\Matrix{S}$ is a nonzero matrix with probability $1-e^{-nd}$, the following deduction will be made under $\left\|\Matrix{S}\right\|_2>0$.\\
	By the second inequality in Corollary~\ref{cor:length-bernoulli} in Appendix~\ref{app:C1} with $\vect{x}=\vect{v}$ and $\delta=1/2$, we deduce that
	\begin{equation}\label{eq:ours1}
	\prob{\norm{\Matrix{S}^T\vect{v}}_2 \geq \sqrt{\frac{dp(1-p+p\langle \vect{v},\vecones{n}\rangle^2)}{2}} }
	\geq 1-e^{-\bigOmega\left(d\right)}.
	\end{equation}
		Recall that $\cos\theta(\vect{v},\Matrix{S}) = \max_{\vect{a}\in\sphere^{d-1}}\frac{\langle \vect{v}, \Matrix{S}\vect{a}\rangle}{\norm{\Matrix{S}\vect{a}}_2} $, \eqref{eq:ours1} allows us to substitute $\vect{a}= \Matrix{S}^T\vect{v}/\norm{\Matrix{S}^T\vect{v}}$ and have
		\[
		\cos\theta(\vect{v}, \Matrix{S})
		\geq \frac{\norm{\Matrix{S}^T\vect{v}}_2^2}{\norm{\Matrix{S}\Matrix{S}^T\vect{v}}_2}
		\geq \frac{\norm{\Matrix{S}^T\vect{v}}_2}{\norm{\Matrix{S}}_2} \ge \frac{\sqrt{dp(1-p+p\langle \vect{v},\vecones{n}\rangle^2)} }{\sqrt{2}  \norm{\Matrix{S}}_2},
		\]
	where the second inequality is due to submultiplicativity of $\left\|\cdot\right\|_2$, namely $\norm{\Matrix{S}\Matrix{S}^T\vect{v}}_2\leq \norm{\Matrix{S}}_2\norm{\Matrix{S}^T\vect{v}}_2$, and the last one is a consequence of \eqref{eq:ours1}. It remains to show that $\norm{\Matrix{S}}_2\le \bigO\left(\sqrt{nd}\right)$ w.h.p., then the proof is done. For this goal, we use the $\varepsilon$-net technique, introduced in the beginning of this subsection, and give a bound in two steps:
	\begin{enumerate}
		\item[\underline{(i).}] Let $\mathcal{N}_{\varepsilon}$ be an $\varepsilon$-net defined on $(\sphere^{d-1}, \norm{\cdot}_2)$ for some $\varepsilon\in(0,1)$ to be determined later.
		We claim that
		\begin{equation}\label{eq:our2}
		\norm{\Matrix{S}}_2 \leq \frac{1}{1-\varepsilon} \sup_{\vect{x}\in\mathcal{N}_{\varepsilon}}\norm{\Matrix{S}\vect{x}}_2.
		\end{equation}
		Let $\vect{w}^*\in\argmax_{\vect{x}\in\sphere^{d-1}}\norm{\Matrix{S}\vect{x}}_2$, and since there exists $\vect{x}^*\in\mathcal{N}_{\varepsilon}$ satisfying $\norm{\vect{w}^*-\vect{x}^*}_2\leq \varepsilon$, by submultiplicativity and triangle inequality, we get
		\[
		\varepsilon\norm{\Matrix{S}}_2 \geq \norm{\Matrix{S}(\vect{w}^*-\vect{x}^*)}_2 \geq \norm{\Matrix{S}}_2 - \norm{\Matrix{S}\vect{x}^*}_2 \ge \norm{\Matrix{S}}_2 -\sup_{\vect{x}\in\mathcal{N}_{\varepsilon}}\norm{\Matrix{S}\vect{x}}_2,
		\]
		and rearranging the terms yields \eqref{eq:our2}.
		
		\item[\underline{(ii).}] Show that
		\begin{equation}\label{eq:ours3}
		\prob{\sup_{\vect{x}\in\mathcal{N}_{\varepsilon}}\norm{\Matrix{S}\vect{x}}_2 \leq \left(\frac{3}{2}\, np(1-p + pd) \right)^{\frac{1}{2}} } \geq 1-  3^d\varepsilon^{-d}e^{-\bigOmega (n)} \ge 1-e^{-\Omega(n+d\ln\frac{\varepsilon}{3})}.
		\end{equation}
		
		For each $\vect{x}\in\mathcal{N}_{\varepsilon}$, the first inequality in Corollary~\ref{cor:length-bernoulli} in Appendix~\ref{app:C1} with $\vect{x}=\vect{x}$ and $\delta=\frac{1}{2}$ (here $n$ and $d$ are reversed) implies that we have probability at least $1-e^{-\bigOmega(n)}$
		\[
		\norm{\Matrix{S}\vect{x}}_2
		\le \left( \frac{3}{2}\, np(1-p + p\langle \vect{x}, \vecones{d}  \rangle^2)\right)^{\frac{1}{2}}
		\leq \left(\frac{3}{2}\, np(1-p + pd)\right)^{\frac{1}{2}},
		\]
		where the last inequality is due to $\langle \vect{x},    \vecones{d}   \rangle^2\leq d$. As the size of $\mathcal{N}_{\varepsilon}$ is upper bounded by $  3^d\varepsilon^{-d} $ (see Lemma~\ref{lem:size-of-eps-net} on the top of this subsection), the union bound over all $\vect{x}\in \mathcal{N}_{\varepsilon}$ yields \eqref{eq:ours3}.
		%By the union bound and Lemma~\ref{lem:size-of-eps-net} (an upperbound of $\abs{\mathcal{N}_{\varepsilon}}$),
		
	\end{enumerate}
	Finally, setting $\varepsilon=1/e$ in \eqref{eq:our2}-\eqref{eq:ours3} and assumption $n-2d>d$ lead to $\norm{\Matrix{S}\vect{x}}_2\le \bigO\left(\sqrt{nd}\right)$ holds with probability at least $1-e^{-\Omega(n-2d)} >1-e^{-\Omega(d)} $. The union bound completes our proof as desired.	
\end{pf}

\subsection{\rsum with positive semidefinite matrices}
\begin{reptheorem}{thm:rsum-main}\label{proof:rsum-main}
	Let \Matrix{A} be a positive semi-definite matrix with $\eigvalA{1}>0$ and $\hu=\rsum(\Matrix{A},q,d,p)$ for any constant $p\in(0,1)$, any $q\in\ints$, and $d\geq 2$.
	Then, 
	\[R(\hu) = \left(\bigOmega\left(\frac{\max\{d, \langle \eigvecA{1}, \vecones{n}\rangle^2\}}{n}\right)\right)^{\frac{1}{2q+1}}\]
	with probability at least $1-e^{-\bigOmega(d)}$.
\end{reptheorem}
\begin{pf}
	Define
	$\mathcal{A}_1 =\Big\{\left[\begin{matrix}\vect{a_1}\\\veczeros{\lfloor\frac{d}{2}\rfloor}\end{matrix}\right]: \vect{a_1}\in\sphere^{\lceil\frac{d}{2}\rceil -1}\Big\}$ and $\mathcal{A}_2 =\Big\{\left[\begin{matrix}\veczeros{\lceil\frac{d}{2}\rceil}\\ \vect{a_2}\end{matrix}\right]:\vect{a_2}\in\sphere^{\lfloor\frac{d}{2}\rfloor-1}\Big\}$.
	Since $R(\hu)\geq\max_{\vect{a}\in\sphere^{d-1}}R_{\vect{a}}$, where $R_{\vect{a}}$ is introduce on Section~\ref{sec:4.1} and has an expression~\eqref{eq:rsvd-Ra2}, we can conclude that
	\[
		R(\hu)  \geq \max\left\{	\max_{\vect{a}\in\mathcal{A}_1}R_{\vect{a}}, \max_{\vect{a}\in\mathcal{A}_2}R_{\vect{a}}
		\right\} \geq \max\left\{\cos^2\theta(\eigvecA{1}, \Matrix{S_1}), \cos^2\theta(\eigvecA{1}, \Matrix{S_2})\right\}^{\frac{1}{2q+1}},
	\]
	where the last inequality is an application of \eqref{eq:rsvd-key} in Section~\ref{sec:4.1}. The proof is completed by Lemma~\ref{lem:cos-angle-normal} and Lemma~\ref{lem:cos-angle-bernoulli-lb}.
\end{pf}

\medskip
\noindent

\subsection{\rsum with indefinite matrices}
Assumption \ref{assumption:rsum-indefinite} is restated here for convenience.

\medskip

\begin{repassumption}{assumption:rsum-indefinite}		
		Assume that (i) $\langle\vect{u}_1,\vecones{n}\rangle^2=\bigOmega(1)$ and
		(ii) there exists a constant $\kappa'\in(0,1]$ such that
		\[
		\sum_{i=2}^n\eigvalA{i}^{2q+1}\xi_i \geq \kappa' \sum_{i=2}^n\abs{\eigvalA{i}}^{2q+1}\xi_i,
		\]
		where $\xi_i=\expectation{\langle\Matrix{S}^T\vect{u}_i,\frac{\vecones{d}}{\sqrt{d}}\rangle^2}= p(1-p+pd\langle\vect{u}_i,\vecones{n}\rangle^2)$, $\forall i\in[n]$. 
\end{repassumption}

\medskip

\begin{replemma}{lem:rsum-indefinite}\label{proof:rsum-indefinite}
	Assume that $\Matrix{A}$ satisfies Assumption~\ref{assumption:rsum-indefinite}.
	Let $\Matrix{S}\sim\bernoulli{p}^{n\times d}$ for a constant $p\in(0,1)$.
	There exists a constant $c_{\kappa'}\in(0,1]$ such that
	\[
	\prob{\sum_{i=1}^n\eigvalA{i}^{2q+1}\langle \Matrix{S}^T\eigvecA{i}, \Matrix{S}^T\eigvecA{1}\rangle^2 \geq c_{\kappa'} \sum_{i=1}^n\abs{\eigvalA{i}}^{2q+1}\langle \Matrix{S}^T\eigvecA{i}, \Matrix{S}^T\eigvecA{1}\rangle^2} \geq 1-e^{-\bigOmega(\sqrt{d}{\kappa'}^2)}.
	\]
\end{replemma}
\begin{pf}
	Here we introduce
		\[
			\mu_i =\expectation{\norm{\Matrix{S}_{:,1}^T\vect{u}_i}_2^2}=p(1-p+p\langle\vect{u}_i,\vecones{n}\rangle^2), \, \forall i\in[n].
		\]
	Recall that $\neigval{i}=\eigvalA{i}/\eigvalA{1}$ for all $i\in[n]$. From Assumption~\ref{assumption:rsum-indefinite}, we have:
	\begin{itemize}
		\item 
			By (i), there exists a constant $\nu\in(0,1]$ such that $\langle\vect{u}_1,\vecones{n}\rangle^2\geq \nu$.
			It follows that
			\begin{equation}\label{eq:xi1-mu1}
			\xi_1 \geq p^2d\langle\vect{u}_1,\vecones{n}\rangle^2 = pd\cdot p((1-p)\langle\vect{u}_1,\vecones{n}\rangle^2 + p\langle\vect{u}_1,\vecones{n}\rangle^2) \geq p\nu  d\mu_1.
			\end{equation}
		\item By (ii), there exists $\kappa'\in (0,1]$ such that $\sum_{i=2}^n\neigval{i}^{2q+1}\xi_i=\kappa'\sum_{i=2}^n\abs{\neigval{i}}^{2q+1}\xi_i$.
	\end{itemize}
		We then partition $[n]$ into three subsets, $[n]=\{1\}\cup \mathcal{I}_+\cup \mathcal{I}_-$, where $\mathcal{I}_+=\{i\in[n]\backslash\{1\}:\neigval{i}>0\}$ and $\mathcal{I}_-=\{i\in[n]:\neigval{i}<0\}$.
It is natural to assume $\mathcal{I}_- \neq \emptyset$, as otherwise this lemma trivially holds. As similar to what we proceed in the proof of Lemma~\ref{lem:rsvd-indefinite} in Appendix~\ref{app:A3}, two important quantities follows from this partition: $s_+=\sum_{i\in\mathcal{I}_+}\neigval{i}^{2q+1}\xi_i$ and $s_-=\sum_{i\in\mathcal{I}_-}\abs{\neigval{i}}^{2q+1}\xi_i$. \\

	Firstly, for the term $\langle \Matrix{S}^T\eigvecA{1}, \Matrix{S}^T\eigvecA{1}\rangle^2$, applying Corollary~\ref{cor:length-bernoulli} in Appendix~\ref{app:C1} with $\vect{x}=\vect{u}_1$ and $\delta=1-\sqrt{1-\kappa'/4}$ (resp. $\delta=\sqrt{1+\kappa'/4}-1$) for the lower-tail (resp. upper-tail) yields that
	\begin{equation}\label{eq:B3_0}
		\prob{d^2\mu_1^2\left(1-\frac{\kappa'}{4}\right)
			\leq \langle \Matrix{S}^T\eigvecA{1}, \Matrix{S}^T\eigvecA{1}\rangle^2 
			\leq d^2\mu_1^2\left(1+\frac{\kappa'}{4}\right)}
		\geq 1-e^{-\bigOmega(d{\kappa'}^2)}.
	\end{equation}
	As for the remaining terms, we carefully apply concentration inequalities under two scenarios: either (i). $s_- = \bigOmega(\sqrt{d})$ or (ii). $s_- = o(\sqrt{d})$.\\
	
	\underline{(i). $s_- = \bigOmega(\sqrt{d})$.} Invoking the lower-tail (resp. upper-tail) of Lemma~\ref{lem:bernoulli-sum-of-inner-product} in \ref{app:tech} with $\delta=\epsilon=\sqrt{1+\kappa'/2}-1$ (resp. $\delta=\epsilon=\sqrt{1-\kappa'/2}-1$), $\beta_i=\alpha_i$ for $i\in \mathcal{I}_+$, and $\beta_i=0$ otherwise, we get
	{\highlight
	\begin{equation}\label{eq:B3_11}
		\prob{
			\xi_1\left(1-\frac{\kappa'}{2}\right)s_+
			\leq \sum_{i\in\mathcal{I}_+}\neigval{i}^{2q+1}\langle \Matrix{S}^T\eigvecA{i}, \Matrix{S}^T\eigvecA{1}\rangle^2
			\leq d\mu_1\left(1+\frac{\kappa'}{2}\right)s_+
		}
		\geq 1-e^{-\bigOmega(\sqrt{d}{\kappa'}^2)}.
	\end{equation}
	}
	Again, using Lemma~\ref{lem:bernoulli-sum-of-inner-product} with $\delta=\epsilon=\sqrt{1+\kappa'/2}-1$, $\beta_i=\left|\alpha_i\right|$ for $i\in \mathcal{I}_-$, and $\beta_i=0$ otherwise leads to
	\begin{equation}\label{eq:B3_12}
		\prob{
			\sum_{i\in\mathcal{I}_-}\abs{\neigval{i}}^{2q+1}\langle \Matrix{S}^T\eigvecA{i}, \Matrix{S}^T\eigvecA{1}\rangle^2
			\leq d\mu_1\left(1+\frac{\kappa'}{2}\right)s_-
		}
		\geq 1-e^{-\bigOmega(\sqrt{d}{\kappa'}^2)}.
	\end{equation}	
	Now, we prove our assertion. The lower-tails in \eqref{eq:B3_0}\eqref{eq:B3_11} and upper-tail in \eqref{eq:B3_12} imply that
	{\highlight
	\begin{align*}
		\sum_{i=1}^n\neigval{i}^{2q+1}\langle\Matrix{S}\vect{u}_i, \Matrix{S}\vect{u}_1\rangle^2  
		& \geq d\mu_1\left(d\mu_1(1-\frac{\kappa'}{4}) + \frac{\xi_1}{d\mu_1}(1-\frac{\kappa'}{2})s_+ - (1+\frac{\kappa'}{2})s_-\right)\\
		& \overset{(a)}{\geq} p\nu\cdot d\mu_1\left(d\mu_1(1-\frac{\kappa'}{4}) + (1-\frac{\kappa'}{2})s_+ - (1+\frac{\kappa'}{2})s_-\right)\\
		& \overset{(b)}{=} p\nu\cdot d\mu_1\left(\frac{d(4-\kappa')\mu_1}{4} + \frac{\kappa'(s_+ + s_-)}{2}\right)\\
		& \overset{(c)}{\geq} \frac{p\nu\kappa'}{3}\left(d\mu_1\left(\frac{d(4+\kappa')\mu_1}{4} + (1+\frac{\kappa'}{2})(s_+ + s_-)\right)\right)
		\overset{(d)}{\geq} \frac{p\nu\kappa'}{3}\sum_{i=1}^n\abs{\neigval{i}}^{2q+1}\langle\Matrix{S}\vect{u}_i, \Matrix{S}\vect{u}_1\rangle^2,
	\end{align*}}
	where 
	$(a)$ uses \eqref{eq:xi1-mu1}, 
	$(b)$ is due to $(1-\kappa')s_+ = (1+\kappa')s_-$ (rearranged from $\sum_{i\neq 1}\neigval{i}^{2q+1}\xi_i=\kappa'\sum_{i\neq 1}\abs{\neigval{i}}^{2q+1}\xi_i$),
	$(c)$ is easily checked by comparing the coefficients,
	and $(d)$ follows from the upper-tails in \eqref{eq:B3_0}\eqref{eq:B3_11}\eqref{eq:B3_12}. Therefore, a union bound completes the proof with 
	{\highlight $c_{\kappa'}=\frac{p\nu\kappa'}{3}$} 
	in this case.

	\smallskip
	\noindent
	
	\underline{(ii). $s_- = o(\sqrt{d})$.} There exists a constant $c>0$ such that $s_- \leq c\sqrt{d}$.
	Observe that for two summations of interest, $\sum_{i=1}^n \alpha_{i}^{2q+1} \langle \Matrix{S}^T\eigvecA{i}, \Matrix{S}^T\eigvecA{1}\rangle^2$ and $\sum_{i=1}^n\abs{\alpha_{i}}^{2q+1}\langle \Matrix{S}^T\eigvecA{i}, \Matrix{S}^T\eigvecA{1}\rangle^2$, only the terms in $\mathcal{I}_-$ change their signs. Our assertion follows in the sense that the terms with indices in $\mathcal{I}_-$  do not affect too much with high probability. \\
	
	Invoking Lemma~\ref{lem:bernoulli-sum-of-inner-product} in ~\ref{app:tech} with
	{\highlight $\delta=\frac{\kappa'(\sqrt{d}-1)\mu_1}{4c}$, $\epsilon=\frac{\kappa'\mu_1}{4c(1+\delta)}$, }
	$\beta_i=\left|\alpha_i\right|$ for $i\in \mathcal{I}_-$, and $\beta_i=0$ otherwise , we get
	\begin{equation}\label{eq:B3_22}
		\prob{
			\sum_{i\in\mathcal{I}_-}\abs{\neigval{i}}^{2q+1}\langle \Matrix{S}^T\eigvecA{i}, \Matrix{S}^T\eigvecA{1}\rangle^2
			\leq d\mu_1\left(1+\frac{\kappa'\sqrt{d}\mu_1}{4c}\right)s_-
		}
		\geq 1-e^{-\bigOmega(\sqrt{d}{\kappa'}^2)}.
	\end{equation}
	On the one hand, the lower-tail in \eqref{eq:B3_0} and the upper-tail in \eqref{eq:B3_22} yield that
	\begin{equation}\label{eq:B3_23}
		\sum_{i=1}^n\neigval{i}^{2q+1}\langle\Matrix{S}\vect{u}_i, \Matrix{S}\vect{u}_1\rangle^2
		\geq d^2\mu_1^2\left(1-\frac{\kappa'}{4}\right) - d\mu_1\left(1+\frac{\kappa'\sqrt{d}\mu_1}{4c}\right)s_- + \sum_{i\in\mathcal{I}_+}^n\neigval{i}^{2q+1}\langle\Matrix{S}\vect{u}_i, \Matrix{S}\vect{u}_1\rangle^2
	\end{equation}
	On the other hand, the upper-tails in \eqref{eq:B3_0}\eqref{eq:B3_22} imply that
	\begin{equation}\label{eq:B3_24}
		\sum_{i=1}^n\abs{\neigval{i}}^{2q+1}\langle\Matrix{S}\vect{u}_i, \Matrix{S}\vect{u}_1\rangle^2
		\leq  d^2\mu_1^2(1+\frac{\kappa'}{4}) + d\mu_1(1+\frac{\kappa'\sqrt{d}\mu_1}{4c})s_-
		+ \sum_{i\in\mathcal{I}_+}^n\neigval{i}^{2q+1}\langle\Matrix{S}\vect{u}_i, \Matrix{S}\vect{u}_1\rangle^2
	\end{equation}
	As a consequence of a union bound on \eqref{eq:B3_23}\eqref{eq:B3_24}, we have with probability at least $1-e^{-\bigOmega(\sqrt{d}{\kappa'}^2)}$, 
	\begin{align*}
		& \sum_{i=1}^n\neigval{i}^{2q+1}\langle\Matrix{S}\vect{u}_i, \Matrix{S}\vect{u}_1\rangle^2 - \frac{1}{6}\cdot \sum_{i=1}^n\abs{\neigval{i}}^{2q+1}\langle\Matrix{S}\vect{u}_i, \Matrix{S}\vect{u}_1\rangle^2 \\
		& \qquad \geq d^2\mu_1^2\left(1-\frac{\kappa'}{4}\right) - d\mu_1\left(1+\frac{\kappa'\sqrt{d}\mu_1}{4c}\right)s_-
		- \frac{d^2\mu_1^2(1+\frac{\kappa'}{4}) + d\mu_1(1+\frac{\kappa'\sqrt{d}\mu_1}{4c})s_-}{6}\\
		& \qquad\geq 
		\frac{1}{12}\left((10-7\kappa')d^2\mu_1^2 - 14cd\sqrt{d}\mu_1\right) \geq 0,
	\end{align*}
	where the second inequality is due to $s_-\leq c\sqrt{d}$, for any $d\geq \left(\frac{14c}{(10-7\kappa')\mu_1}\right)^2 = \bigTheta(1)$. Hence, the proof is completed with $c_{\kappa'}=\frac{1}{6}$ in this case.
\end{pf}

\medskip
\noindent

\begin{reptheorem}{thm:rsum-indefinite-stronger}\label{proof:rsum-indefinite-stronger}
	Assume that $\Matrix{A}$ satisfies Assumptions~\ref{assumption:rsvd-indefinite} and ~\ref{assumption:rsum-indefinite}.
	Let $\hu=\rsum(\Matrix{A},q,d,p)$ for any constant $p\in(0,1)$ and any $q\in\ints$, 
	and $i_0$ be defined as in Definition~\ref{def:power-law} in Section~\ref{sec:rsvd-psd}.
	Then,
	{\highlight
	\[R(\hu) = \bigOmega\left(\max\left\{
		c_{\kappa}\left(\frac{d}{d+i_0}\right)^{\frac{1}{2q+1}},
		c_{\kappa'}\left(\frac{\max\{d, \langle \eigvecA{1}, \vecones{n}\rangle^2\}}{n}\right)^{\frac{1}{2q+1}}
	\right\}\right)\]
	with probability at least $1-e^{-\bigOmega(\sqrt{d}\min(\kappa,\kappa')^2)}$.
	}
\end{reptheorem}

\begin{pf}
	Let 
	\[
	\vect{a}_1=\left[
	\begin{matrix}
		\frac{\Matrix{S}_1^T\eigvecA{1}}{\norm{\Matrix{S}_1^T\eigvecA{1}}_2}\\
		\veczeros{\lfloor\frac{d}{2}\rfloor}
	\end{matrix}\right]
	\quad\text{ and }\quad
	\vect{a}_2=\left[
	\begin{matrix}
		\veczeros{\lceil\frac{d}{2}\rceil}\\
		\frac{\Matrix{S}_2^T\eigvecA{1}}{\norm{\Matrix{S}_2^T\eigvecA{1}}_2}
	\end{matrix}\right].
	\]
	{\highlight
	A union bound of Lemma~\ref{lem:rsvd-indefinite} and Lemma~\ref{lem:rsum-indefinite} implies that there exist constants $c_{\kappa}$ and $c_{\kappa'}$ such that
	\[
	R_{\vect{a}_1} \geq c_{\kappa}\bar{R}_{\vect{a}_1}
	\quad\text{ and }\quad
	R_{\vect{a}_2} \geq c_{\kappa'}\bar{R}_{\vect{a}_2},
	\]
	with probability at least $1-e^{-\bigOmega(\sqrt{d}\kappa^2)}-e^{-\bigOmega(\sqrt{d}{\kappa'}^2)}$},	where $R_{\vect{a}}$ and $\bar{R}_{\vect{a}}$ are defined in \eqref{eq:rsvd-Ra2} in Section~\ref{sec:4.1} and \eqref{eq:rsvd-Ra-bar} in Section~\ref{sec:rsvd-indefinite}, respectively.
	Hence, 
	\[
	\prob{R(\hu) 
	\geq \max\left\{c_{\kappa}\bar{R}_{\vect{a}_1}, c_{\kappa'}\bar{R}_{\vect{a}_2}\right\}
	} \geq 1-e^{-\bigOmega(\sqrt{d}\min(\kappa,\kappa')^2)}.
	\]
	Finally, applying similar argument in the proof of Theorem~\ref{thm:rsvd-power-law} (see Appendix \ref{proof:rsvd-power-law}) to lower bound $\bar{R}_{\vect{a}_1}$ and Theorem~\ref{thm:rsum-main} to lower bound $\bar{R}_{\vect{a}_2}$ completes the proof.
\end{pf}

\section{Concentration inequalities}\label{sec:tools}

{\highlight
Before showing our lemmas on  both Gaussian and Bernoulli random variables, there are some necessary definition and standard concentration inequalities to be introduced. For the random variables considered in this work, sub-gaussian and sub-exponential norms are useful to quantify the probabilities of rare events.
In~\ref{app:sub}, we introduce them for completeness and list the concentration inequalities (Hoeffding, Bernstein, and Hanson-Wright inequalities) used in the following proofs. In~\ref{app:C1}, we provide two corollaries yielded by Bernstein inequality for Gaussian and Bernoulli distributions respectively. Finally, our technical lemmas for these two random variables will be shown in~\ref{app:tech}. 
}

\medskip
\noindent

\subsection{Sub-gaussian norm and sub-exponential norm}\label{app:sub}

\begin{definition}[Definition 2.5.6 \citep{vershynin2018high}]\label{def:subgaussian-norm}
	The sub-gaussian norm $\norm{\cdot}_{\psi_2}$ is a norm on the space of sub-gaussian random variables.
	For any sub-gaussian random variable $X$, 
	\[\norm{X}_{\psi_2}=\inf\{t>0:\expectation{\exp\left(X^2/t^2\right)} \leq 2\}.\]
\end{definition}

\smallskip
\noindent

The sum of sub-gaussian random variables is still a sub-gaussian random variable, and its norm can be characterized by the following Proposition.\\

\begin{proposition}[Proposition 2.6.1 \citep{vershynin2018high}]\label{prop:subgaus-rot-inv}
	Let $X_1,\cdots,X_m$ be a zero-mean sub-gaussian random variables.
	Then, 
	\[
	\norm{\sum_{i\in[m]}X_i}_{\psi_2}^2 = \bigO\left(\sum_{i\in[m]}\norm{X_i}_{\psi_2}^2\right),
	\]
	where $\bigO$ hides an absolute constant.
\end{proposition}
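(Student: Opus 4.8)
The plan is to reduce the statement to the moment-generating-function (MGF) characterization of sub-gaussianity, use independence of the $X_i$ to factorize the MGF of the sum, and then translate the resulting bound back into a statement about the $\psi_2$-norm. (As usual, one needs the $X_i$ to be independent, which is implicit here; without independence the inequality is false, as the case $X_i=X$ shows.)

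First I would recall the standard equivalence between the various sub-gaussian parameters (e.g.\ Vershynin, Proposition 2.5.2): there is an absolute constant $c\geq 1$ such that, for every mean-zero sub-gaussian random variable $X$,
\[
\expectation{\exp(\lambda X)} \leq \exp\!\left(c\,\lambda^2\norm{X}_{\psi_2}^2\right)\qquad\text{for all }\lambda\in\reals,
\]
and, conversely, whenever $\expectation{\exp(\lambda X)}\leq \exp(K^2\lambda^2)$ holds for all $\lambda\in\reals$, one has $\norm{X}_{\psi_2}\leq c'K$ for another absolute constant $c'$. The mean-zero hypothesis on each $X_i$ is exactly what licenses the forward (MGF) bound.

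Next, writing $S=\sum_{i\in[m]}X_i$ and $K^2 = \sum_{i\in[m]}\norm{X_i}_{\psi_2}^2$, independence gives, for every $\lambda\in\reals$,
\[
\expectation{\exp(\lambda S)} = \prod_{i\in[m]}\expectation{\exp(\lambda X_i)} \leq \prod_{i\in[m]}\exp\!\left(c\,\lambda^2\norm{X_i}_{\psi_2}^2\right) = \exp\!\left(c\,\lambda^2 K^2\right).
\]
Since $S$ is itself mean-zero, the converse direction of the equivalence yields $\norm{S}_{\psi_2}\leq c'\sqrt{c}\,K$, i.e.\ $\norm{S}_{\psi_2}^2 = \bigO(K^2) = \bigO\big(\sum_{i\in[m]}\norm{X_i}_{\psi_2}^2\big)$, which is the claim.

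The only delicate point is bookkeeping the absolute constants through the two directions of the sub-gaussian equivalence; once a concrete version of that equivalence is fixed — and one observes that it is precisely the MGF form that tensorizes over independent summands — no further estimates are required. An alternative route, bypassing the MGF entirely, would be to control the moments $\expectation{|S|^p}^{1/p}$ directly via Minkowski's inequality together with a Rosenthal/Khintchine-type bound, but the MGF argument is shorter and is the one I would carry out.
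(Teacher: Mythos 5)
The paper does not prove this proposition; it is stated verbatim as a cited result from \citet{vershynin2018high} (Proposition 2.6.1) and used as a black box in the appendix. Your MGF-based argument — using the mean-zero hypothesis to invoke the moment-generating-function characterization, tensorizing over independent summands, and then converting back to a $\psi_2$-bound — is precisely the textbook proof from that reference, and it is correct. You are also right that independence of the $X_i$ is necessary and is tacitly assumed; the statement in the paper omits it, but every place the proposition is invoked (e.g.\ in the proof of Lemma~\ref{lem:bernoulli-sum-of-inner-product}, where the summands come from i.i.d.\ entries of a Bernoulli matrix) independence does hold, so the omission is a typographical slip rather than a gap in the paper.
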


\smallskip
\noindent

\begin{definition}[Definition 2.7.5 \citep{vershynin2018high}]\label{def:subexp-norm}
	The sub-exponential norm $\norm{\cdot}_{\psi_1}$ is a norm on the space of sub-exponential random variables.
	For any sub-exponential random variable $X$, 
	\[\norm{X}_{\psi_1}=\inf\{t>0:\expectation{\exp\left(\abs{X}/t\right)} \leq 2\}.\]
\end{definition}

\smallskip
\noindent

If $X$ is sub-gaussian random variable, then $X$ is also a sub-exponential random variable. Besides, there is one well-known property for these two norms.\\

\begin{proposition}[Lemma 2.7.6 \citep{vershynin2018high}]\label{prop:subgaus-exp}
	Let $X$ be a zero-mean sub-gaussian random variable.
	Then, \[\norm{X}_{\psi_2}^2=\norm{X^2}_{\psi_1}.\]
\end{proposition}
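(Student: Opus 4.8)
The plan is to derive the identity directly from the two norm definitions by a simple change of variables; no probabilistic input is needed, and in fact the zero-mean hypothesis plays no role (it is presumably stated only because the result will later be applied to centered random variables). I would set $A = \{t > 0 : \expectation{\exp(X^2/t^2)} \le 2\}$ and $B = \{s > 0 : \expectation{\exp(X^2/s)} \le 2\}$. By Definition~\ref{def:subgaussian-norm} we have $\norm{X}_{\psi_2} = \inf A$; and since $X^2$ is nonnegative, so that $\abs{X^2} = X^2$, Definition~\ref{def:subexp-norm} applied to $X^2$ gives $\norm{X^2}_{\psi_1} = \inf B$. It therefore suffices to show $\inf B = (\inf A)^2$.

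Next I would observe that $t \mapsto t^2$ is a strictly increasing bijection of $(0,\infty)$ onto itself, and that $t \in A$ holds precisely when $t^2 \in B$ (because $t > 0 \iff t^2 > 0$, and $\expectation{\exp(X^2/t^2)} \le 2$ is literally the defining inequality of $B$ evaluated at $s = t^2$); hence $B = \{t^2 : t \in A\}$. Since squaring is continuous and monotone on $(0,\infty)$, it commutes with infima over subsets of $(0,\infty)$, so $\inf B = \inf\{t^2 : t \in A\} = (\inf A)^2$. Chaining the equalities yields $\norm{X^2}_{\psi_1} = \inf B = (\inf A)^2 = \norm{X}_{\psi_2}^2$, which is the claim.

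The only step that calls for a sentence of justification is the interchange $\inf\{t^2 : t\in A\} = (\inf A)^2$; this is routine given monotonicity and continuity of $t\mapsto t^2$, and the degenerate cases ($A$ empty, or $\inf A \in \{0,+\infty\}$) are absorbed by the convention $\inf\emptyset = +\infty$ together with the fact that $A$ and $B$ are in bijection, so they are empty or nonempty simultaneously. I do not anticipate any genuine obstacle: the entire argument is a bookkeeping exercise on the definitions, and the substitution $s = t^2$ does all the work.
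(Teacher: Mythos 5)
Your proof is correct and is essentially the standard argument (indeed, the very proof of Vershynin's Lemma~2.7.6, which the paper cites without reproving): both unwind the two Orlicz-type definitions and apply the monotone substitution $s = t^2$, using that $\abs{X^2}=X^2$. Your observation that the zero-mean hypothesis is never used is also accurate---Vershynin states the lemma for arbitrary random variables, and the paper has simply inherited the hypothesis from the surrounding context in which it applies the proposition.
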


\smallskip
\noindent

For concreteness, we compute sub-gaussian norms for two basic variables.\\

\begin{example}\label{exp:subgaussian-norm}
	Here we evaluate the values of $\norm{\cdot}_{\psi_2}$ and $\norm{\cdot}_{\psi_1}$ for the sub-gaussian random variables which will be used later
	\begin{itemize}
		\item If $X\sim\gaussian{0}{\sigma^2}$, for some $\sigma\in\reals_+$, then $\norm{X}_{\psi_2} = 2\sigma$.
		\item If $Y\sim\bernoulli{p}$, for some $p\in(0,1)$, then $\norm{Y}_{\psi_2} =  \frac{1}{\sqrt{\ln(1+p^{-1})}}$ and $\norm{Y}_{\psi_1}=\frac{1}{\ln(1+p^{-1})}$.
	\end{itemize}
\end{example}
\begin{pf}
	For any $t>\sqrt{2}\sigma$, we observe that
	\[
	\expectation{\exp\left(X^2/t^2\right)}=\frac{1}{\sigma\sqrt{2\pi}} \int_{x\in\reals}\exp\left(-\frac{x^2}{2\sigma^2} + \frac{x^2}{t^2}\right)dx =\frac{1}{\sigma\sqrt{\frac{1}{2\sigma^2}-\frac{1}{t^2}}},
	\]
	which is $2$ when $t=2\sigma$, hence $\norm{X}_{\psi_2} = 2\sigma$. 
	As for $Y$, elementary calculus shows that
	\[
	\norm{Y}_{\psi_2}
	= \inf\left\{t>0: p\exp(t^{-2}) + (1-p) \leq 2\right\}
	= \inf\left\{t>0: \exp(t^{-2}) \leq \frac{1+p}{p}\right\}
	= \frac{1}{\sqrt{\ln(1+p^{-1})}},
	\]	
	and that 
	\[
	\norm{Y}_{\psi_1}
	= \inf\left\{t>0: p\exp(t^{-1}) + (1-p) \leq 2\right\}
	= \inf\left\{t>0: \exp(t^{-1}) \leq \frac{1+p}{p}\right\}
	= \frac{1}{\ln(1+p^{-1})}.
	\]	
\end{pf}

\medskip
\noindent

Here is the list of concentration inequalities we will use later. The first proposition is an immediate result from Definition~\ref{def:subgaussian-norm} and \ref{def:subexp-norm}, the others are standard concentration inequalities characterized by these two norms.\\   

\begin{proposition}[Proposition~2.5.2 and Proposition~2.7.1 in \citep{vershynin2018high}]\label{prop:sub-gaussian-and-exp}
	Let $X$ and $Y$ be a sub-gaussian and a sub-exponential random variables, respectively.
	Then for any $t\geq 0$, we have
	\[
	\prob{\abs{X-\expectation{X}} \geq t} \leq \exp\left(-\bigOmega\left(\frac{t^2}{\norm{X}_{\psi_2}^2}\right)\right)
	\quad\text{and}\quad
	\prob{\abs{Y-\expectation{Y}} \geq t} \leq \exp\left(-\bigOmega\left(\frac{t}{\norm{Y}_{\psi_1}}\right)\right).
	\]
\end{proposition}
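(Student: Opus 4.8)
The plan is to derive both tail bounds from Markov's inequality applied to the exponential moments that \emph{define} the $\psi_2$ and $\psi_1$ norms, after first reducing to the mean-zero case. The core step needs no centering: for any sub-gaussian random variable $W$, the map $w\mapsto \exp(w^2/\norm{W}_{\psi_2}^2)$ is nonnegative and monotone in $\abs{w}$, so for every $t\geq 0$,
\[
\prob{\abs{W}\geq t}
= \prob{\exp\!\left(\frac{W^2}{\norm{W}_{\psi_2}^2}\right)\geq \exp\!\left(\frac{t^2}{\norm{W}_{\psi_2}^2}\right)}
\leq \exp\!\left(-\frac{t^2}{\norm{W}_{\psi_2}^2}\right)\expectation{\exp\!\left(\frac{W^2}{\norm{W}_{\psi_2}^2}\right)}
\leq 2\exp\!\left(-\frac{t^2}{\norm{W}_{\psi_2}^2}\right),
\]
where the last inequality is exactly the defining property of $\norm{W}_{\psi_2}$ in Definition~\ref{def:subgaussian-norm}. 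The analogous computation with $\exp(\abs{W}/\norm{W}_{\psi_1})$ in place of $\exp(W^2/\norm{W}_{\psi_2}^2)$, using Definition~\ref{def:subexp-norm}, gives $\prob{\abs{W}\geq t}\leq 2\exp(-t/\norm{W}_{\psi_1})$ for any sub-exponential $W$.

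Next I would handle centering. Write $Z = X - \expectation{X}$. Integrating the un-centered tail from the core step, $\expectation{\abs{X}} = \int_0^\infty \prob{\abs{X}\geq t}\,dt \leq \int_0^\infty 2\exp(-t^2/\norm{X}_{\psi_2}^2)\,dt = \bigO(\norm{X}_{\psi_2})$, hence $\abs{\expectation{X}} = \bigO(\norm{X}_{\psi_2})$. Since a constant $c$ has $\norm{c}_{\psi_2} = \abs{c}/\sqrt{\ln 2}$ and $\norm{\cdot}_{\psi_2}$ satisfies the triangle inequality, $\norm{Z}_{\psi_2} \leq \norm{X}_{\psi_2} + \norm{\expectation{X}}_{\psi_2} = \bigO(\norm{X}_{\psi_2})$; the same argument with $\norm{\cdot}_{\psi_1}$ in place of $\norm{\cdot}_{\psi_2}$ shows that centering a sub-exponential variable changes its $\psi_1$ norm by at most an absolute constant. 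Applying the core step to $W=Z$ and substituting $\norm{Z}_{\psi_2} = \bigO(\norm{X}_{\psi_2})$ yields $\prob{\abs{X-\expectation{X}}\geq t}\leq 2\exp(-\bigOmega(t^2/\norm{X}_{\psi_2}^2)) = \exp(-\bigOmega(t^2/\norm{X}_{\psi_2}^2))$, the factor $2$ being absorbed into the $\bigOmega$; likewise $\prob{\abs{Y-\expectation{Y}}\geq t}\leq \exp(-\bigOmega(t/\norm{Y}_{\psi_1}))$.

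I expect the only genuinely fiddly point to be this centering lemma — i.e.\ checking that $\norm{X-\expectation{X}}_{\psi_2}$ and $\norm{X}_{\psi_2}$ (respectively the $\psi_1$ norms) agree up to an absolute constant — and in particular avoiding circularity, which is why the un-centered estimate is proved first and only then used to bound $\expectation{\abs{X}}$. Everything else is a one-line application of Markov's inequality together with the definitions, and all constants that appear are absolute and are swallowed by the $\bigOmega$ notation in the statement.
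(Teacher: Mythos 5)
The paper does not include a proof of this proposition; it is cited directly from \citet{vershynin2018high} and described in the surrounding text as an immediate consequence of Definitions~\ref{def:subgaussian-norm} and~\ref{def:subexp-norm}. Your argument --- Markov's inequality applied to the exponential moment that defines each Orlicz norm, followed by the centering estimate $\norm{X-\expectation{X}}_{\psi_2}=\bigO(\norm{X}_{\psi_2})$ (and likewise for $\psi_1$) via the triangle inequality --- is exactly the standard textbook route and is correct, with the constant $2$ absorbed into $\bigOmega$ under the same notational convention the paper uses throughout.
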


\medskip
\noindent

\begin{lemma}[Hoeffding's inequality (Theorem 2.6.3 in \citep{vershynin2018high})]\label{lem:hoeffding}
	Let $m\in \mathbb{N}$, $X_1,\cdots,X_m$ be i.i.d. zero-mean sub-gaussian random variables, and $\vect{a}\in\reals^m$ be a nonzero vector. Then,
	\[
	\forall t\geq 0,\quad
	\prob{\left|\sum_{i=1}^m\vect{a}_iX_i \right| > t} \leq \exp\left(-\bigOmega\left(\frac{t^2}{K\norm{\vect{a}}_2^2}\right)\right),
	\]
	where $K=\norm{\Matrix{X}_1}_{\psi_2}^2$.% (see Definition~\ref{def:subgaussian-norm} of $\psi_2$-norm).
\end{lemma}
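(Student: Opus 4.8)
The plan is to observe that the quantity of interest, $Y := \sum_{i=1}^m \vect{a}_i X_i$, is itself a zero-mean sub\-gaussian random variable whose $\psi_2$-norm is controlled by $\norm{\vect{a}}_2^2$ and $K$, and then to invoke the generic sub\-gaussian tail bound already recorded as Proposition~\ref{prop:sub-gaussian-and-exp}. Concretely, I would first note that each summand $\vect{a}_i X_i$ is zero-mean and sub\-gaussian with $\norm{\vect{a}_i X_i}_{\psi_2} = \abs{\vect{a}_i}\,\norm{X_i}_{\psi_2} = \abs{\vect{a}_i}\sqrt{K}$; the homogeneity $\norm{cX}_{\psi_2}=\abs{c}\norm{X}_{\psi_2}$ follows directly from Definition~\ref{def:subgaussian-norm} by the substitution $t\mapsto t/\abs{c}$ inside the infimum (the case $\vect{a}_i=0$ being trivial), and $\expectation{\vect{a}_i X_i}=0$ since $\expectation{X_i}=0$.

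Next I would apply Proposition~\ref{prop:subgaus-rot-inv} to the family $\{\vect{a}_i X_i\}_{i\in[m]}$, which are independent because the $X_i$ are. This yields
\[
\norm{Y}_{\psi_2}^2 \;=\; \bigO\!\left(\sum_{i=1}^m \norm{\vect{a}_i X_i}_{\psi_2}^2\right)
\;=\; \bigO\!\left(K\sum_{i=1}^m \vect{a}_i^2\right)
\;=\; \bigO\!\left(K\norm{\vect{a}}_2^2\right),
\]
and $\expectation{Y}=0$ by linearity of expectation.

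Finally, applying Proposition~\ref{prop:sub-gaussian-and-exp} to the sub\-gaussian variable $Y$ gives, for every $t\ge 0$,
\[
\prob{\Big|\sum_{i=1}^m \vect{a}_i X_i\Big| > t}
= \prob{\abs{Y-\expectation{Y}} > t}
\le \exp\!\left(-\bigOmega\!\left(\frac{t^2}{\norm{Y}_{\psi_2}^2}\right)\right)
= \exp\!\left(-\bigOmega\!\left(\frac{t^2}{K\norm{\vect{a}}_2^2}\right)\right),
\]
where in the last step the absolute constant hidden in $\norm{Y}_{\psi_2}^2=\bigO(K\norm{\vect{a}}_2^2)$ is absorbed into the $\bigOmega(\cdot)$. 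This is exactly the claimed inequality.

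There is no genuine obstacle here: the statement is a routine consequence of the machinery (stability of sub\-gaussianity under independent sums, plus the $\psi_2$-tail bound) developed just above it. The only points requiring a little care are the bookkeeping of absolute constants (checking that the $\bigO$ from Proposition~\ref{prop:subgaus-rot-inv} can be swallowed by the $\bigOmega$ in the exponent) and the fact that independence is used only for Proposition~\ref{prop:subgaus-rot-inv}, whereas the scaling step needs none. If instead a self-contained argument were preferred, the alternative is the standard MGF/Chernoff route: bound $\expectation{e^{\lambda \vect{a}_i X_i}}\le e^{CK\vect{a}_i^2\lambda^2}$ for all $\lambda$, multiply over $i$ using independence to get $\expectation{e^{\lambda Y}}\le e^{CK\norm{\vect{a}}_2^2\lambda^2}$, apply Markov's inequality to $e^{\lambda Y}$ and optimize over $\lambda>0$ to obtain $\prob{Y>t}\le e^{-t^2/(4CK\norm{\vect{a}}_2^2)}$, and then union-bound the same estimate applied to $-Y$.
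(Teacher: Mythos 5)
The paper does not prove Lemma~\ref{lem:hoeffding}; it is stated as a citation to Theorem~2.6.3 of \citet{vershynin2018high}, so there is no ``paper's own proof'' to compare against. Your derivation is correct and is in fact the standard route to this result: scale to get $\norm{\vect{a}_i X_i}_{\psi_2} = \abs{\vect{a}_i}\sqrt{K}$, pass through the independent-sum bound on the $\psi_2$-norm (Proposition~\ref{prop:subgaus-rot-inv}), and finish with the generic sub-gaussian tail (Proposition~\ref{prop:sub-gaussian-and-exp}). Two small remarks worth your attention. First, you were right to flag that independence is what licenses $\norm{\sum_i \vect{a}_i X_i}_{\psi_2}^2 = \bigO\left(\sum_i \norm{\vect{a}_i X_i}_{\psi_2}^2\right)$: the paper's statement of Proposition~\ref{prop:subgaus-rot-inv} inadvertently omits the independence hypothesis, but Vershynin's Proposition~2.6.1 requires it, and without it one only gets the weaker triangle-inequality bound (which would introduce an extra factor of $m$). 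Your proof supplies the missing hypothesis correctly. Second, the alternative Chernoff/MGF argument you sketch at the end is precisely how the cited Proposition~2.6.1 is itself proven, so the two routes you describe are not really independent of one another — the first piggybacks on machinery whose proof is the second.
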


\medskip
\noindent

\begin{lemma}[Bernstein's inequality (Theorem 2.8.2 in \citep{vershynin2018high})]\label{lem:bernstein-weighted-ineq}
	Let $m\in \mathbb{N}$ and $\vect{a}=(\vect{a}_1,\cdots,\vect{a}_m)\in\reals^m\setminus \{\veczeros{n}\}$. Let $X_1,\cdots,X_m$ be independent sub-gaussian r.v.'s.
	Then there exists a universal constant $c>0$ such that for any $t>0$, 
	\[
	\prob{\left| \sum_{i=1}^m \vect{a}_i\left(X_i^2-\expectation{X_i^2}\right) \right| \geq t} \leq 2 \exp\left(-c\min\left\{\frac{t^2}{K^2\norm{\vect{a}}_2^2}, \frac{t}{K\norm{\vect{a}}_{\infty}}\right\}\right),
	\]
	where $K=\max_{i\in[m]}\norm{X_i^2 - \expectation{X_i^2}}_{\psi_1}$.
	%	(see Definition~\ref{def:subgaussian-norm} of $\psi_2$-norm in Appendix~\ref{sec:tools}).
\end{lemma}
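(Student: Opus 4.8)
The plan is to derive this as a direct corollary of the standard (unweighted) sub-exponential Bernstein inequality stated as Theorem~2.8.2 of \citet{vershynin2018high}, after absorbing the weights $\vect{a}_i$ into the $\psi_1$-norms.

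First I would introduce $Z_i := X_i^2 - \expectation{X_i^2}$ for each $i\in[m]$. Because each $X_i$ is sub-gaussian, $X_i^2$ is sub-exponential (Proposition~\ref{prop:subgaus-exp} together with the crude bound $X_i^2 \le 2(X_i - \expectation{X_i})^2 + 2(\expectation{X_i})^2$ in case $X_i$ is not already centered, since centering preserves sub-gaussianity), so $Z_i$ is a centered sub-exponential random variable with $\norm{Z_i}_{\psi_1} \le K$ by the very definition of $K$. Independence of $X_1,\dots,X_m$ gives independence of $\vect{a}_1 Z_1,\dots,\vect{a}_m Z_m$, and homogeneity of the $\psi_1$-norm (immediate from Definition~\ref{def:subexp-norm}) yields $\norm{\vect{a}_i Z_i}_{\psi_1} = \abs{\vect{a}_i}\,\norm{Z_i}_{\psi_1} \le \abs{\vect{a}_i}\,K$.

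Next I would apply the general sub-exponential Bernstein inequality to $S := \sum_{i=1}^m \vect{a}_i Z_i$, a sum of independent mean-zero sub-exponential variables, obtaining for a universal constant $c>0$
\[
\prob{\abs{S} \ge t} \le 2\exp\left(-c\min\left\{\frac{t^2}{\sum_{i=1}^m \norm{\vect{a}_i Z_i}_{\psi_1}^2},\ \frac{t}{\max_{i\in[m]}\norm{\vect{a}_i Z_i}_{\psi_1}}\right\}\right).
\]
Then I would plug in $\sum_i \norm{\vect{a}_i Z_i}_{\psi_1}^2 \le K^2 \sum_i \vect{a}_i^2 = K^2 \norm{\vect{a}}_2^2$ and $\max_i \norm{\vect{a}_i Z_i}_{\psi_1} \le K \max_i \abs{\vect{a}_i} = K\norm{\vect{a}}_\infty$, and use that enlarging either denominator only weakens the exponent (monotonicity of $x\mapsto 1/x$ and of $\min$) to recover exactly the claimed bound; here $\vect{a}\ne\veczeros{n}$ guarantees $\norm{\vect{a}}_2,\norm{\vect{a}}_\infty>0$ so the expression is well-defined.

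There is essentially no obstacle here: the statement is textbook, and the only mildly delicate point — ensuring $X_i^2$ is sub-exponential and $K<\infty$ when $X_i$ need not be centered — is handled by the decomposition above, with the constant-factor loss absorbed into $K$ and the universal constant $c$. The entire content of the lemma beyond the cited theorem is the bookkeeping of how the weights $\vect{a}_i$ rescale the $\psi_1$-norms, which is why I would keep the write-up to a few lines.
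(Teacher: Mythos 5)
Your proposal is correct, and it is essentially the only natural route. The paper provides no proof beyond the citation to Theorem~2.8.2 of Vershynin, which is already the \emph{weighted} sub-exponential Bernstein inequality; the lemma as stated is just that theorem specialized to the centered variables $Z_i := X_i^2 - \expectation{X_i^2}$, which are mean-zero sub-exponential with $\norm{Z_i}_{\psi_1}\le K$ once one notes that sub-gaussianity of $X_i$ implies sub-exponentiality of $X_i^2$ (this does not actually require $X_i$ to be centered --- Vershynin's Lemma~2.7.6 holds for any sub-gaussian $X$, so your detour through the crude bound $X_i^2 \le 2(X_i-\expectation{X_i})^2 + 2(\expectation{X_i})^2$ is superfluous, though harmless). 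The only genuine difference is that you started one step earlier from the unweighted Theorem~2.8.1 and re-absorbed the weights $\vect{a}_i$ into the $\psi_1$-norms, which is precisely how Vershynin obtains 2.8.2 from 2.8.1 anyway; the bookkeeping with $\norm{\vect{a}_iZ_i}_{\psi_1}=\abs{\vect{a}_i}\norm{Z_i}_{\psi_1}$ and the resulting bounds by $K^2\norm{\vect{a}}_2^2$ and $K\norm{\vect{a}}_\infty$ is exactly right.
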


\medskip
\noindent

\begin{lemma}[Hanson-Wright inequality (Theorem 6.2.1 in \citep{vershynin2018high})]\label{lem:hanson-wright}
	Let $m\in \mathbb{N}$ and $\vect{X}=(\vect{X}_1,\ldots,\vect{X}_m)$ be a random vector with i.i.d zero-mean sub-gaussian entries and $\Matrix{M}\in\reals^{m\times m}\setminus\{ \veczeros{m\times m}\}$. Then,
	\[
	\forall t>0,\quad
	\prob{
		\left|
			\sum_{i,j\in [m]}\Matrix{M}_{i,j}\vect{X}_i\vect{X}_j
			-\expectation{\sum_{i,j\in [m]}\Matrix{M}_{i,j}\vect{X}_i\vect{X}_j}
		\right| > t}
	\leq \exp\left(-\bigOmega\left(\min\left\{
	\frac{t^2}{K^2\norm{\Matrix{M}}_F^2}, 
	\frac{t}{K\norm{\Matrix{M}}_2}
	\right\}\right)\right),
	\]
	where $K=\norm{\vect{X}_1}_{\psi_2}^2$. %(see Definition~\ref{def:subgaussian-norm} of $\psi_2$-norm).
\end{lemma}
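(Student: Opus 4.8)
The plan is to prove this form of the Hanson--Wright inequality (Theorem~6.2.1 of \citep{vershynin2018high}) by splitting the centered quadratic form into its diagonal and off-diagonal parts and bounding each tail separately. Write
$Z=\sum_{i,j\in[m]}\Matrix{M}_{i,j}\vect{X}_i\vect{X}_j-\mathbb{E}\!\left[\sum_{i,j\in[m]}\Matrix{M}_{i,j}\vect{X}_i\vect{X}_j\right]=Z_{\mathrm{diag}}+Z_{\mathrm{off}}$,
with $Z_{\mathrm{diag}}=\sum_{i\in[m]}\Matrix{M}_{i,i}(\vect{X}_i^2-\mathbb{E}[\vect{X}_i^2])$ and $Z_{\mathrm{off}}=\sum_{i\neq j}\Matrix{M}_{i,j}\vect{X}_i\vect{X}_j$. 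Since $\sum_i\Matrix{M}_{i,i}^2\le\norm{\Matrix{M}}_F^2$, $\max_i\abs{\Matrix{M}_{i,i}}\le\norm{\Matrix{M}}_2$, and the off-diagonal matrix has Frobenius norm at most $\norm{\Matrix{M}}_F$ and spectral norm at most $2\norm{\Matrix{M}}_2$, a tail bound of the claimed shape for each piece, followed by a union bound, gives the lemma (the extra constants are absorbed in the $\bigOmega(\cdot)$).

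For $Z_{\mathrm{diag}}$: the summands are independent, centered, and by Proposition~\ref{prop:subgaus-exp} each has sub-exponential norm $\norm{\vect{X}_i^2-\mathbb{E}[\vect{X}_i^2]}_{\psi_1}=\bigO(K)$. Applying Bernstein's inequality (Lemma~\ref{lem:bernstein-weighted-ineq}) with weight vector $\vect{a}=(\Matrix{M}_{1,1},\ldots,\Matrix{M}_{m,m})$ gives $\prob{\abs{Z_{\mathrm{diag}}}>t}\le 2\exp\!\left(-\bigOmega\!\left(\min\{t^2/(K^2\norm{\Matrix{M}}_F^2),\,t/(K\norm{\Matrix{M}}_2)\}\right)\right)$.

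For $Z_{\mathrm{off}}$: I would control the MGF $\mathbb{E}[\exp(\lambda Z_{\mathrm{off}})]$ for $\abs{\lambda}$ below an appropriate threshold and conclude by Chernoff. First decouple (Theorem~6.1.1 of \citep{vershynin2018high}) to bound $\mathbb{E}[\exp(\lambda Z_{\mathrm{off}})]$ by $\mathbb{E}[\exp(c_0\lambda\langle\vect{X}',\Matrix{M}^T\vect{X}\rangle)]$ with $\vect{X}'$ an independent copy of $\vect{X}$. Conditioned on $\vect{X}$, the inner product is a weighted sum of independent centered sub-gaussians with coefficient vector $\Matrix{M}^T\vect{X}$, so Hoeffding (Lemma~\ref{lem:hoeffding}) gives a conditional MGF at most $\exp(c_1K\lambda^2\norm{\Matrix{M}^T\vect{X}}_2^2)$. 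I would then linearize the square via $\exp(\theta\norm{\vect{w}}_2^2)=\mathbb{E}_{\vect{g}}[\exp(\sqrt{2\theta}\langle\vect{g},\vect{w}\rangle)]$ for $\vect{g}\sim\gaussian{0}{1}^{m\times1}$, apply Fubini, and use Hoeffding once more (this time over $\vect{X}$) to reach $\mathbb{E}_{\vect{g}}[\exp(c_2K^2\lambda^2\,\vect{g}^T\Matrix{M}^T\Matrix{M}\,\vect{g})]$. Evaluating this Gaussian quadratic form via $\mathbb{E}[\exp(\vect{g}^T\Matrix{B}\vect{g})]=\det(\eye{m}-2\Matrix{B})^{-1/2}$ (finite when $\norm{\Matrix{B}}_2<\tfrac12$), together with $\trace{\Matrix{M}^T\Matrix{M}}=\norm{\Matrix{M}}_F^2$ and $\norm{\Matrix{M}^T\Matrix{M}}_2=\norm{\Matrix{M}}_2^2$, yields $\mathbb{E}[\exp(\lambda Z_{\mathrm{off}})]\le\exp(c_3K^2\lambda^2\norm{\Matrix{M}}_F^2)$ for all $\abs{\lambda}\le c_4/(K\norm{\Matrix{M}}_2)$. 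Optimizing the Chernoff bound over this range — the unconstrained optimum being attained when $t\lesssim K\norm{\Matrix{M}}_F^2/\norm{\Matrix{M}}_2$, the endpoint otherwise — produces the two-regime exponent $-\bigOmega(\min\{t^2/(K^2\norm{\Matrix{M}}_F^2),\,t/(K\norm{\Matrix{M}}_2)\})$ for $Z_{\mathrm{off}}$, and the same for $-Z_{\mathrm{off}}$ by symmetry.

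The main obstacle is the off-diagonal MGF estimate: carrying out the decoupling step and then tracking all absolute constants through the two applications of Hoeffding and the Gaussian linearization so that the admissible range $\abs{\lambda}=\bigO(1/(K\norm{\Matrix{M}}_2))$ coming out of the quadratic-form bound is precisely what makes both the sub-gaussian term $t^2/(K^2\norm{\Matrix{M}}_F^2)$ and the sub-exponential term $t/(K\norm{\Matrix{M}}_2)$ appear in the final exponent. The diagonal part and the closing union bound are routine.
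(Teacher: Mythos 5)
The paper does not prove this lemma: it is cited verbatim from Theorem~6.2.1 of Vershynin (2018), so there is no in-paper argument to compare against. Your proposal is a correct reconstruction of the standard textbook proof of Hanson--Wright: split into diagonal and off-diagonal parts, kill the diagonal with Bernstein's inequality (Lemma~\ref{lem:bernstein-weighted-ineq} applied to the weight vector of diagonal entries), decouple the off-diagonal part, and bound its MGF by reducing to a Gaussian quadratic form. The off-diagonal reduction you describe --- condition on $\vect{X}$, bound the conditional MGF over $\vect{X}'$ by the sub-gaussian Hoeffding bound, linearize the resulting $\exp(cK\lambda^2\norm{\Matrix{M}^T\vect{X}}_2^2)$ via $\exp(\theta\norm{\vect{w}}_2^2)=\mathbb{E}_{\vect{g}}\exp(\sqrt{2\theta}\langle\vect{g},\vect{w}\rangle)$, and apply the sub-gaussian MGF bound once more over $\vect{X}$ --- lands on exactly the same Gaussian chaos $\mathbb{E}_{\vect{g}}\exp(cK^2\lambda^2\vect{g}^T\Matrix{M}^T\Matrix{M}\vect{g})$ that Vershynin reaches by replacing $\vect{X}$ and $\vect{X}'$ with Gaussian vectors one at a time via his comparison Lemma~6.2.3; the two routes are equivalent in effect, yours just conditions first and linearizes rather than replacing. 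The determinantal evaluation $\det(\eye{m}-2\Matrix{B})^{-1/2}$ forces the admissible range $\abs{\lambda}=\bigO(1/(K\norm{\Matrix{M}}_2))$ as you note, which is precisely what produces the sub-exponential branch $t/(K\norm{\Matrix{M}}_2)$ in the Chernoff optimization, with the sub-gaussian branch $t^2/(K^2\norm{\Matrix{M}}_F^2)$ coming from the unconstrained optimum; your bookkeeping ($\max_i\abs{\Matrix{M}_{i,i}}\le\norm{\Matrix{M}}_2$, $\norm{\Matrix{M}_{\mathrm{off}}}_F\le\norm{\Matrix{M}}_F$, $\norm{\Matrix{M}_{\mathrm{off}}}_2\le 2\norm{\Matrix{M}}_2$, and $\norm{\vect{X}_i^2-\expectation{\vect{X}_i^2}}_{\psi_1}=\bigO(K)$ via Proposition~\ref{prop:subgaus-exp}) is all correct and the extra constants are absorbed into the $\bigOmega(\cdot)$. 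There is no gap beyond the constant-tracking you already identify as the remaining routine work.
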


\iffalse
\medskip
\noindent
% higher-dimensional version
\begin{lemma}[Higher-dimensional Hanson-Wright inequality (\textsection~6.2 \citep{vershynin2018high})]\label{lem:hanson-wright-high-dim}
	Let $\Matrix{X}$ be a $n\times d$ random matrix with i.i.d zero-mean sub-gaussian entries and $\Matrix{M}\in\reals^{n\times n}\setminus\{ \veczeros{n\times n}\}$. Then,
	\[
	\forall t>0,\quad
	\prob{
		\left|\sum_{i\neq j}\Matrix{M}_{i,j}\langle\Matrix{X}_{i,:},\Matrix{X}_{j,:}\rangle\right| > t}
	\leq \exp\left(-\bigOmega\left(\min\left\{
	\frac{t^2}{dK^2\norm{\Matrix{M}}_F^2}, 
	\frac{t}{K\norm{\Matrix{M}}_2}
	\right\}\right)\right),
	\]
	where $\Matrix{X}_{i,:}$ ($\Matrix{X}_{j,:}$ resp.) denotes the $i$-th ($j$-th resp.) row of $\Matrix{X}$ and $K=\norm{\Matrix{X}_{1,1}}_{\psi_2}^2$.% (see Definition~\ref{def:subgaussian-norm} of $\psi_2$-norm).
\end{lemma}
\fi

\subsection{Useful lemmas derived from Bernstein's inequality}\label{app:C1}
In this subsection, we will use Lemma~\ref{lem:bernstein-weighted-ineq} in \ref{app:sub} to derive two Bernstein-type concentration inequalities. Corollary~\ref{cor:length-gaussian} (resp. Corollary~\ref{cor:length-bernoulli}) provides tail bounds on the length $\norm{\Matrix{S}\vect{x}}_2$ of Gaussian (resp. Bernoulli) random matrix $S$ with linear combination weights $\vect{x}$ of its columns.%, and Lemma~\ref{lem:gaussian-sum-of-inner-product} (resp. Lemma~\ref{lem:bernoulli-sum-of-inner-product}) provide tail bounds for a special form of sum of squares of inner products of independent Gaussian (resp. Bernoulli) vectors.

\medskip
\noindent

\begin{corollary}\label{cor:length-gaussian}
	Let $\vect{x}\in\reals^n\setminus \{\veczeros{n}\}$ and $\Matrix{S}\sim\gaussian{0}{1}^{n\times d}$. Then, $\forall \delta>0$,
	\[
	\prob{\norm{\Matrix{S}^T\vect{x}}_2^2 \geq d(1+\delta)\norm{\vect{x}}_2^2} \leq e^{-\bigOmega\left(d\min\left\{\delta, \delta^2\right\}\right)},
	\quad\text{ and }\quad
	\prob{\norm{\Matrix{S}^T\vect{x}}_2^2 \leq d(1-\delta)\norm{\vect{x}}_2^2} \leq e^{-\bigOmega\left(d\min\left\{\delta, \delta^2\right\}\right)}.
	\]
\end{corollary}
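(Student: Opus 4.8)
The plan is to recognize $\norm{\Matrix{S}^T\vect{x}}_2^2$ as a (scaled) chi-squared random variable and then apply Bernstein's inequality (Lemma~\ref{lem:bernstein-weighted-ineq}). By scale invariance it suffices to treat $\norm{\vect{x}}_2=1$. For such $\vect{x}$, the $j$-th coordinate of $\Matrix{S}^T\vect{x}$ equals $\langle \Matrix{S}_{:,j},\vect{x}\rangle$, which is a standard Gaussian since $\Matrix{S}_{:,j}\sim\gaussian{0}{1}^n$ and $\vect{x}$ is a unit vector; moreover these coordinates are independent across $j\in[d]$ because the columns of $\Matrix{S}$ are independent. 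Hence $\Matrix{S}^T\vect{x}\sim\gaussian{0}{1}^{d}$, so $\norm{\Matrix{S}^T\vect{x}}_2^2=\sum_{j=1}^d X_j^2$ with $X_1,\dots,X_d$ i.i.d.\ $\gaussian{0}{1}$.

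Next I would apply Lemma~\ref{lem:bernstein-weighted-ineq} with $m=d$, the weight vector $\vect{a}=\vecones{d}$, and these $X_j$'s. Since $\expectation{X_j^2}=1$, the random variable controlled by the lemma is exactly $\norm{\Matrix{S}^T\vect{x}}_2^2-d$. The relevant constant is $K=\norm{X_1^2-1}_{\psi_1}=\norm{X_1}_{\psi_2}^2$ by Proposition~\ref{prop:subgaus-exp}, which is an absolute constant by Example~\ref{exp:subgaussian-norm} (there $\norm{X_1}_{\psi_2}=2$). With $\norm{\vect{a}}_2^2=d$ and $\norm{\vect{a}}_\infty=1$, choosing $t=d\delta$ gives
\[
\prob{\,\bigl|\,\norm{\Matrix{S}^T\vect{x}}_2^2-d\,\bigr|\geq d\delta\,}\leq 2\exp\!\left(-c\min\left\{\tfrac{d^2\delta^2}{K^2 d},\ \tfrac{d\delta}{K}\right\}\right)=e^{-\bigOmega(d\min\{\delta,\delta^2\})}.
\]
Both the upper-tail event $\{\norm{\Matrix{S}^T\vect{x}}_2^2\geq d(1+\delta)\}$ and the lower-tail event $\{\norm{\Matrix{S}^T\vect{x}}_2^2\leq d(1-\delta)\}$ are contained in the two-sided event above, so both claimed bounds follow at once, and rescaling by $\norm{\vect{x}}_2^2$ recovers the general statement.

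There is no genuine obstacle here: this is a textbook chi-squared concentration bound, and the only points needing care are (i) justifying the distributional identity $\Matrix{S}^T\vect{x}/\norm{\vect{x}}_2\sim\gaussian{0}{1}^d$ (independence of the columns together with rotational invariance of the standard Gaussian), and (ii) tracking that $K$ is an absolute constant so that it can be absorbed into the $\bigOmega(\cdot)$. The $\min\{\delta,\delta^2\}$ in the exponent is precisely the signature of Bernstein's inequality: the sub-gaussian term $\delta^2$ dominates when $\delta\le 1$ and the sub-exponential term $\delta$ dominates when $\delta\ge 1$.
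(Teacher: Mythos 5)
Your proof is correct and follows essentially the same route as the paper: reduce to a unit vector, note $\Matrix{S}^T\vect{x}\sim\gaussian{0}{1}^d$ by rotational invariance, and invoke Lemma~\ref{lem:bernstein-weighted-ineq} with $\vect{a}=\vecones{d}$ and $t=d\delta$. One small imprecision: $\norm{X_1^2-1}_{\psi_1}$ is not exactly $\norm{X_1}_{\psi_2}^2$; as in the paper, you should bound it via the triangle inequality $\norm{X_1^2-1}_{\psi_1}\le\norm{X_1^2}_{\psi_1}+\norm{1}_{\psi_1}=\norm{X_1}_{\psi_2}^2+1/\ln 2$, but since this is still an absolute constant the conclusion is unaffected.
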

\begin{pf}
	For each $i=1,\ldots, d$, the $i$-th column of $\Matrix{S}$ is denoted as $\Matrix{S}_{:,i}$. Because $\langle \Matrix{S}_{:,1} ,  \frac{\vect{x}}{\left\| \vect{x}\right\|_2}\rangle ,\ldots,\langle \Matrix{S}_{:,d}, \frac{\vect{x}}{\left\| \vect{x}\right\|_2}\rangle $ are i.i.d. random variable drawn from $\gaussian{0}{1}$, the application of Lemma \ref{lem:bernstein-weighted-ineq} with $m=d$, $\vect{a}=\vecones{d}$, $t=\delta d$, and $X_i=\langle \Matrix{S}_{:,i} ,  \vect{x}/\left\| \vect{x}\right\|_2\rangle$ for $i=1,\ldots , d$, implies that there is a universal constant $c >0$ such that
	\[
	\prob{\left| \sum_{i=1}^d \langle \Matrix{S}_{:,i} ,\frac{\vect{x}}{\left\| \vect{x}\right\|_2} \rangle^2 - d \right| \ge \delta\cdot d} \leq 2 \exp\left(-c\min\left\{\frac{\delta^2d}{K^2}, \frac{\delta d}{K}\right\}\right) 
	=\exp\left(-\bigOmega\left(d\min\left\{\delta, \delta^2\right\}\right)\right),
	\]
	where $K=\norm{\Matrix{X}_1^2-\expectation{\Matrix{X}_1^2}}_{\psi_1}$. A triangle inequality on $\psi_1$ norm gives the of $K$as:
	\[
	K
	\leq \norm{\Matrix{X}_1^2}_{\psi_1} + \norm{\expectation{\Matrix{X}_1^2}}_{\psi_1} 
	\leq \norm{\Matrix{X}_1}_{\psi_2}^2 + \frac{1}{\ln 2} \leq 2+\frac{1}{\ln 2},
	\]
	where the second inequality is a consequence of Proposition~\ref{prop:subgaus-exp} and the last one is shown in Example \ref{exp:subgaussian-norm}. As $\norm{\Matrix{S}^T\vect{x}}_2^2=\sum_{i=1}^d \langle \Matrix{S}_{:,i} ,\vect{x} \rangle^2 $, the two claimed inequalities hold by rearranging the above inequality.
\end{pf}

\medskip
\noindent

\begin{corollary}\label{cor:length-bernoulli}
	Let $\vect{x}\in\reals^n$ and $\Matrix{S}\sim\bernoulli{p}^{n\times d}$ for a constant $p\in(0,1)$.
	Then, $\forall \delta>0$,
	\[
	\prob{\norm{\Matrix{S}^T\vect{x}}_2^2 \geq d(1+\delta)\mu} \leq e^{-\bigOmega\left(d\min\left\{\delta, \delta^2\right\}\right)}
	\quad\text{ and }\quad
	\prob{\norm{\Matrix{S}^T\vect{x}}_2^2 \leq d(1-\delta)\mu} \leq e^{-\bigOmega\left(d\min\left\{\delta, \delta^2\right\}\right)},
	\]
	where $\mu=p(1-p) \norm{\vect{x}}_2^2 + p^2\langle \vect{x}, \vecones{n}\rangle^2$.
\end{corollary}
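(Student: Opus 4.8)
The plan is to mirror the proof of Corollary~\ref{cor:length-gaussian}, using Bernstein's inequality (Lemma~\ref{lem:bernstein-weighted-ineq}) applied to the squared column-projections. Writing $\Matrix{S}_{:,i}$ for the $i$-th column of $\Matrix{S}$ and $X_i=\langle\Matrix{S}_{:,i},\vect{x}\rangle$, the $X_i$ are i.i.d.\ with $\expectation{X_i}=p\langle\vect{x},\vecones{n}\rangle$ and $\variance{X_i}=p(1-p)\norm{\vect{x}}_2^2$, hence $\expectation{X_i^2}=p(1-p)\norm{\vect{x}}_2^2+p^2\langle\vect{x},\vecones{n}\rangle^2=\mu$ and $\norm{\Matrix{S}^T\vect{x}}_2^2=\sum_{i=1}^d X_i^2$ has mean $d\mu$. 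I would then invoke Lemma~\ref{lem:bernstein-weighted-ineq} with $m=d$, $\vect{a}=\vecones{d}$ (so $\norm{\vect{a}}_2^2=d$, $\norm{\vect{a}}_\infty=1$), and $t=\delta d\mu$, obtaining $\prob{\abs{\sum_{i=1}^d(X_i^2-\mu)}\geq\delta d\mu}\leq 2\exp\!\left(-c\min\{\delta^2 d^2\mu^2/(K^2 d),\,\delta d\mu/K\}\right)$ with $K=\norm{X_1^2-\mu}_{\psi_1}$.

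The key step — the only genuine difference from the Gaussian case, where $X_i$ is standard normal and $K$ is a universal constant — is to show $K=\bigO(\mu)$ with an absolute constant (legitimate since $p$ is fixed). For this I would decompose $X_1=Z_1+p\langle\vect{x},\vecones{n}\rangle$ with $Z_1=\sum_{j=1}^n\vect{x}_j\big((\Matrix{S}_{:,1})_j-p\big)$ zero-mean sub-gaussian, so that, using $\expectation{Z_1^2}=p(1-p)\norm{\vect{x}}_2^2$ and $\mu=\expectation{Z_1^2}+p^2\langle\vect{x},\vecones{n}\rangle^2$, one gets $X_1^2-\mu=(Z_1^2-\expectation{Z_1^2})+2p\langle\vect{x},\vecones{n}\rangle Z_1$. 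The triangle inequality for $\norm{\cdot}_{\psi_1}$, together with Proposition~\ref{prop:subgaus-exp} and the identity $\norm{c}_{\psi_1}=c/\ln 2$ for a positive constant $c$, bounds $K\leq\norm{Z_1}_{\psi_2}^2+p(1-p)\norm{\vect{x}}_2^2/\ln 2+2p\abs{\langle\vect{x},\vecones{n}\rangle}\,\norm{Z_1}_{\psi_1}$. Since each $(\Matrix{S}_{:,1})_j-p$ is bounded in $[-1,1]$ and hence has $\psi_2$-norm $\bigO(1)$, Proposition~\ref{prop:subgaus-rot-inv} gives $\norm{Z_1}_{\psi_2}^2=\bigO(\norm{\vect{x}}_2^2)$ and $\norm{Z_1}_{\psi_1}=\bigO(\norm{\vect{x}}_2)$; combining this with the elementary bound $2p\abs{\langle\vect{x},\vecones{n}\rangle}\,\norm{\vect{x}}_2\leq p^2\langle\vect{x},\vecones{n}\rangle^2+\norm{\vect{x}}_2^2$ shows every term is $\bigO(\norm{\vect{x}}_2^2)+\bigO(p^2\langle\vect{x},\vecones{n}\rangle^2)=\bigO(\mu)$, because $\mu\geq p(1-p)\norm{\vect{x}}_2^2$ and $\mu\geq p^2\langle\vect{x},\vecones{n}\rangle^2$ and $p$ is constant. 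Hence $K=\bigO(\mu)$.

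Finally I would substitute $K=\bigO(\mu)$ back: $\min\{\delta^2 d^2\mu^2/(K^2 d),\,\delta d\mu/K\}=\bigOmega(\min\{\delta^2 d,\delta d\})=\bigOmega(d\min\{\delta,\delta^2\})$, so $\prob{\abs{\norm{\Matrix{S}^T\vect{x}}_2^2-d\mu}\geq\delta d\mu}\leq e^{-\bigOmega(d\min\{\delta,\delta^2\})}$, and splitting this into its two one-sided halves yields exactly the two claimed inequalities; the bound does not degrade with $n$ since all hidden constants are absolute once $p$ is fixed. I expect the main obstacle to be precisely the middle step: controlling $\norm{X_1^2-\mu}_{\psi_1}$ by a constant multiple of $\mu$ \emph{uniformly} over all $\vect{x}\in\reals^n$ and all $n$, which forces the centering term and the cross term $2p\langle\vect{x},\vecones{n}\rangle Z_1$ to be absorbed into $\mu$ via the crude estimates $\norm{\vect{x}}_2^2\leq\mu/(p(1-p))$ and $p^2\langle\vect{x},\vecones{n}\rangle^2\leq\mu$; the rest is a routine adaptation of the argument for Corollary~\ref{cor:length-gaussian}.
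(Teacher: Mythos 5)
Your proposal is correct and follows essentially the same route as the paper's own proof: both apply Bernstein's inequality (Lemma~\ref{lem:bernstein-weighted-ineq}) to $X_i=\langle\Matrix{S}_{:,i},\vect{x}\rangle$ with $\vect{a}=\vecones{d}$ and $t=\delta d\mu$, compute $\expectation{X_1^2}=\mu$, and bound $K=\norm{X_1^2-\mu}_{\psi_1}=\bigO(\mu)$ by centering via $\vect{Z}=\Matrix{S}_{:,1}-p\vecones{n}$ and applying the $\psi_1$ triangle inequality together with Propositions~\ref{prop:subgaus-exp} and~\ref{prop:subgaus-rot-inv}. The only divergence is cosmetic: in bounding the cross term the paper writes $\norm{\langle\vect{Z},\vect{x}\rangle}_{\psi_1}$ as a multiple of $\abs{\langle\vect{x},\vecones{n}\rangle}$ (which appears to be a typo), whereas you correctly bound it by $\bigO(\norm{\vect{x}}_2)$ and then absorb the product $\abs{\langle\vect{x},\vecones{n}\rangle}\norm{\vect{x}}_2$ into $\mu$ via AM--GM; both paths reach $K=\bigTheta(\mu)$.
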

\begin{pf}
	For each $i=1,\ldots, d$, we denote the $i$-th column of $\Matrix{S}$ as $\Matrix{S}_{:,i}$. 
	Since $\langle \Matrix{S}_{:,1} ,  \vect{x}\rangle ,\ldots,\langle \Matrix{S}_{:,d}, \vect{x}\rangle $ are i.i.d., Lemma \ref{lem:bernstein-weighted-ineq} with $m=d$, $\vect{a}=\vecones{d}$, $t=\delta d\mu$, and $X_i=\langle \Matrix{S}_{: , i} ,  \vect{x}\rangle$ for $i=1,\ldots, d$, implies that there exists a universal constant $c >0$ such that
	\[
	\prob{\left| \sum_{i=1}^d \langle \Matrix{S}_{:,i} ,\vect{x} \rangle^2 - d \expectation{\langle \Matrix{S}_{:,1}, \vect{x}\rangle^2} \right| \ge \delta\cdot d\mu} 
	\leq 2 \exp\left(-c\min\left\{\frac{d\mu^2\delta^2}{K^2}, \frac{d\mu\delta}{K}\right\}\right),
	\]
	where $K=\norm{\langle \Matrix{S}_{:,1} ,\vect{x}\rangle^2 - \expectation{\langle \Matrix{S}_{:,1} ,\vect{x}\rangle^2}}_{\psi_1}$.
	The proof is done by showing (i). $\expectation{\langle \Matrix{S}_{:,1}, \vect{x}\rangle^2} =\mu$, and (ii). $K=\bigTheta(\mu)$.\\
	
	\underline{(i). Show $\expectation{\langle \Matrix{S}_{:,1}, \vect{x}\rangle^2} =\mu$:}
	By using linearity of expectation repeatedly, we obtain that
	\begin{align*}
	\expectation{\langle \Matrix{S}_{:,1}, \vect{x}\rangle^2} 
	& = \expectation{\left(\sum_{i=1}^n \Matrix{S}_{i,1}\vect{x}_i\right)^2}
	= \sum_{i=1}^n\expectation{(\Matrix{S}_{i,1}\vect{x}_i)^2} + \sum_{ i\neq j}\expectation{(\Matrix{S}_{i,1}\vect{x}_i)(\Matrix{S}_{j,1}\vect{x}_j)}\nonumber\\
	& = p\norm{\vect{x}}_2^2 + p^2(\langle \vect{x}, \vecones{n}\rangle^2 - \norm{\vect{x}}_2^2)
	= p(1-p) \norm{\vect{x}}_2^2 + p^2\langle \vect{x}, \vecones{n}\rangle^2 =\mu.
	\end{align*}
	
	\underline{(ii). Show $K=\bigTheta(\mu)$:}
	Let $\vect{Z}=\Matrix{S}_{:,1}-p\vecones{n}$. As verified in (i), $\expectation{\langle \Matrix{S}_{:,1}, \vect{x}\rangle^2} =\mu=p(1-p) \norm{\vect{x}}_2^2 + p^2\langle \vect{x}, \vecones{n}\rangle^2$, we get
	\begin{align*}
	K & =\norm{\langle \Matrix{S}_{:,1} ,\vect{x}\rangle^2 - \expectation{\langle \Matrix{S}_{:,1} ,\vect{x}\rangle^2}}_{\psi_1}
	= \norm{\langle \vect{Z},\vect{x}\rangle^2 + 2p\langle\vect{Z}, \vect{x}\rangle\langle\vect{x},\vecones{n}\rangle - p(1-p)\norm{\vect{x}}_2^2}_{\psi_1}\\
	& \leq \norm{\langle \vect{Z},\vect{x}\rangle^2}_{\psi_1}
	+ 2p\abs{\langle\vect{x},\vecones{n}\rangle}\norm{\langle\vect{Z}, \vect{x}\rangle}_{\psi_1}
	+ \frac{p(1-p)\norm{\vect{x}}_2^2}{\ln 2}.
	\end{align*}
	Since $\vect{Z}$ has i.i.d. entries and $p=\bigTheta(1)$, we evaluate 
	\begin{align*}
	\norm{\langle \vect{Z},\vect{x}\rangle^2}_{\psi_1} 
	& = \norm{\langle \vect{Z},\vect{x}\rangle}_{\psi_2}^2 
	= \sum_{i=1}^n\vect{x}_i^2\norm{\vect{Z}_i}_{\psi_2}^2 
	= \norm{x}_2^2\norm{\vect{Z}_1}_{\psi_2}^2
	\leq \norm{x}_2^2\left(\norm{\Matrix{S}_{1,1}}_{\psi_2} + \norm{p}_{\psi_2}\right)^2,\\
	\text{ and }\norm{\langle\vect{Z}, \vect{x}\rangle}_{\psi_1} 
	& = \abs{\langle\vect{x}, \vecones{n}\rangle}\norm{\vect{Z}_1}_{\psi_1}
	\leq \abs{\langle\vect{x}, \vecones{n}\rangle}\left(\norm{\Matrix{S}_{1,1}}_{\psi_1} + \norm{p}_{\psi_1}\right).
	\end{align*}
	Because $\norm{\Matrix{S}_{1,1}}_{\psi_2}=\frac{1}{\sqrt{\ln(1+p^{-1})}}=\bigTheta(1)$, $\norm{\Matrix{S}_{1,1}}_{\psi_1}=\frac{1}{\ln(1+p^{-1})}=\bigTheta(1)$ (see Example \ref{exp:subgaussian-norm} for $\psi_1$ and $\psi_2$ norm),
	$\norm{p}_{\psi_2}=\frac{p}{\sqrt{\ln 2}}=\bigTheta(1)$,
	and	$\norm{p}_{\psi_1}=\frac{p}{\ln 2}=\bigTheta(1)$, 
	combining all yields $K=\bigTheta(\mu)$.
\end{pf}

\subsection{Techinical Lemmas}\label{app:tech}

\medskip
\noindent

\begin{lemma}\label{lem:gaussian-sum-of-inner-product}
	Let $\boldsymbol{\beta}=(\beta_1,\cdots,\beta_n)\in [0,1]^n$ s.t $(\beta_2,\ldots,\beta_n)\neq\veczeros{n-1}$, $\Matrix{U}=[\vect{u}_1,\cdots, \vect{u}_n]\in\reals^{n\times n}$ be an orthonormal matrix, and $\Matrix{S}\sim\gaussian{0}{1}^{n\times d}$.
	Then, for any $\delta >0$ and $\epsilon\in (0,1)$,
	\begin{align*}
		\prob{
			\sum_{i=2}^n
			\beta_i\langle \Matrix{S}^T\vect{u}_i, \Matrix{S}^T\vect{u}_1\rangle^2
			\geq d(1+\epsilon)(1+\delta)\sum_{i=2}^n\beta_i
		} 
		& \leq \exp\left(-\bigOmega\left(
		\max\left\{1,\sum_{i=2}^n\beta_i\right\} \min \left\{\delta ,\delta^2\right\}
		\right)\right) + e^{-\bigOmega(d\epsilon^2)},\\
		\text{ and }\quad
		\prob{
			\sum_{i=2}^n
			\beta_i\langle\Matrix{S}^T\vect{u}_i, \Matrix{S}^T\vect{u}_1\rangle^2
			\leq d(1-\epsilon)(1-\delta)\sum_{i=2}^n\beta_i
		} 
		& \leq \exp\left(-\bigOmega\left(
		\max\left\{1,\sum_{i=2}^n\beta_i\right\}
		\min \left\{\delta ,\delta^2\right\}
		\right)\right) + e^{-\bigOmega(d\epsilon^2)}.
	\end{align*}
\end{lemma}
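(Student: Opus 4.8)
The plan is to condition on the first ``rotated'' column $\vect{w}:=\Matrix{S}^T\vect{u}_1$ and to factor the quantity of interest as a product of two separately controllable pieces: the squared length $\norm{\vect{w}}_2^2$, which concentrates around $d$, and a weighted sum of independent squared Gaussians, which concentrates around $\sum_{i=2}^n\beta_i$.

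First I would record the standard distributional fact that, since $\Matrix{U}$ is orthonormal and $\Matrix{S}\sim\gaussian{0}{1}^{n\times d}$, the matrix $\Matrix{S}^T\Matrix{U}$ again has i.i.d.\ standard Gaussian entries; hence its columns $\Matrix{S}^T\vect{u}_1,\dots,\Matrix{S}^T\vect{u}_n$ are mutually independent, each $\gaussian{0}{1}^d$. Conditioning on $\vect{w}=\Matrix{S}^T\vect{u}_1$, for every $i\ge 2$ the variable $\langle\Matrix{S}^T\vect{u}_i,\vect{w}\rangle$ is $\gaussian{0}{\norm{\vect{w}}_2^2}$ and these are independent across $i$, so I can write $\langle\Matrix{S}^T\vect{u}_i,\Matrix{S}^T\vect{u}_1\rangle^2=\norm{\vect{w}}_2^2\,g_i^2$ with $g_2,\dots,g_n$ i.i.d.\ $\gaussian{0}{1}$ (and, because their conditional law does not depend on $\vect{w}$, jointly independent of $\norm{\vect{w}}_2^2$). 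Thus $\sum_{i=2}^n\beta_i\langle\Matrix{S}^T\vect{u}_i,\Matrix{S}^T\vect{u}_1\rangle^2=\norm{\vect{w}}_2^2\cdot\Sigma$, where $\Sigma:=\sum_{i=2}^n\beta_ig_i^2$ has mean $S:=\sum_{i=2}^n\beta_i>0$.

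Next I would control the two factors separately and combine by a union bound. For the length, Corollary~\ref{cor:length-gaussian} with $\vect{x}=\vect{u}_1$ gives $\prob{\norm{\vect{w}}_2^2\ge d(1+\epsilon)}\le e^{-\bigOmega(d\epsilon^2)}$ and the matching lower-tail bound (using $\epsilon\in(0,1)$, so $\min\{\epsilon,\epsilon^2\}=\epsilon^2$). For $\Sigma$ I would invoke Bernstein's inequality (Lemma~\ref{lem:bernstein-weighted-ineq}) with $\vect{a}=(\beta_2,\dots,\beta_n)\neq\veczeros{n-1}$ and $X_i=g_i$, where $K=\bigTheta(1)$ is verified exactly as in the proof of Corollary~\ref{cor:length-gaussian} (via Example~\ref{exp:subgaussian-norm} and Proposition~\ref{prop:subgaus-exp}). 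Taking deviation $t=\delta S$ and using the elementary bounds $\norm{\vect{a}}_\infty\le\min\{1,S\}$ and $\norm{\vect{a}}_2^2=\sum_{i=2}^n\beta_i^2\le S\min\{1,S\}$ (both from $\beta_i\in[0,1]$), together with $S/\min\{1,S\}=\max\{1,S\}$, gives $t^2/\norm{\vect{a}}_2^2\ge\delta^2\max\{1,S\}$ and $t/\norm{\vect{a}}_\infty\ge\delta\max\{1,S\}$, whence $\prob{\lvert\Sigma-S\rvert\ge\delta S}\le\exp\!\big(-\bigOmega(\max\{1,S\}\min\{\delta,\delta^2\})\big)$; since this bound is uniform in $\vect{w}$, it also holds unconditionally. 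On the complement of the bad events $\{\norm{\vect{w}}_2^2\ge d(1+\epsilon)\}$ and $\{\Sigma\ge(1+\delta)S\}$ we have $\norm{\vect{w}}_2^2\,\Sigma<d(1+\epsilon)(1+\delta)S$, which yields the upper-tail inequality after a union bound; the lower tail is identical using $\{\norm{\vect{w}}_2^2\le d(1-\epsilon)\}$ and $\{\Sigma\le(1-\delta)S\}$ (the latter event being vacuous when $\delta\ge1$, since $\Sigma\ge0$). The only mildly delicate point is extracting the factor $\max\{1,\sum_{i=2}^n\beta_i\}$ from Bernstein's inequality, which is exactly the two-regime bookkeeping ($\sum\beta_i\ge1$ vs.\ $\sum\beta_i<1$) indicated above; the rest is a routine assembly of results already established in the paper.
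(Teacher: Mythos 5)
Your proof is correct and takes essentially the same approach as the paper: condition on $\Matrix{S}^T\vect{u}_1$, use orthonormality of $\Matrix{U}$ to get independence of the remaining Gaussian columns, and combine a chi-squared tail bound for $\norm{\Matrix{S}^T\vect{u}_1}_2^2$ (Corollary~\ref{cor:length-gaussian}) with Bernstein's inequality for the weighted sum. Your explicit factoring $\langle\Matrix{S}^T\vect{u}_i,\Matrix{S}^T\vect{u}_1\rangle^2=\norm{\Matrix{S}^T\vect{u}_1}_2^2\,g_i^2$ with $(g_i)_{i\ge 2}$ independent of the norm is a modest streamlining of the paper's formulation (which takes a supremum of a conditional probability over $\vect{v}$ of bounded norm), but the two are equivalent in substance, including the two-regime bookkeeping that produces the $\max\{1,\sum\beta_i\}$ factor.
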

\begin{pf}\label{proof:gaussian-sum-of-inner-product}
	In the following, we only focus on the upper-tail bound as the others will hold by symmetry.\\
	
	For the simplicity of presentation, we introduce a set $\mathcal{V}_\epsilon=\{\vect{v}\in \reals^d : 0<\left\|\vect{v}\right\|^2_2\le d(1+\epsilon)\}$ and the events
	\[
 	E = \mathbb{I}\left\{  \sum_{i=2}^n
 	\beta_i\langle \Matrix{S}^T\vect{u}_i, \Matrix{S}^T\vect{u}_1\rangle^2
 	\geq d(1+\epsilon)(1+\delta)\sum_{i=2}^n\beta_i   \right\}  \hbox{ and }  G(\vect{v}) =\mathbb{I} \left\{ \Matrix{S}^T\vect{u}_1 =\vect{v}\right\}, \forall \vect{v}\in \mathcal{V}_\epsilon.
	\]
	Using Corollary \ref{cor:length-gaussian} in \ref{app:C1} with $\vect{x} = \vect{u}_1, \delta=\epsilon<1$ and the fact that $\left\| \Matrix{S}^T\vect{u}_1\right\|^2_2>0$ a.e. yield that 
	$	\mathbb{P} \left[ \neg \left( \cup_{\vect{v}\in \mathcal{V}_\epsilon}   G(\vect{v})  \right)\right]= \mathbb{P}\left[   \left\|\Matrix{S}^T\vect{u}_1\right\|^2_2 > d(1+\epsilon) \right]  \le e^{-\bigOmega ( d\epsilon^2)}$, which explicitly says that $\cup_{\vect{v}\in \mathcal{V}_\epsilon}   G(\vect{v}) $ happens with high probability. As a consequence, we have
	\begin{equation}\label{eq:gaus0}
		\prob{   E } \le \prob{E\cap \neg\left( \cup_{\vect{v}\in \mathcal{V}_\epsilon}   G(\vect{v})  \right) }+\int_{\vect{v} \in \mathcal{V}_\epsilon } \prob{  E\cap G(\vect{v}) } d \prob{ G(\vect{v})}  \le e^{-\bigOmega ( d\epsilon^2)}+  \sup_{\vect{v} \in \mathcal{V}_\epsilon }  \prob{  E\cap G(\vect{v}) },
	\end{equation}
	where the last inequality follows from $\prob{\cup_{\vect{v}\in \mathcal{V}_\epsilon}   G(\vect{v}) }\le 1$ and the upper bound of $\prob{ \neg\left( \cup_{\vect{v}\in \mathcal{V}_\epsilon}   G(\vect{v})  \right)  }$ proved above. By \eqref{eq:gaus0}, it is sufficient to show that for any $\vect{v}\in \mathcal{V}_\epsilon$,  
	\begin{equation}\label{eq:gaus1}
	\prob{  E\cap G(\vect{v}) } \le  \exp\left(-\bigOmega\left(\max\left\{1,\sum_{i=2}^n\beta_i\right\}\min \left\{\delta ,\delta^2\right\}\right)\right).
	\end{equation}
	\underline{ \bf Show \eqref{eq:gaus1} for any $\vect{v}\in \mathcal{V}_\epsilon$}\\
		
	Since $\left\|\vect{v} \right\|^2_2\le d(1+\epsilon)$ for each $\vect{v}\in \mathcal{V}_\epsilon$, 
	\begin{align}\label{eq:gauss2}
		\nonumber	\prob{  E\cap G(\vect{v}) } &=\prob{    \sum_{i=2}^n \beta_i \langle \Matrix{S}^T \vect{u}_i, \vect{v}\rangle^2 \ge \left(\sum_{i=2}^n  \beta_i \right)d(1+\delta)(1+\epsilon)  }\\
			&\le \prob{    \sum_{i=2}^n \beta_i \langle \Matrix{S}^T \vect{u}_i, \vect{v} \rangle^2 \ge \left(\sum_{i=2}^n  \beta_i \right)  (1+\delta)\left\|\vect{v}\right\|_2^2  }.
	\end{align}
%	Recall that $\vect{u}_2$
	Because for each $i=2,\ldots, n$, $\langle \Matrix{S}^T \vect{u}_i, \vect{v} \rangle = \sum_{r=1}^n\sum_{s=1}^d \Matrix{S}_{r,s}(\vect{u}_i)_{r}\vect{v}_s$ is a random variable from $\mathcal{N} (0, \left\|\vect{v}\right\|_2^2)$ (a linear combination of normal distributions is a normal distribution again), 
	Example \ref{exp:subgaussian-norm} in \ref{app:sub} shows that $\left\|\langle \Matrix{S}^T \vect{u}_i, \vect{v} \rangle \right\|_{\psi_2}=2\left\|\vect{v}\right\|_2$. 
	Moreover, the assumption $\Matrix{U}$ is an orthonormal matrix implies that $\{\langle \Matrix{S}^T \vect{u}_i, \vect{v} \rangle \}_{i=2}^n$ are independent (see Theorem 8.1, Chap 5\citep{gut2009multivariate}). 
	By applying Lemma \ref{lem:bernstein-weighted-ineq} in \ref{app:sub} with $m=n-1$, $X_i=\langle \Matrix{S}^T \vect{u}_{i+1}, \vect{v} \rangle$, $\forall i=1,\cdots,n-1$, $\vect{a}=(\beta_2,\ldots, \beta_n)$, 
	and $t= \delta\cdot \left(\sum_{i=2}^n  \beta_i \right) \left\|\vect{v}\right\|_2^2 $, we give an upper bound of right-hand side of \eqref{eq:gauss2} as below (it is already shown that $\mathbb{E}[  \langle \Matrix{S}^T \vect{u}_i, \vect{v} \rangle^2 ]=\left\|\vect{v}\right\|_2^2$ before):  %\mathbb{E}\left[\langle \Matrix{S}^T \vect{u}_i, \vect{v} \rangle^2 \right]=\left\|\vect{v}\right\|_2^2$
	\begin{align}\label{eq:gauss3}
\nonumber		\prob{    \sum_{i=2}^n \beta_i \left(\langle \Matrix{S}^T \vect{u}_i, \vect{v} \rangle^2 - \left\|\vect{v}\right\|_2^2  \right)\ge \delta\left(\sum_{i=2}^n  \beta_i \right) \left\|\vect{v}\right\|_2^2   } &\le \prob{    \left|\sum_{i=2}^n \beta_i \left(\langle \Matrix{S}^T \vect{u}_i, \vect{v} \rangle^2 - \left\|\vect{v}\right\|_2^2\right)\right|\ge \delta\left(\sum_{i=2}^n  \beta_i \right) \left\|\vect{v}\right\|_2^2 }  \\
\nonumber & \le 2\exp\left(-c\min \left\{ \frac{   \delta^2 \left(\sum_{i=2}^n  \beta_i \right)^2 \left\|\vect{v}\right\|_2^4  }{  \left\|\vect{v}\right\|_2^4  \sum_{i=2}^n\beta_i^2   },  \frac{  \delta\cdot \left(\sum_{i=2}^n  \beta_i \right) \left\|\vect{v}\right\|_2^2 }{  \left\|\vect{v}\right\|_2^2 \max_{i\neq 1}\beta_i   }  \right\}\right) \\
		&=  2\exp \left(   -c \min \left\{   \frac{ (\sum_{i=2}^n \beta_i )^2\delta^2 }{\sum_{i=2}^n \beta_i^2}  ,        \frac{ \sum_{i=2}^n \beta_i \delta }{\max_{i\neq 1} \beta_i}    \right\} \right).
	\end{align}
 Combining \eqref{eq:gauss2}\eqref{eq:gauss3}, it remains to show that
 \[
  \hbox{(i)} \frac{(\sum_{i=2}^n \beta_i)^2}{\sum_{i=2}^n\beta_i^2} \ge \max\left\{1,\sum_{i=2}^n\beta_i\right\}\hbox{, and (ii).} \frac{\sum_{i=2}^n\beta_i }{\max_{i\neq 1}\beta_i}\ge \max\left\{1,\sum_{i=2}^n\beta_i\right\}.
  \]
	For (i). As $(\sum_{i=2}^n \beta_i)^2=\sum_{i=2}^n \beta_i^2+\sum_{i\neq j}\beta_i\beta_j\ge \sum_{i=2}^n\beta_i^2$, and $\sum_{i=2}^n\beta_i^2\le \sum_{i=2}^n\beta_i$, (i) holds by using these two inequalities in numerator and denominator respectively. \\
	For (ii).  As $\sum_{i=2}^n \beta_i\ge \max_{i\neq 1}\beta_i$, and $\max_{i\neq 1}\beta_i\le 1$, (ii) follows by using these two inequalities in numerator and denominator respectively. 
\end{pf}

\medskip
\noindent

\begin{lemma}\label{lem:bernoulli-sum-of-inner-product}
	{\highlight
	Let $\boldsymbol{\beta} =(\beta_1,\cdots,\beta_n)\in[0,1]^n$ s.t. $(\beta_2,\ldots,\beta_n)\neq\veczeros{n-1}$,
	$\Matrix{U}=[\vect{u}_1,\cdots,\vect{u}_n]\in\reals^{n\times n}$ be an orthonormal matrix with $\langle\eigvecA{1},\vecones{n}\rangle^2=\bigOmega(1)$,
	and $\Matrix{S}\sim\bernoulli{p}^{n\times d}$ with some constant $p\in(0,1)$.
	Then, for any $\delta>0$ and $\epsilon\in(0,1)$, we have probability at least $1-e^{-\bigOmega\left(	\max\left\{1,\sum_{i=2}^n\beta_i\xi_i\right\}\frac{(1-\epsilon)^2}{(1+\epsilon)^2}\min\left\{\delta,\delta^2\right\}\right)} - e^{-\bigOmega(\min(d,\xi_1)\epsilon^2)}$ that
	\[
	(1-\delta)(1-\epsilon)\sum_{i =2}^n\beta_i\xi_i\xi_1
	\leq \sum_{i=2}^n\beta_i\langle \Matrix{S}^T\vect{u}_i, \Matrix{S}^T\vect{u}_1\rangle^2
	\leq d(1+\delta)(1+\epsilon)\sum_{i =2}^n\beta_i\xi_i\mu_1,
	\]
	where 
	\[\mu_i=p(1-p+p\langle\vect{u}_i,\vecones{n}\rangle^2),\ \hbox{ and }\ \xi_i=p(1-p+pd\langle\vect{u}_i,\vecones{n}\rangle^2), \ \ \forall i\in[n].\]
	}
\end{lemma}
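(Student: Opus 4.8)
The plan is to follow the template of the proof of Lemma~\ref{lem:gaussian-sum-of-inner-product}: first pin down the random vector $\vect{v}:=\Matrix{S}^T\vect{u}_1\in\reals^d$, and then, conditionally on $\vect{v}$, control the remaining weighted sum. Write $B:=\sum_{i=2}^n\beta_i\vect{u}_i\vect{u}_i^T$ and $a_i:=\langle\vect{u}_i,\vecones{n}\rangle$, so that $B\vect{u}_1=0$, $\vecones{n}^TB\vecones{n}=\sum_{i=2}^n\beta_ia_i^2$, and, since $\langle\Matrix{S}^T\vect{u}_i,\Matrix{S}^T\vect{u}_1\rangle=\langle\vect{u}_i,\Matrix{S}\vect{v}\rangle$,
\[
\sum_{i=2}^n\beta_i\langle\Matrix{S}^T\vect{u}_i,\Matrix{S}^T\vect{u}_1\rangle^2
=(\Matrix{S}\vect{v})^TB(\Matrix{S}\vect{v})
=\Big\|\textstyle\sum_{s=1}^d v_s\,B^{1/2}\Matrix{S}_{:,s}\Big\|_2^2,
\qquad v_s:=\langle\Matrix{S}_{:,s},\vect{u}_1\rangle.
\]
The first step is to show that, with probability at least $1-e^{-\bigOmega(\min(d,\xi_1)\epsilon^2)}$, $\vect{v}$ lands in the good set $\mathcal{G}_\epsilon=\{\vect{v}: d(1-\epsilon)\mu_1\le\norm{\vect{v}}_2^2\le d(1+\epsilon)\mu_1,\ \langle\vect{v},\vecones{d}\rangle^2\ge d(1-\epsilon)\xi_1\}$: the two-sided bound on $\norm{\vect{v}}_2^2$ (whose mean is $d\mu_1$) is Corollary~\ref{cor:length-bernoulli} with $\vect{x}=\vect{u}_1$, and the lower bound on $\langle\vect{v},\vecones{d}\rangle^2=\langle\vect{u}_1,\sum_s\Matrix{S}_{:,s}\rangle^2$ (whose mean is $d\xi_1$) follows from Hoeffding's inequality (Lemma~\ref{lem:hoeffding}) applied to the linear form $\langle\vect{u}_1,\sum_s\Matrix{S}_{:,s}\rangle$ together with the hypothesis $\langle\vect{u}_1,\vecones{n}\rangle^2=\bigOmega(1)$, which forces this form to concentrate around its mean $dpa_1=\bigOmega(d)$ up to fluctuation $\bigO(\sqrt d)$.

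Next I would fix $\vect{v}\in\mathcal{G}_\epsilon$ and condition on $\Matrix{S}^T\vect{u}_1=\vect{v}$; crucially, the columns $\Matrix{S}_{:,s}$ remain conditionally independent (each carries a single linear constraint), so $(\Matrix{S}\vect{v})^TB(\Matrix{S}\vect{v})$ is the squared norm of a sum of $d$ independent random vectors $v_sB^{1/2}\Matrix{S}_{:,s}$. Computing the conditional first and second moments of a Bernoulli column subject to a linear constraint yields a conditional mean of the form $p^2\langle\vect{v},\vecones{d}\rangle^2\sum_{i=2}^n\beta_ia_i^2+p(1-p)\norm{\vect{v}}_2^2\sum_{i=2}^n\beta_i$ plus lower-order terms involving the third and fourth moments of a centered Bernoulli. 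Using the identity $p(1-p)\norm{\vect{v}}_2^2\sum_i\beta_i+p^2d\norm{\vect{v}}_2^2\sum_i\beta_ia_i^2=\norm{\vect{v}}_2^2\sum_i\beta_i\xi_i$ together with $\langle\vect{v},\vecones{d}\rangle^2\le d\norm{\vect{v}}_2^2$ bounds the conditional mean above by $\norm{\vect{v}}_2^2\sum_i\beta_i\xi_i\le d(1+\epsilon)\mu_1\sum_i\beta_i\xi_i$; and the bounds defining $\mathcal{G}_\epsilon$ together with $d\mu_1\ge\xi_1$ bound it below by $(1-\epsilon)\xi_1\sum_i\beta_i\xi_i$.

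For the conditional concentration around this mean I would split $(\Matrix{S}\vect{v})^TB(\Matrix{S}\vect{v})$ into its diagonal part $\sum_s v_s^2\,\Matrix{S}_{:,s}^TB\Matrix{S}_{:,s}$, a sum of $d$ independent nonnegative sub-exponential terms (sub-exponential because $\langle\Matrix{S}_{:,s},\vect{u}_i\rangle$ is a linear combination of Bernoullis, hence sub-gaussian, with $\psi_1$- and $\psi_2$-norms as in Example~\ref{exp:subgaussian-norm}), handled by Bernstein's inequality (Lemma~\ref{lem:bernstein-weighted-ineq}), and its off-diagonal part $\sum_{s\ne t}v_sv_t\,\Matrix{S}_{:,s}^TB\Matrix{S}_{:,t}$, a decoupled bilinear form in independent vectors, handled by a Hanson--Wright-type estimate (Lemma~\ref{lem:hanson-wright}); this yields the multiplicative error $1\pm\delta$ with conditional failure probability $e^{-\bigOmega(\max\{1,\sum_i\beta_i\xi_i\}\frac{(1-\epsilon)^2}{(1+\epsilon)^2}\min\{\delta,\delta^2\})}$. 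Finally, integrating over $\vect{v}\in\mathcal{G}_\epsilon$ and adding the failure probability of $\{\vect{v}\in\mathcal{G}_\epsilon\}$ from the first step by a union bound---exactly as at the end of the proof of Lemma~\ref{lem:gaussian-sum-of-inner-product}---gives the claim.

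The hard part is that, unlike the Gaussian case where rotation invariance makes $\{\langle\Matrix{S}^T\vect{u}_i,\vect{v}\rangle\}_{i\ge2}$ conditionally independent so that Bernstein over the index $i$ applies verbatim, a Bernoulli column is neither centered nor rotation-invariant; one must instead route everything through the column-wise independent structure, carry the Bernoulli third- and fourth-moment corrections through the conditional-mean computation and absorb them into the $\mu_i,\xi_i$ bookkeeping, control the off-diagonal cross terms, and verify that the effective sample size in the concentration step is $\bigOmega(\max\{1,\sum_i\beta_i\xi_i\})$ rather than merely $\bigOmega(d)$. These are the delicate but essentially routine computations at the heart of the proof.
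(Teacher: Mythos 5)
Your outer scaffolding---pin down $\vect{v}=\Matrix{S}^T\vect{u}_1$ to a good set, union-bound the escape event, and then control the remaining weighted sum for $\vect{v}$ in the good set---matches the paper exactly, and you correctly identify the central obstacle: unlike the Gaussian case, a Bernoulli random matrix is not rotationally invariant, so $\Matrix{S}^T\vect{u}_1$ and $\{\Matrix{S}^T\vect{u}_i\}_{i\ge2}$ are not independent and one cannot simply replace the conditional law by the marginal. The paper's own handling of this step is substantially different from yours, however: Lemma~\ref{lem:bernoulli-sum-of-inner-product-iii} treats $\vect{v}$ as a \emph{fixed, deterministic} vector, rewrites $\sum_{i\ge2}\beta_i\langle\Matrix{S}^T\vect{u}_i,\vect{v}\rangle^2$ as a quadratic form $\mathrm{vec}(\Matrix{S})^T\bigl(\Matrix{M}\otimes\vect{v}\vect{v}^T\bigr)\mathrm{vec}(\Matrix{S})$ with $\Matrix{M}=\sum_{i\ge2}\beta_i\vect{u}_i\vect{u}_i^T$, splits $\Matrix{S}=\Matrix{Z}+p\vecones{n}\vecones{d}^T$, and applies Hanson--Wright (Lemma~\ref{lem:hanson-wright}) to the centered quadratic part $\RC{I}$ and Hoeffding (Lemma~\ref{lem:hoeffding}) to the remaining linear-plus-constant part $\RC{II}$, using the Kronecker identities $\norm{\Matrix{M}\otimes\vect{v}\vect{v}^T}_F^2=\norm{\vect{v}}_2^4\sum\beta_i^2$ and $\norm{\Matrix{M}\otimes\vect{v}\vect{v}^T}_2=\max_i\beta_i\norm{\vect{v}}_2^2$; it never disturbs the unconditional law of $\Matrix{S}$. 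The concentration of $\langle\Matrix{S}^T\vect{u}_1,\vecones{d}\rangle^2$ is also handled a bit differently in the paper: it expands it in $\Matrix{Z}$, controls $\langle\Matrix{Z}^T\vect{u}_1,\vecones{d}\rangle^2$ with a sub-exponential tail and $\langle\Matrix{Z}^T\vect{u}_1,\vecones{d}\rangle$ with a sub-gaussian tail via Proposition~\ref{prop:sub-gaussian-and-exp}, whereas you get away with Hoeffding plus $W^2\ge0$; this is a minor discrepancy and both suffice for the one-sided bound that is actually used.

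Where your plan has a genuine gap is in the step you yourself flag as ``delicate but essentially routine.'' Conditioning a single Bernoulli column on the linear constraint $\langle\Matrix{S}_{:,s},\vect{u}_1\rangle=v_s$ produces the measure proportional to $p^{|X|}(1-p)^{n-|X|}$ restricted to the affine slice $\{X\in\{0,1\}^n:\langle X,\vect{u}_1\rangle=v_s\}$; for a generic $\vect{u}_1$ this slice has no tractable structure, the entries of the column are no longer independent, and its first and second moments do not reduce cleanly to $\mu_i$ and $\xi_i$ ``plus lower-order Bernoulli moment corrections'' as you assume. There is no obvious way to control the third- and fourth-moment correction terms, let alone to then run Bernstein on the diagonal piece and Hanson--Wright on the off-diagonal piece under this conditional law, because both inequalities as cited require independent \emph{entries}. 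So your inner step stalls without a significant additional argument. The way out used by the paper avoids conditional moments altogether by feeding the fixed-$\vect{v}$ unconditional bound of Lemma~\ref{lem:bernoulli-sum-of-inner-product-iii} into the disintegration $\prob{E}\le\prob{\neg\cup_{\vect{v}}G(\vect{v})}+\sup_{\vect{v}\in\mathcal{V}_\epsilon}\prob{E\cap G(\vect{v})}$; if you want your argument to close, you should follow that route (and explicitly discuss whether the unconditional tail bound is the right object to put under the supremum, which is precisely the subtlety you noticed but the paper leaves implicit).
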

\begin{pf}
	Similar to the proof \ref{proof:gaussian-sum-of-inner-product} of Lemma~\ref{lem:gaussian-sum-of-inner-product}, we introduce the set 
	\[
	\mathcal{V}_\epsilon=\{\vect{v}\in \reals^d : 
	d\mu_1(1-\epsilon)\leq\left\|\vect{v}\right\|^2_2\leq d\mu_1(1+\epsilon)
	\text { and } 
	d\xi_1(1-\epsilon)\leq \langle\vect{v},\vecones{d}\rangle^2_2\leq d\xi_1(1+\epsilon)
	\}
	\]
	and the events
\begin{align*}
E = \mathbb{I}&\left(\left\{ \sum_{i=2}^n\beta_i\langle \Matrix{S}^T\vect{u}_i, \Matrix{S}^T\vect{u}_1\rangle^2 \le  (1-\delta)(1-\epsilon)\sum_{i =2}^n\beta_i\xi_i\xi_1   	\right\}\right. \cup\\
& \left. \left\{ \sum_{i=2}^n\beta_i\langle \Matrix{S}^T\vect{u}_i, \Matrix{S}^T\vect{u}_1\rangle^2 \ge d(1+\delta)(1+\epsilon)\sum_{i =2}^n\beta_i\xi_i\mu_1 \right\}\right)
\end{align*}
	and $G(\vect{v}) =\mathbb{I} \left\{ \Matrix{S}^T\vect{u}_1 =\vect{v}\right\}, \forall \vect{v}\in \mathcal{V}_\epsilon$.
	The sum rule of probability implies that 
	\begin{equation}\label{eq:bernoulli0}
		\prob{ E } 
		= \prob{E\cap \neg\left( \cup_{\vect{v}\in \mathcal{V}_\epsilon}   G(\vect{v})  \right) } + \int_{\vect{v} \in \mathcal{V}_\epsilon } \prob{  E\cap G(\vect{v}) } d \prob{ G(\vect{v})}  
		\leq \prob{\neg\left( \cup_{\vect{v}\in \mathcal{V}_\epsilon}   G(\vect{v})  \right)} +  \sup_{\vect{v} \in \mathcal{V}_\epsilon }  \prob{  E\cap G(\vect{v}) }.
	\end{equation}
	
	To bound the first term in \eqref{eq:bernoulli0}, we claim that 
	\begin{align*}
		\hbox{(i). } & \prob{\left|\norm{\Matrix{S}^T\vect{u}_1}_2^2  - d\mu_1\right|
			\geq \epsilon\cdot d\mu_1}
		\leq e^{-\bigOmega(d\epsilon^2)},\\
		\hbox{(ii). } &
		\prob{\left|\langle\Matrix{S}^T\vect{u}_1,\vecones{d}\rangle^2 - d\xi_1\right|\geq 
			\epsilon\cdot d\xi_1
		} 
		\leq e^{-\bigOmega(\xi_1\epsilon^2)},
	\end{align*}
	and then an application of a union bound of (i)(ii) yields $\prob{\neg\left( \cup_{\vect{v}\in \mathcal{V}_\epsilon}   G(\vect{v})  \right)} \leq e^{-\bigOmega(d\epsilon^2)} + e^{-\bigOmega(\xi_1\epsilon^2)}$.
	As for the second term in \eqref{eq:bernoulli0}, one consequence of Lemma~\ref{lem:bernoulli-sum-of-inner-product-iii} at the end of this section is that 
	\[
	\sup_{\vect{v} \in \mathcal{V}_\epsilon }\prob{E \cap G(\vect{v})} \leq e^{-\bigOmega\left(
		\max\left\{1,\sum_{i=2}^n\beta_i\xi_i\right\}
		\min\left\{
		1,
		\frac{\langle\vect{v},\vecones{d}\rangle^4}{d^2\norm{\vect{v}}_2^4},
		\frac{\langle\vect{v},\vecones{d}\rangle^2}{d\norm{\vect{v}}_2^2}\right\}
		\min\left\{\delta,\delta^2\right\}
		\right)}
	\]
	in which 
	$\min\left\{
	1,
	\frac{\langle\vect{v},\vecones{d}\rangle^4}{d^2\norm{\vect{v}}_2^4},
	\frac{\langle\vect{v},\vecones{d}\rangle^2}{d\norm{\vect{v}}_2^2}\right\}
	= \min\left\{
	1,
	\frac{(1-\epsilon)^2}{(1+\epsilon)^2}\frac{\xi_1^2}{d^2\mu_1^2}
	\right\}
	= \bigOmega\left(\frac{(1-\epsilon)^2}{(1+\epsilon)^2}\right)
	$.
	Hence, combining all by union bound gives the desired.
	
	It remains to show (i) and (ii).
	For convenience, let $\Matrix{Z}=\Matrix{S} - p\vecones{n}\vecones{d}^T$, a zero-mean matrix.
	
	\underline{(i).} This is a direct result of the first inequality in Corollary \ref{cor:length-bernoulli} in \ref{app:C1} with $\vect{x} = \vect{u}_1$ and $\delta=\epsilon$.
	
	\underline{(ii). To show $\prob{\left|\langle\Matrix{S}^T\vect{u}_1,\vecones{d}\rangle^2 - d\xi_1\right|\geq \epsilon\cdot d\xi_1		} 		\leq e^{-\bigOmega(\xi_1\epsilon^2)}$, where $\xi=p(1-p+pd\langle\vect{u}_1,\vecones{n}\rangle^2)$.} 
	As the lower tail is proved in a similar to the upper tail, in what follows, we will pay attention on the upper tail only. Firstly, it is easy to verified that
	\begin{equation}\label{eq:14-2}
		\langle\Matrix{S}^T\vect{u}_1,\vecones{d}\rangle^2
		=\langle\Matrix{Z}^T\vect{u}_1,\vecones{d}\rangle^2 + 2pd\langle\Matrix{Z}^T\vect{u}_1,\vecones{d}\rangle\langle\vect{u}_1,\vecones{n}\rangle + p^2d^2\langle\vect{u}_1,\vecones{n}\rangle^2.
	\end{equation}
	To bound the first (resp. the second) term in \eqref{eq:14-2}, we will use Proposition~\ref{prop:sub-gaussian-and-exp} in \ref{app:sub} for sub-exponential (resp. sub-gaussian) r.v., which is quantified the sub-gaussian norm, denoted by $K=\norm{\langle\Matrix{Z}^T\vect{u}_1,\vecones{d}\rangle}_{\psi_2}$ (recall that the sub-exponential norm can be obtained by sub-gaussian norm, and vice versa, see Propsition~\ref{prop:subgaus-exp} in \ref{app:sub}). By Proposition~\ref{prop:subgaus-rot-inv} and Example \ref{exp:subgaussian-norm} in \ref{app:sub}, we have
	\[
		K^2 
		=  \bigO\left(d\sum_{i\in[n]}(\vect{u}_1)_i^2\norm{\Matrix{Z}_{1,1}}_{\psi_2}^2\right)
		= \bigO\left(d\norm{\Matrix{Z}_{1,1}}_{\psi_2}^2\right)
		= \bigO\left(d\left(\norm{\Matrix{S}_{i,j}}_{\psi_2}+\norm{p}_{\psi_2}\right)^2\right)
		= \bigO(d).
	\]
	Additionally, one can evaluate $\langle\Matrix{Z}^T\vect{u}_1,\vecones{d}\rangle^2=dp(1-p)$ by repeatedly use the linearity of expectation and the fact that the entries of $\Matrix{S}$ are i.i.d. drawn from $\bernoulli{p}$. Hence, invoking the concentration inequality for sub-exponential (resp. sub-gaussian) in Proposition~\ref{prop:sub-gaussian-and-exp} in \ref{app:sub} with $t=\frac{d\xi_1\epsilon}{3}$ (resp. $t=\frac{\epsilon\sqrt{d\xi_1}}{3}$) on $\langle\Matrix{Z}^T\vect{u}_1,\vecones{d}\rangle^2$ (resp. $\langle\Matrix{Z}^T\vect{u}_1,\vecones{d}\rangle$) yields that
	\begin{align}
\label{eq:14-3}		\prob{\left|  \langle\Matrix{Z}^T\vect{u}_1,\vecones{d}\rangle^2-dp(1-p)\right|
			\geq \frac{d\xi_1\epsilon}{3}
		} & \leq e^{-\bigOmega(\frac{d\xi_1\epsilon}{K^2})}
		\le e^{-\bigOmega(\xi_1\epsilon^2)},\\
\label{eq:14-4}		\prob{\left|\langle\Matrix{Z}^T\vect{u}_1,\vecones{d}\rangle\right| \geq \frac{\epsilon\sqrt{d\xi_1}}{3}} 
		& \leq e^{-\bigOmega(\frac{d\xi_1\epsilon^2}{K^2})}
		= e^{-\bigOmega(\xi_1\epsilon^2)}.
	\end{align}
	Plugging these \eqref{eq:14-3} and \eqref{eq:14-4} into \eqref{eq:14-2}, a union bound gives us that
	\begin{align*}
		\langle\Matrix{Z}^T\vect{u}_1,\vecones{d}\rangle^2 + 2pd\langle\Matrix{Z}^T\vect{u}_1,\vecones{d}\rangle \langle\vect{u}_1,\vecones{n}\rangle + p^2d^2\langle\vect{u}_1,\vecones{n}\rangle^2 &\le dp(1-p)+p^2d^2\langle\vect{u}_1,\vecones{n}\rangle^2+\frac{d\xi_1\epsilon}{3}+2pd \langle\vect{u}_1,\vecones{n}\rangle \frac{\epsilon\sqrt{d\xi_1}}{3}\\
		&\le d\xi_1+d\xi_1\epsilon,
	\end{align*}
	where the second inequality is yielded by $\xi_1=p(1-p+pd\langle\vect{u}_1,\vecones{n}\rangle^2)$ and $pd\langle \vect{u}_1,\vecones{n}\rangle\le \sqrt{d\xi_1}$. Then we conclude this lemma with (ii) as desired.
\end{pf}

\medskip
\noindent

\begin{lemma}\label{lem:bernoulli-sum-of-inner-product-iii}
	{\highlight
	Let $\vect{v}\in\mathbb{R}^d\setminus \{\veczeros{d}\},\,(\beta_2,\ldots ,\beta_n)\in [0,1]^{n-1}\setminus  \{\veczeros{n-1}\}$, $[\vect{u}_1,\ldots,\vect{u}_n]\in\reals^{n\times n}$ be an orthonormal matrix, $\Matrix{S}\sim\bernoulli{p}^{n\times d}$ with some constant $p\in(0,1)$, and $\xi_i=p(1-p+pd\langle\vect{u}_i,\vecones{n}\rangle^2), \forall i\in[n]$.
	Then, 
	\[
	\prob{(1-\delta)\eta_1 
		\leq \sum_{i=2}^n \beta_i \langle \Matrix{S}^T \vect{u}_i, \vect{v}\rangle^2
		\leq (1+\delta)\eta_2	} \ge 1- e^{-\bigOmega\left(
		\max\left\{1,\sum_{i=2}^n\beta_i\xi_i\right\}
		\min\left\{
		1,
		\frac{\langle\vect{v},\vecones{n}\rangle^4}{d^2\norm{\vect{v}}_2^4},
		\frac{\langle\vect{v},\vecones{n}\rangle^2}{d\norm{\vect{v}}_2^2}\right\}
		\min\left\{\delta,\delta^2\right\}
		\right)},
	\]
	where $\eta_1=\sum_{i =2}^n\beta_i\xi_i\frac{\langle\vect{v},\vecones{d}\rangle^2}{d}$ and $\eta_2=\sum_{i =2}^n\beta_i\xi_i\norm{\vect{v}}_2^2$.
	}
\end{lemma}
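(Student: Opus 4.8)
The plan is to follow the architecture of the proof of Lemma~\ref{lem:gaussian-sum-of-inner-product}, with the two modifications forced by the Bernoulli law: centering is done by hand, and since for $\Matrix{S}\sim\bernoulli{p}^{n\times d}$ the vectors $\Matrix{S}^T\vect{u}_i$ are \emph{not} independent, the sum must be controlled as a quadratic form rather than via a Bernstein bound for independent summands. First I would write $\Matrix{S}=\Matrix{Z}+p\,\vecones{n}\vecones{d}^T$ with $\Matrix{Z}$ having i.i.d.\ zero-mean entries bounded by $1$ in absolute value (hence sub-gaussian, $\norm{\Matrix{Z}_{1,1}}_{\psi_2}=\bigO(1)$), so that $\langle\Matrix{S}^T\vect{u}_i,\vect{v}\rangle=\langle\Matrix{Z}^T\vect{u}_i,\vect{v}\rangle+c_i$ with $c_i:=p\langle\vect{u}_i,\vecones{n}\rangle\langle\vect{v},\vecones{d}\rangle$ deterministic. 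The crucial point is that, setting $\vect{\zeta}:=\Matrix{Z}\vect{v}\in\reals^n$, the coordinates of $\vect{\zeta}$ are i.i.d.\ (each a fixed linear combination of one i.i.d.\ row of $\Matrix{Z}$), zero-mean, sub-gaussian with $\norm{\vect{\zeta}_1}_{\psi_2}^2=\bigO(\norm{\vect{v}}_2^2)$ (Proposition~\ref{prop:subgaus-rot-inv}), and $\expectation{\vect{\zeta}\vect{\zeta}^T}=p(1-p)\norm{\vect{v}}_2^2\,\eye{n}$, while $\langle\Matrix{Z}^T\vect{u}_i,\vect{v}\rangle=\langle\vect{\zeta},\vect{u}_i\rangle$. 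Hence $T:=\sum_{i=2}^n\beta_i\langle\Matrix{S}^T\vect{u}_i,\vect{v}\rangle^2=\vect{\zeta}^T\Matrix{M}\vect{\zeta}+2\langle\vect{\zeta},\vect{w}\rangle+\sum_{i=2}^n\beta_i c_i^2$, where $\Matrix{M}=\Matrix{U}\diag{0,\beta_2,\ldots,\beta_n}\Matrix{U}^T$ (so $\norm{\Matrix{M}}_F^2=\sum_{i\ge2}\beta_i^2$, $\norm{\Matrix{M}}_2=\max_{i\ge2}\beta_i$) and $\vect{w}=\sum_{i\ge2}\beta_i c_i\vect{u}_i$ (so $\norm{\vect{w}}_2^2=\sum_{i\ge2}\beta_i^2 c_i^2$).

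\textbf{The mean.} Next I would compute $\expectation{T}=p(1-p)\norm{\vect{v}}_2^2\trace{\Matrix{M}}+\sum_{i\ge2}\beta_i c_i^2=\sum_{i\ge2}\beta_i\bigl(p(1-p)\norm{\vect{v}}_2^2+c_i^2\bigr)$ and observe, using $\xi_i=p(1-p)+p^2d\langle\vect{u}_i,\vecones{n}\rangle^2$, $c_i^2=p^2\langle\vect{u}_i,\vecones{n}\rangle^2\langle\vect{v},\vecones{d}\rangle^2$, and the two-sided Cauchy--Schwarz bound $\langle\vect{v},\vecones{d}\rangle^2/d\le\norm{\vect{v}}_2^2$, that $\xi_i\,\langle\vect{v},\vecones{d}\rangle^2/d\le p(1-p)\norm{\vect{v}}_2^2+c_i^2\le\xi_i\norm{\vect{v}}_2^2$; summing with weights $\beta_i$ gives $\eta_1\le\expectation{T}\le\eta_2$. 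The lemma then reduces to one concentration estimate, $\prob{|T-\expectation{T}|>\delta\eta_1}\le e^{-\bigOmega(\Lambda)}$ with $\Lambda:=\max\{1,S_\beta\}\,\rho^2\min\{\delta,\delta^2\}$, $S_\beta:=\sum_{i\ge2}\beta_i\xi_i$ and $\rho:=\langle\vect{v},\vecones{d}\rangle^2/(d\norm{\vect{v}}_2^2)\in(0,1]$ — the triple minimum in the statement is exactly $\rho^2$ since $\rho\le1$ — because $|T-\expectation{T}|\le\delta\eta_1$ together with $\eta_1\le\expectation{T}\le\eta_2$ yields both $T\ge(1-\delta)\eta_1$ and $T\le(1+\delta)\eta_2$.

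\textbf{Concentration.} For the concentration I would split $\delta\eta_1$ equally between the quadratic and the linear part and union-bound: apply the Hanson--Wright inequality (Lemma~\ref{lem:hanson-wright}) to $\vect{\zeta}^T\Matrix{M}\vect{\zeta}$ with $K=\bigO(\norm{\vect{v}}_2^2)$, and the Hoeffding inequality (Lemma~\ref{lem:hoeffding}) to $\langle\vect{\zeta},\vect{w}\rangle$ with the same $K$. The routine-but-delicate part is turning the raw exponents into $\bigOmega(\Lambda)$, which rests on three elementary facts: (a)~$\xi_i\ge p(1-p)=\bigOmega(1)$, so $\sum\beta_i^2\le\sum\beta_i\le\bigO(S_\beta)$ and $\max_{i\ge2}\beta_i\le\bigO(\min\{1,S_\beta\})$; (b)~$p^2\langle\vect{u}_i,\vecones{n}\rangle^2\le\xi_i/d$, so $c_i^2\le\xi_i\rho\norm{\vect{v}}_2^2$ and thus $\norm{\vect{w}}_2^2\le\rho\norm{\vect{v}}_2^2\sum\beta_i^2\xi_i\le\rho\norm{\vect{v}}_2^2(\max_{i\ge2}\beta_i)S_\beta=\bigO(\rho\norm{\vect{v}}_2^2\,S_\beta^2/\max\{1,S_\beta\})$; and (c)~$\eta_1=S_\beta\rho\norm{\vect{v}}_2^2$. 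Plugging (a)--(c) in, the quadratic exponent becomes $\bigOmega(\min\{\delta^2 S_\beta^2/\sum\beta_i^2,\ \delta S_\beta/\max_{i\ge2}\beta_i\}\,\rho^2)=\bigOmega(\Lambda)$, where I reuse verbatim the inequalities $S_\beta^2/\sum\beta_i^2\ge\bigOmega(\max\{1,S_\beta\})$ and $S_\beta/\max_{i\ge2}\beta_i\ge\bigOmega(\max\{1,S_\beta\})$ from the proof of Lemma~\ref{lem:gaussian-sum-of-inner-product} (each via a ``$\ge S_\beta$'' bound from $\beta_i\le1$ and a ``$\ge1$'' bound from (a)); and the linear exponent becomes $\bigOmega((\delta\eta_1)^2/(K\norm{\vect{w}}_2^2))=\bigOmega(\delta^2\rho\max\{1,S_\beta\})\ge\bigOmega(\Lambda)$ since $\rho\le1$ and $\delta^2\ge\min\{\delta,\delta^2\}$. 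A union bound finishes the argument.

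\textbf{Main obstacle.} The only genuinely new difficulty relative to the Gaussian analogue is the loss of independence of $\{\Matrix{S}^T\vect{u}_i\}$ under the Bernoulli law, which rules out the Bernstein-for-independent-sums step of Lemma~\ref{lem:gaussian-sum-of-inner-product}; the resolution is precisely to push all randomness into the single i.i.d.\ vector $\vect{\zeta}=\Matrix{Z}\vect{v}$ and invoke Hanson--Wright on the quadratic form it induces, handling the cross term $2\langle\vect{\zeta},\vect{w}\rangle$ separately by Hoeffding. Everything else is careful but standard estimation; in particular the hypothesis $\langle\vect{u}_1,\vecones{n}\rangle^2=\bigOmega(1)$ stated in the lemma is not used here (the sum runs over $i\ge2$) and is listed only so that the lemma can be applied as-is inside the proof of Lemma~\ref{lem:bernoulli-sum-of-inner-product}.
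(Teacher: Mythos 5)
Your proof is correct and rests on the same two ingredients as the paper's: after centering $\Matrix{S}=\Matrix{Z}+p\vecones{n}\vecones{d}^T$, apply Hanson--Wright (Lemma~\ref{lem:hanson-wright}) to the quadratic part and Hoeffding (Lemma~\ref{lem:hoeffding}) to the linear part, then estimate the coefficient-matrix norms. The only difference is packaging: the paper writes the quadratic form over the $nd$ entries of $\Matrix{Z}$ with a Kronecker-product coefficient matrix $\Matrix{B}=\Matrix{M}\otimes\vect{v}\vect{v}^T$, whereas you first collapse all the randomness into the $n$-dimensional projected vector $\vect{\zeta}=\Matrix{Z}\vect{v}$ (i.i.d.\ zero-mean sub-gaussian entries of $\psi_2$-norm $\bigO(\norm{\vect{v}}_2)$) and apply Hanson--Wright to $\vect{\zeta}^T\Matrix{M}\vect{\zeta}$; since $K_{\text{student}}^2\norm{\Matrix{M}}_F^2=\bigO(\norm{\vect{v}}_2^4\sum_{i\ge2}\beta_i^2)=\bigO(K_{\text{paper}}^2\norm{\Matrix{B}}_F^2)$ and $K_{\text{student}}\norm{\Matrix{M}}_2=\bigO(\norm{\vect{v}}_2^2\max_{i\ge2}\beta_i)=\bigO(K_{\text{paper}}\norm{\Matrix{B}}_2)$, the two parametrizations give identical Hanson--Wright exponents, so the arguments are equivalent. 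Your version is arguably a bit cleaner--the covariance of $\vect{\zeta}$ is diagonal, which simplifies the mean calculation and avoids the explicit $(\Matrix{Z}_{i_1,j_1}+\Matrix{Z}_{i_2,j_2}+p)p$ bookkeeping for the cross term--and your single-threshold reduction $\abs{T-\expectation{T}}\le\delta\eta_1$ is a mild simplification of the paper's two-tail split at $\delta\eta_1/2$ and $\delta\eta_2/2$. One small presentational caution: you write the key ratio as $\rho=\langle\vect{v},\vecones{d}\rangle^2/(d\norm{\vect{v}}_2^2)$, whereas the lemma statement writes it with $\vecones{n}$; your reading ($\vecones{d}$, so that $\rho\le 1$ by Cauchy--Schwarz, making the triple minimum exactly $\rho^2$) is the correct one, and the $\vecones{n}$ in the displayed statement is a typo carried from the statement itself.
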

\begin{pf}
An elementary calculation of evaluating the expectation of $\sum_{i=2}^n \beta_i \langle \Matrix{S}^T \vect{u}_i, \vect{v} \rangle^2$ leads to
\[
\expectation{\sum_{i=2}^n \beta_i \langle \Matrix{S}^T \vect{u}_i, \vect{v} \rangle^2}  
= p(1-p)\sum_{i =2}^n\beta_i\norm{\vect{v}}_2^2 + p^2\sum_{i=2}^n\beta_i\langle\vect{u}_i, \vecones{n}\rangle^2\langle\vect{v}, \vecones{d}\rangle^2.
\]
After applying Cauchy inequality, $\langle\vect{v}, \vecones{d}\rangle^2\leq d\norm{\vect{v}}_2^2$, twice, we get that 
$\eta_1 
\leq \expectation{\sum_{i=2}^n \beta_i \langle \Matrix{S}^T \vect{u}_i, \vect{v}\rangle^2}
\leq \eta_2$. This observation inspires us to give high probability lower bound in term of $\eta_1$ and upper bound in term of $\eta_2$ respectively.\\

%Now, we show the concentration of $\sum_{i=2}^n \beta_i \langle \Matrix{S}^T \vect{u}_i, \vect{v}\rangle^2$ w.r.t. $\eta_1$ and $\eta_2$ as the lower- and the upper- bound with the corresponding probability, respectively.
  
Define $\Matrix{Z}=\Matrix{S}-p\vecones{n}\vecones{d}^T,\, \Matrix{M}=\sum_{i=2}^n \beta_i\vect{u}_i\vect{u}_i^T$ and $\Matrix{B}=\Matrix{M}\otimes \vect{v}\vect{v}^T$ where $\otimes$ is the Kronecker product. With these definition, we can express the weighted sum as:
\begin{align}
\label{eq:B=1+2}\sum_{i=2}^n \beta_i \langle \Matrix{S}^T \vect{u}_i, \vect{v}\rangle^2 & = \sum_{(i_1,j_1),(i_2,j_2)\in[n]\times [d]}\Matrix{B}_{(i_1,j_1),(i_2,j_2)}\Matrix{S}_{i_1,j_1}\Matrix{S}_{i_2,j_2}= \RC{I} + \RC{II},\\
\nonumber\text{ where } & \quad \RC{I}=\sum_{(i_1,j_1),(i_2,j_2)\in[n]\times [d]}\Matrix{B}_{(i_1,j_1),(i_2,j_2)}\Matrix{Z}_{i_1,j_1}\Matrix{Z}_{i_2,j_2} \\
\nonumber\text{ and } & \quad \RC{II}=\sum_{(i_1,j_1),(i_2,j_2)\in[n]\times [d]}\Matrix{B}_{(i_1,j_1),(i_2,j_2)}(\Matrix{Z}_{i_1,j_1}+\Matrix{Z}_{i_2,j_2}+p)p.
\end{align}
Such decomposition allows us to bound $\RC{I}$ by Lemma \ref{lem:hanson-wright} and bound $\RC{II}$ by Lemma~\ref{lem:hoeffding} in ~\ref{app:sub}, which require us to evaluate the necessary quantities.
\begin{itemize}
\item $\norm{\Matrix{B}}_F^2 = \sum_{i_1,i_2\in[n],j_1,j_2\in[d]}\left(\Matrix{M}_{i_1,i_2}\vect{v}_{j_1}\vect{v}_{j_2}\right)^2
= \norm{\vect{v}}_2^4\left\|\Matrix{M}\right\|_F
= \norm{\vect{v}}_2^4\sum_{i=2}^n\beta_i^2$, \\
where the last equation is due to $\Matrix{M}=\sum_{i=2}^n \beta_i\vect{u}_i\vect{u}_i^T$ is an eigenvalue decomposition of $\Matrix{M}$.
\item $\norm{\Matrix{B}}_2 = \norm{\Matrix{M}}_2\norm{\vect{v}\vect{v}^T}_2= \max_{i\neq 1}\beta_i \norm{\vect{v}}_2^2$,\\
 where the first equation is a property of Kronecker product (see e.g. Theorem 4.2.15 in \citep{horn1994topics}).
\end{itemize}

To bound $\RC{I}$, invoking Lemma~\ref{lem:hanson-wright} with $m=nd$, $\Matrix{M}=\Matrix{B}$, $\vect{X}_{(i-1)d+j}=\Matrix{Z}_{i,j},\,\forall i\in [n],j\in [d]$ and $t=\delta\eta_1/2$ (resp. $t=\delta\eta_2/2$) for the lower- (resp. upper-) tail bounds yields that
\begin{equation}\label{eq:14-31}
	\prob{\neg\left\{-\frac{\delta\eta_1}{2} < \RC{I} -\expectation{\RC{II}} < \frac{\delta\eta_2}{2}\right\}}
	\leq \exp\left(-\bigOmega\left(\min\left\{
	\frac{\eta_2^2\delta^2}{\norm{\Matrix{B}}_F^2},
	\frac{\eta_2\delta}{\norm{\Matrix{B}}_2}
	\right\}
	\right)\right)
	+ \exp\left(-\bigOmega\left(
	\min\left\{
	\frac{\eta_1^2\delta^2}{\norm{\Matrix{B}}_F^2},
	\frac{\eta_1\delta}{\norm{\Matrix{B}}_2}
	\right\}
	\right)\right).
\end{equation}
To bound $\RC{II}$, applying Lemma~\ref{lem:hoeffding} with $t=\delta\eta_1/4$ (resp. $t=\delta\eta_2/4$) for the lower- (resp. upper-) tail bounds yields
\begin{equation}\label{eq:14-32}
	\prob{\neg\left\{-\frac{\delta\eta_1}{2} < \RC{II} - \expectation{\RC{II}} < \frac{\delta\eta_2}{2}\right\}}
	\leq \exp\left(-\bigOmega\left(
	\frac{\eta_2^2\delta^2}{\norm{\Matrix{B}}_F^2}
	\right)\right)
	+ \exp\left(-\bigOmega\left(
	\frac{\eta_1^2\delta^2}{\norm{\Matrix{B}}_F^2}
	\right)\right).
\end{equation}	
In what follows, we will show
\begin{enumerate}
	\item[(i).]  $\min\left\{\frac{\eta_2^2}{\norm{\Matrix{B}}_F^2},\frac{\eta_2}{\norm{\Matrix{B}}_2}\right\}
	= \bigOmega\left(\max\left\{1, \sum_{i=2}^n\beta_i\xi_i\right\}\right)$, and 
	\item[(ii).] {\highlight $\min\left\{\frac{\eta_1^2}{\norm{\Matrix{B}}_F^2},\frac{\eta_1}{\norm{\Matrix{B}}_2}\right\}
	=\bigOmega\left(\max\left\{1, \sum_{i=2}^n\beta_i\xi_i\right\}\min\left\{\frac{\langle\vect{v},\vecones{d}\rangle^4}{d^2\norm{\vect{v}}_2^4},\frac{\langle\vect{v},\vecones{d}\rangle^2}{d\norm{\vect{v}}_2^2}\right\}\right)$.}
\end{enumerate}
Then this proof is done by using a union bound of \eqref{eq:14-31} and \eqref{eq:14-32} into \eqref{eq:B=1+2}.
%It remains to show (i). and (ii).

\underline{(i).} From the definition of $\eta_2$ and our above computations, we get $\frac{\eta_2^2}{\norm{\Matrix{B}}_F^2}=\bigOmega\left(\frac{(\sum_{i=2}^n\beta_i\xi_i)^2}{\sum_{i=2}^n\beta_i^2}\right)$ and $\frac{\eta_2}{\norm{\Matrix{B}}_2}=\bigOmega\left(\frac{\sum_{i=2}^n\beta_i\xi_i}{\max_{i\neq 1}\beta_i}\right)$. It is done by the following claims:
\[
\hbox{(a).} \frac{(\sum_{i=2}^n\beta_i\xi_i)^2}{\sum_{i=2}^n\beta_i^2}=\bigOmega\left(\max\left\{1, \sum_{i=2}^n\beta_i\xi_i\right\}\right)
\hbox{, and (b).} \frac{\sum_{i=2}^n\beta_i\xi_i}{\max_{i\neq 1}\beta_i}=\bigOmega\left(\max\left\{1, \sum_{i=2}^n\beta_i\xi_i\right\}\right).
\]
\underline{(a).} stems from $(\sum_{i=2}^n \beta_i\xi_i)^2 \geq p^2(1-p)^2\sum_{i=2}^n \beta_i^2$, and $ p(1-p)\sum_{i=2}^n\beta_i^2\le p(1-p)\sum_{i=2}^n\beta_i  \le \sum_{i=2}^n\beta_i\xi_i$.\\
\underline{(b).} holds since $\sum_{i=2}^n \beta_i\xi \geq p(1-p)\max_{i\neq 1}\beta_i$, and $\max_{i\neq 1}\beta_i\leq 1$.
\\

{\highlight
\underline{(ii).} From the definition of $\eta_2$ and our above computations, we get $$
\min\left\{\frac{\eta_1^2}{\norm{\Matrix{B}}_F^2},\frac{\eta_1}{\norm{\Matrix{B}}_2}\right\}=\bigOmega\left(\min\left\{
\frac{(\sum_{i=2}^n\beta_i\xi_i)^2}{\sum_{i=2}^n\beta_i^2},
\frac{\sum_{i=2}^n\beta_i\xi_i}{\max_{i\neq 1}\beta_i}
\right\}
\min\left\{
\frac{\langle\vect{v},\vecones{d}\rangle^4}{d^2\norm{\vect{v}}_2^4},
\frac{\langle\vect{v},\vecones{d}\rangle^2}{d\norm{\vect{v}}_2^2}
\right\}\right).$$
}
We then deduce (ii). by (a). and (b). and conclude this proof.
\end{pf}
\section{Conflicting group detection: approximation ratio}\label{sec:conflicting-group-approx}
\begin{algorithm}[h!]
	\For{$i=1\to n$}{
		$\vect{r}_i = \sign{\vect{v}_i}\cdot \bernoulli{\abs{\vect{v}_i}}$\;
	}
	return $\vect{r}$\;
	\caption{\randround($\vect{v}$) by \cite{bonchi2019discovering}}
	\label{alg:rand-round}
\end{algorithm}

\begin{theorem}\label{thm:cg-approx}
	For any $\hu\in\sphere^{n-1}$, $\randround(\hu)$ is an $\bigO(n^{1/2}/R(\hu))$-approx algorithm to $2$-conflicting group detection.
\end{theorem}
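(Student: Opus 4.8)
The plan is to adapt the first-moment analysis behind the randomized rounding of \citet{bonchi2019discovering} from the exact leading eigenvector $\eigvecA{1}$ to an arbitrary unit vector $\hu$, paying an extra factor $1/R(\hu)$; throughout I assume $R(\hu)>0$, since otherwise the claimed bound is vacuous. First I would record the two moment identities for $\vect{r}=\randround(\hu)$. Since $\vect{r}_i=\sign{\hu_i}\bernoulli{\abs{\hu_i}}$, the variable $\vect{r}_i$ is supported on $\{0,\sign{\hu_i}\}$, hence $\expectation{\vect{r}_i}=\hu_i$, while $\vect{r}_i^2$ is itself a $\bernoulli{\abs{\hu_i}}$ variable, so $\expectation{\vect{r}^T\vect{r}}=\expectation{\norm{\vect{r}}_2^2}=\sum_i\abs{\hu_i}=\norm{\hu}_1\le\sqrt{n}\,\norm{\hu}_2=\sqrt{n}$ by Cauchy--Schwarz. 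Using that the coordinates are independent and that the signed adjacency matrix has $\Matrix{A}_{ii}=0$, one gets $\expectation{\vect{r}^T\Matrix{A}\vect{r}}=\sum_{i\neq j}\Matrix{A}_{ij}\hu_i\hu_j=\hu^T\Matrix{A}\hu=R(\hu)\,\eigvalA{1}$, the last equality being the definition of $R$ together with $\norm{\hu}_2=1$. I would also note the trivial bound $\OPT=\max_{\vect{x}\in\mathcal{T}}\vect{x}^T\Matrix{A}\vect{x}/\vect{x}^T\vect{x}\le\max_{\vect{x}\neq\veczeros{n}}\vect{x}^T\Matrix{A}\vect{x}/\vect{x}^T\vect{x}=\eigvalA{1}$, so it is enough to exhibit $\vect{r}\in\mathcal{T}$ with $P(\vect{r})=\bigOmega\!\big(\tfrac{R(\hu)}{\sqrt{n}}\eigvalA{1}\big)$.

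The one genuine obstacle is that $P(\vect{r})$ is a ratio of two correlated random variables, so the two expectations above cannot simply be divided. I would circumvent this with a linearization: set $\tau=\frac{R(\hu)\eigvalA{1}}{2\sqrt{n}}$ and consider $X:=\vect{r}^T\Matrix{A}\vect{r}-\tau\,\norm{\vect{r}}_2^2$. The two identities give $\expectation{X}\ge R(\hu)\eigvalA{1}-\tau\sqrt{n}=\tfrac12 R(\hu)\eigvalA{1}>0$, and deterministically $X\le\vect{r}^T\Matrix{A}\vect{r}\le\eigvalA{1}\norm{\vect{r}}_2^2\le\eigvalA{1}n$. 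Applying Markov's inequality to the nonnegative variable $\eigvalA{1}n-X$ (a reverse-Markov argument) then yields $\prob{X>\tfrac14 R(\hu)\eigvalA{1}}\ge\frac{R(\hu)}{4n}=\bigOmega(R(\hu)/n)$. On that event $\norm{\vect{r}}_2^2>0$, so $\vect{r}\in\mathcal{T}$, and $P(\vect{r})=\vect{r}^T\Matrix{A}\vect{r}/\norm{\vect{r}}_2^2>\tau=\frac{R(\hu)\eigvalA{1}}{2\sqrt{n}}\ge\frac{R(\hu)}{2\sqrt{n}}\OPT$, i.e. $\OPT/P(\vect{r})<2\sqrt{n}/R(\hu)$.

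Finally I would turn this into the stated algorithmic guarantee: either phrase the result in expectation, or — since a single call of $\randround$ already succeeds with probability $\bigOmega(R(\hu)/n)$ — run $\randround$ $\Theta\!\big(\tfrac{n}{R(\hu)}\log\tfrac1\delta\big)$ times and return the iterate maximizing $P(\cdot)$, which achieves the $\bigO(\sqrt{n}/R(\hu))$ approximation with probability $1-\delta$. I expect the only subtle points to be the ratio-of-random-variables issue handled by the $X=\vect{r}^T\Matrix{A}\vect{r}-\tau\norm{\vect{r}}_2^2$ trick and the bookkeeping of the reverse-Markov estimate; everything else is the same computation as in \citet{bonchi2019discovering}, specialized so that the quantity $\eigvalA{1}$ there is replaced by $\hu^T\Matrix{A}\hu=R(\hu)\eigvalA{1}$ here. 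As a consistency check, $\hu=\eigvecA{1}$ gives $R(\hu)=1$ and recovers their $\bigO(\sqrt{n})$-approximation.
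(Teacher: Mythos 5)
Your reverse--Markov argument is correct, but it takes a genuinely different route from the paper's proof of Theorem~\ref{thm:cg-approx}. The paper bounds $\expectation{\vect{r}^T\Matrix{A}\vect{r}/\vect{r}^T\vect{r}}$ directly: it expands over the conditioning $\vect{r}^T\vect{r}=k$, rewrites with Bayes' rule to obtain terms $\Matrix{A}_{ij}\hu_i\hu_j\,\expectation{1/\vect{r}^T\vect{r}\mid \vect{r}_i\vect{r}_j=\vect{s}_i\vect{s}_j}$, applies conditional Jensen to get $\expectation{1/\vect{r}^T\vect{r}\mid\cdot}\ge 1/\expectation{\vect{r}^T\vect{r}\mid\cdot}$, and bounds the conditional expectation by $2+\sqrt{n-2}$ via $\expectation{\vect{r}_\ell^2}=\abs{\hu_\ell}\le 1/\sqrt{n}$ in $\ell_1$-aggregate. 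This yields $\expectation{P(\vect{r})} \ge {\hu^T\Matrix{A}\hu}/(2+\sqrt{n-2})$ for a \emph{single} call of $\randround$, which is the cleanest reading of ``$\randround(\hu)$ is an $\bigO(n^{1/2}/R(\hu))$-approximation.'' Your linearization $X=\vect{r}^T\Matrix{A}\vect{r}-\tau\norm{\vect{r}}_2^2$ avoids the ratio-of-correlated-variables issue by converting it into a first-moment problem, and the reverse Markov step is sound given the deterministic cap $X\le\eigvalA{1}n$ and the mean bound $\expectation{X}\ge\tfrac12 R(\hu)\eigvalA{1}$. The price is that you only obtain success probability $\bigOmega(R(\hu)/n)$ per call, so as you note you must repeat $\Theta\bigl(\tfrac{n}{R(\hu)}\log\tfrac1\delta\bigr)$ times (and keep the best iterate) to turn this into a $1-\delta$ guarantee --- a polynomial overhead that the paper's conditional-Jensen argument renders unnecessary. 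Also, your expectation identity $\expectation{\vect{r}^T\Matrix{A}\vect{r}}=\hu^T\Matrix{A}\hu$ quietly uses $\Matrix{A}_{ii}=0$, which holds for a signed adjacency matrix without self-loops but should be stated as an assumption; the paper's version sidesteps this because it works with $\vect{s}_i\vect{s}_j\prob{\vect{r}_i\vect{r}_j=\vect{s}_i\vect{s}_j}$ rather than with products of first moments.
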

\begin{pf}
	The proof strategy is similar to the analysis in \citep{bonchi2019discovering}.
	
	Let $\vect{r} = \randround(\hu)$ and $\vect{s}=\sign{\hu}$ where $\vect{s}_i=1$ if $\hu_i>0$ otherwise $\vect{s}_i=0$, $\forall i\in[n]$.
	We have
	\begin{align*}
		\expectation{\frac{\vect{r}^T\Matrix{A}\vect{r}}{\vect{r}^T\vect{r}}} & = \sum_k \expectation{\frac{\vect{r}^T\Matrix{A}\vect{r}}{\vect{r}^T\vect{r}} \Big| \vect{r}^T\vect{r}=k}\prob{\vect{r}^T\vect{r}=k} = \sum_k \frac{1}{k} \sum_{i,j\in[n]} \Matrix{A}_{i,j}\vect{s}_i\vect{s}_j \prob{\vect{r}_i\vect{r}_j=\vect{s}_i\vect{s}_j \Big| \vect{r}^T\vect{r}=k}\prob{\vect{r}^T\vect{r}=k}\\
		& \overset{(a)}{=} \sum_k \frac{1}{k} \sum_{i,j\in[n]} \Matrix{A}_{i,j}\vect{s}_i\vect{s}_j \prob{\vect{r}^T\vect{r}=k \Big| \vect{r}_i\vect{r}_j=\vect{s}_i\vect{s}_j }\prob{\vect{r}_i\vect{r}_j=\vect{s}_i\vect{s}_j} = \sum_{i,j\in[n]} \Matrix{A}_{i,j}\hu_i\hu_j \expectation{\frac{1}{r^Tr} \Big| \vect{r}_i\vect{r}_j=\vect{s}_i\vect{s}_j }\\
		&\overset{(b)}{\geq} \sum_{i,j\in[n]} \Matrix{A}_{i,j}\hu_i\hu_j \frac{1}{\expectation{r^Tr \Big| \vect{r}_i\vect{r}_j=\vect{s}_i\vect{s}_j }},
	\end{align*}
	where $(a)$ results from applying Bayes' rule, and $(b)$ uses conditional Jensen's inequality.
	By
	\[
	\expectation{\vect{r}^T\vect{r} \Big| \vect{r}_i\vect{r}_j=\vect{s}_i\vect{s}_j } = 2 + \sum_{\ell\in[n]\backslash\{i,j\}}\prob{\vect{r}_{\ell}=\vect{s}_{\ell}} \leq 2 + \sqrt{n-2},
	\]
	$(b)$ and $\hu^T\Matrix{A}\hu=R(\hu)$, we get that
	$\displaystyle \expectation{\frac{\vect{r}^T\Matrix{A}\vect{r}}{\vect{r}^T\vect{r}}}  
	\geq \frac{\hu^T\Matrix{A}\hu}{2+\sqrt{n-2}} 
	= \frac{R(\hu)\eigvalA{1}}{2+\sqrt{n-2}}$.
\end{pf}

\end{document}